\title{Extending Consequence-Based Reasoning to $\boldsymbol{\SRIQ}$}
\author{Andrew Bate, Boris Motik, Bernardo Cuenca Grau, Franti{\v{s}}ek Siman{\v{c}\'{i}}k, Ian Horrocks\\
    Department of Computer Science, University of Oxford,\\
    Oxford, United Kingdom\\
    firstname.lastname@cs.ox.ac.uk
}
\begin{document}

\maketitle

\begin{abstract}
Consequence-based calculi are a family of reasoning algorithms for
\emph{description logics} (DLs), and they combine hypertableau and resolution
in a way that often achieves excellent performance in practice. Up to now,
however, they were proposed for either Horn DLs (which do not support
disjunction), or for DLs without counting quantifiers. In this paper we present
a novel consequence-based calculus for \SRIQ---a rich DL that supports both
features. This extension is non-trivial since the intermediate consequences
that need to be derived during reasoning cannot be captured using DLs
themselves. The results of our preliminary performance evaluation suggest the
feasibility of our approach in practice.

\end{abstract}

\section{Introduction}\label{sec:introduction}

\emph{Description logics} (DLs) \cite{dl-handbook} are a family of knowledge
representation formalisms with numerous applications in practice. DL-based
applications model a domain of interest by means of an \emph{ontology}, in
which key notions in the domain are described using \emph{concepts} (i.e.,
unary predicates), and the relationships between concepts are described using
\emph{roles} (i.e., binary predicates). \emph{Subsumption} is the problem of
determining whether each instance of a concept $C$ is also an instance of a
concept $D$ in all models of an ontology, and it is a fundamental reasoning
problem in applications of DLs. For expressive DLs, this problem is of high
worst-case complexity, ranging from \ExpTime up to \NtwoExpTime.

Despite these discouraging complexity bounds, highly optimised reasoners such
as FaCT++~\cite{fact++}, Pellet~\cite{DBLP:journals/ws/SirinPGKK07},
HermiT~\cite{ghmsw14HermiT}, and Konclude
\cite{DBLP:journals/ws/SteigmillerLG14} have proved successful in practice.
These systems are typically based on (hyper)tableau calculi, which construct a
finite representation of a canonical model of the ontology disproving a
postulated subsumption. While such calculi can handle many ontologies, in some
cases they construct very large model representations, which is a source of
performance problems; this is further exacerbated by the large number of
subsumption tests often required to classify an ontology.

A recent breakthrough in DL reasoning came in the form of
\emph{consequence-based calculi}. The reasoning algorithm by \citeA{babl05} for
the lightweight logic \EL can be seen as the first such calculus. It was later
extended to the more expressive DLs Horn-\SHIQ~\cite{DBLP:conf/ijcai/Kazakov09}
and Horn-\SROIQ~\cite{DBLP:conf/kr/OrtizRS10}---DLs that support counting
quantifiers, but not disjunctions between concepts. Consequence-based calculi
were also developed for \ALCH~\cite{DBLP:conf/ijcai/SimancikKH11} and
\ALCI~\cite{smh14consequence-FPT}, which support concept disjunction, but not
counting quantifiers. Such calculi can be seen as combining resolution and
hypertableau (see \cref{sec:motivation} for details): as in resolution, they
describe ontology models by systematically deriving relevant consequences; and
as in (hyper)tableau, they are goal-directed and avoid drawing unnecessary
consequences. Additionally, they are not only refutationally complete, but can
also (dis)prove all relevant subsumptions in a single run, which can greatly
reduce the overall computational work. Finally, unlike implemented
(hyper)tableau reasoners, they are worst-case optimal for the logic they
support. \citeA{DBLP:conf/cade/SteigmillerGL14} presented a way of combining a
consequence-based calculus with a traditional tableau-based prover; while such
a combination seems to perform well in practice, the saturation rules are only
known to be complete for \EL ontologies, and the overall approach is not
worst-case optimal for \SRIQ.

Existing consequence-based algorithms cannot handle DLs such as \ALCHIQ that
provide both disjunctions and counting quantifiers. As we argue in
\cref{sec:motivation}, extending these algorithms to handle such DLs is
challenging: counting quantifiers require equality reasoning which, together
with disjunctions, can impose complex constraints on ontology models; and,
unlike existing consequence-based calculi, such constraints cannot be captured
using DLs themselves, which makes the reasoning process much more involved.
   
In \cref{sec:algorithm} we present a consequence-based calculus for \ALCHIQ; by
using the encoding of role chains by \citeA{Kazakov:08:SROIQ:KR}, our calculus
can also handle \SRIQ, which covers all of OWL 2 DL except for nominals,
reflexive roles, and datatypes. Borrowing ideas from resolution theorem
proving, we encode the calculus' consequences as first-order clauses of a
specific form, and we handle equality using a variant of \emph{ordered
paramodulation} \cite{nieuwenhuis95theorem}---a state of the art calculus for
equational theorem proving used in modern theorem provers such as
E~\cite{Schulz:AICOM-2002} and Vampire~\cite{DBLP:journals/aicom/RiazanovV02}.
Furthermore, we have carefully constrained the inference rules so that our
calculus mimics existing calculi on \ELH ontologies, which ensures robust
performance of our calculus on `mostly-\ELH' ontologies.
  
We have implemented a prototype system and compared its performance with that
of well-established reasoners. Our results in Section~\ref{sec:evaluation}
suggest that our system can significantly outperform FaCT++, Pellet, or HermiT,
and often exhibits comparable performance to that of Konclude.

\begin{table*}[tb]
\begin{center}
\caption{Translating Normalised \ALCHIQ Ontologies into DL-Clauses}\label{tab:alchiq-transformation}
\newcommand{\goesto}[1]{\multirow{#1}{*}{$\rightsquigarrow$}}
\begin{tabular}{c@{\qquad}r@{\;}l@{\qquad}c@{\qquad}r@{\;}ll}
    \hline
    \\[-1.5ex]
    DL1                     & $\bigsqcap\limits_{1 \leq i \leq n} B_i \sqsubseteq$  & $\bigsqcup\limits_{n+1 \leq i \leq m} B_i$    & \goesto{1}    & $\bigwedge\limits_{1 \leq i \leq n} B_i(x) \rightarrow$                           & $\bigvee\limits_{n+1 \leq i \leq m} B_i(x)$ \\[3ex]
    \multirow{3}{*}{DL2}    & \multirow{3}{*}{$B_1 \sqsubseteq$}                    & \multirow{3}{*}{$\atleastq{n}{S}{B_2}$}       & \goesto{3}    & $B_1(x) \rightarrow$                                                              & $S(x,f_i(x))$             & for ${1 \leq i \leq n}$ \\
                            &                                                       &                                               &               & $B_1(x) \rightarrow$                                                              & $B_2(f_i(x))$             & for ${1 \leq i \leq n}$ \\
                            &                                                       &                                               &               & $B_1(x) \rightarrow$                                                              & $f_i(x) \nequals f_j(x)$  & for ${1 \leq i < j \leq n}$ \\[1.5ex]
    DL3                     & $\exists S.B_1 \sqsubseteq$                           & $B_2$                                         & \goesto{1}    & $S(z_1,x) \wedge B_1(x) \rightarrow$                                              & $B_2(z_1)$ \\[1.5ex]
    \multirow{2}{*}{DL4}    & \multirow{2}{*}{$B_1 \sqsubseteq$}                    & \multirow{2}{*}{$\atmostq{n}{S}{B_2}$}        & \goesto{2}    & $S(z_1,x) \wedge B_2(x) \rightarrow$                                              & $S_{B_2}(z_1,x)$          & for fresh $S_{B_2}$ \\
                            &                                                       &                                               &               & $B_1(x) \wedge \bigwedge\limits_{1 \leq i \leq n+1} S_{B_2}(x,z_i) \rightarrow$   & \multicolumn{2}{@{}l@{}}{$\bigvee\limits_{1 \leq i < j \leq n+1} z_i \equals z_j$} \\[3.5ex]
    DL5                     & $S_1 \sqsubseteq$                                     & $S_2$                                         & \goesto{1}    & $S_1(z_1,x) \rightarrow$                                                          & $S_2(z_1,x)$ \\[2ex]
    DL6                     & $S_1 \sqsubseteq$                                     & $S_2^-$                                       & \goesto{1}    & $S_1(z_1,x) \rightarrow$                                                          & $S_2(x,z_1)$ \\[0.5ex]
    \hline
\end{tabular}
\end{center}
\end{table*}      

\section{Preliminaries}\label{sec:preliminaries}

\textbf{First-Order Logic.\ } It is usual in equational theorem proving to
encode atomic formulas as terms, and to use a multi-sorted signature that
prevents us from considering malformed terms. Thus, we partition the signature
into a set $\P$ of \emph{predicate symbols} and a set $\F$ of \emph{function
symbols}; moreover, we assume that $\P$ has a special constant $\TOP$. A
\emph{term} is constructed as usual using variables and the signature symbols,
with the restriction that predicate symbols are allowed to occur only at the
outermost level; the latter terms are called $\P$-\emph{terms}, while all other
terms are $\F$-\emph{terms}. For example, for $P$ a predicate and $f$ a
function symbol, $f(P(x))$ and $P(P(x))$ are both malformed; $P(f(x))$ is a
well-formed $\P$-term; and $f(x)$ and $x$ are both well-formed $\F$-terms. Term
$f(t)$ is an $f$-\emph{successor} of $t$, and $t$ is an $f$-\emph{predecessor}
of $f(t)$.

An \emph{equality} is a formula of the form ${s \equals t}$, where $s$ and $t$
are either both $\F$- or both $\P$-terms. An equality of the form ${P(\vec{s})
\equals \TOP}$ is called an \emph{atom} and is written as just $P(\vec{s})$
whenever it is clear from the context that the expression denotes a formula,
and not a $\P$-term. An \emph{inequality} is a negation of an equality and is
written as ${s \nequals t}$. We assume that $\equals$ and $\nequals$ are
implicitly symmetric---that is, ${s \eqorineq t}$ and ${t \eqorineq s}$ are
identical, for ${{\eqorineq} \in \setsingle{ {\equals}, \; {\nequals}}}$. A
\emph{literal} is an equality or an inequality. A \emph{clause} is a formula of
the form ${\forall \vec{x}.[\Gamma \rightarrow \Delta]}$ where $\Gamma$ is a
conjunction of atoms called the \emph{body}, $\Delta$ is a disjunction of
literals called the \emph{head}, and $\vec{x}$ contains all variables occurring
in the clause; quantifier $\forall \vec{x}$ is usually omitted as it is
understood implicitly. We often treat conjunctions and disjunctions as sets
(i.e., they are unordered and without repetition) and use them in standard set
operations; and we write the empty conjunction (disjunction) as $\top$
($\bot$). For $\alpha$ a term, literal, clause, or a set thereof, we say that
$\alpha$ is \emph{ground} if it does not contain a variable; $\alpha\sigma$ is
the result of applying a substitution $\sigma$ to $\alpha$; and we often write
substitutions as ${\sigma = \subst{ x \mapsto t_1, \; y \mapsto t_2, \; \dots
}}$. We use the standard notion of subterm positions; $\atPos{s}{p}$ is the
subterm of $s$ at position $p$; position $p$ is \emph{proper} in a term $t$ if
${\atPos{t}{p} \neq t}$; and $\replPos{s}{t}{p}$ is the term obtained by
replacing the subterm of $s$ at position $p$ with $t$.

A \emph{Herbrand equality interpretation} is a set of ground equalities
satisfying the usual congruence properties. Satisfaction of a ground
conjunction, a ground disjunction, or a (not necessarily ground) clause
$\alpha$ in an interpretation $I$, written ${I \models \alpha}$, as well as
entailment of a clause ${\Gamma \rightarrow \Delta}$ from a set of clauses
$\Onto$, written ${\Onto \models \Gamma \rightarrow \Delta}$, are defined as
usual. Note that a ground disjunction of literals $\Delta$ may contain
inequalities so ${I \models \Delta}$ does not necessarily imply ${I \cap \Delta
\neq \emptyset}$.

Unless otherwise stated, (possibly indexed) letters $x$, $y$, and $z$ denote
variables; $l$, $r$, $s$, and $t$ denote terms; $A$ denotes an atom or a
$\P$-term (depending on the context); $L$ denotes a literal; $f$ and $g$ denote
function symbols; $B$ denotes a unary predicate symbol; and $S$ denotes a
binary predicate symbol.

\smallskip
\noindent
\textbf{Orders.\ } A \emph{strict order} $\succ$ on a universe $U$ is an
irreflexive, asymmetric, and transitive relation on $U$; and $\succeq$ is the
\emph{non-strict order} induced by $\succ$. Order $\succ$ is \emph{total} if,
for all ${a,b \in U}$, we have ${a \succ b}$, ${b \succ a}$, or ${a = b}$.
Given ${{\circ} \in \setsingle{{\succ}, {\succeq}}}$, element ${b \in U}$, and
subset ${S \subseteq U}$, the notation ${S \circ b}$ abbreviates ${\exists a
\in S: a \circ b}$. The \emph{multiset extension} $\mul{\succ}$ of $\succ$
compares multisets $M$ and $N$ on $U$ such that ${M \mul{\succ} N}$ \iff ${M
\neq N}$ and, for each ${n \in N \setminus M}$, some ${m \in M \setminus N}$
exists such that ${m \succ n}$, where $\setminus$ is the multiset difference
operator.

A \emph{term order} $\succ$ is a \emph{strict} order on the set of all terms.
We extend $\succ$ to literals by identifying each ${s \nequals t}$ with the
multiset ${\{ s, s, t, t \}}$ and each ${s \equals t}$ with the multiset ${\{
s, t \}}$, and by comparing the result using the multiset extension of $\succ$.
We reuse the symbol $\succ$ for the induced literal order since the intended
meaning should be clear from the context.

\smallskip
\noindent
\textbf{DL-Clauses.\ } Our calculus takes as input a set $\Onto$ of
\emph{DL-clauses}---that is, clauses restricted to the following form. Let
$\P_1$ and $\P_2$ be countable sets of unary and binary predicate symbols, and
let $\F$ be a countable set of unary function symbols. DL-clauses are written
using the \emph{central variable} $x$ and variables $z_i$. A
\emph{DL-$\F$-term} has the form $x$, $z_i$, or $f(x)$ with ${f \in \F}$; a
\emph{DL-$\P$-term} has the form $B(z_i)$, $B(x)$, $B(f(x))$, $S(x,z_i)$,
$S(z_i,x)$, $S(x,f(x))$, $S(f(x),x)$ with ${B \in \P_1}$ and ${S \in \P_2}$;
and a \emph{DL-term} is a DL-$\F$-term or a DL-$\P$-term. A \emph{DL-atom} has
the form ${A \equals \TOP}$ with $A$ a DL-$\P$-term. A \emph{DL-literal} is a
DL-atom, or it is of the form ${f(x) \eqorineq g(x)}$, ${f(x) \eqorineq z_i}$,
or ${z_i \eqorineq z_j}$ with ${{\eqorineq} \in \setsingle{{\equals}, \;
{\nequals}}}$. A \emph{DL-clause} contains only DL-atoms of the form $B(x)$,
$S(x,z_i)$, and $S(z_i,x)$ in the body and only DL-literals in the head, and
each variable $z_i$ occurring in the head also occurs in the body. An
\emph{ontology} $\Onto$ is a finite set of DL-clauses. A \emph{query clause} is
a DL-clause in which all literals are of the form $B(x)$. Given an ontology
$\Onto$ and a query clause ${\Gamma \rightarrow \Delta}$, our calculus decides
whether ${\Onto \models \Gamma \rightarrow \Delta}$ holds.

\SRIQ ontologies written using the DL-style syntax can be transformed into
DL-clauses without affecting query clause entailment. First, we
\emph{normalise} DL axioms to the form shown on the left-hand side of
\cref{tab:alchiq-transformation}: we transform away role chains and then
replace all complex concepts with fresh atomic ones; this process is well
understood
\cite{DBLP:conf/ijcai/Kazakov09,Kazakov:08:SROIQ:KR,smh14consequence-FPT}, so
we omit the details. Second, using the well-known correspondence between DLs
and first-order logic \cite{dl-handbook}, we translate normalised axioms to
DL-clauses as shown on the right-hand side of \cref{tab:alchiq-transformation}.
The standard translation of ${B_1 \sqsubseteq \atmostq{n}{S}{B_2}}$ requires
atoms $B_2(z_i)$ in clause bodies, which are not allowed in our setting. We
address this issue by introducing a fresh role $S_{B_2}$ that we axiomatise as
${S(y,x) \wedge B_2(x) \rightarrow S_{B_2}(y,x)}$; this, in turn, allows us to
clausify the original axiom as if it were ${B_1 \sqsubseteq
\atmost{n}{S_{B_2}}}$. For an \ELH ontology, $\Onto$ contains DL-clauses of
type DL1 with ${m = n + 1}$, DL2 with ${n = 1}$, DL3, and DL5.

\section{Motivation}\label{sec:motivation}

\begin{figure*}[tb]
\nextequationlabel{ex:why:1}
\nextequationlabel{ex:why:2}
\nextequationlabel{ex:why:3}
\nextequationlabel{ex:why:4}

\infer{ex:why:v0-1}{\top}{B_0(x)}{\Init}
\infer{ex:why:v0-2}{\top}{S_j(x,f_{1,j}(x))}{\Hyper{ex:why:1,ex:why:v0-1}}
\infer{ex:why:v0-3}{\top}{B_1(f_{1,j}(x))}{\Hyper{ex:why:2,ex:why:v0-1}}

\nextequationlabel{eq:why:succ-v0-f11-v1}
\nextequationlabel{eq:why:succ-v0-f12-v1}

\infer{ex:why:v1-1}{\top}{S_j(y,x)}{\Succ{ex:why:v0-2,ex:why:v0-3}}
\infer{ex:why:v1-2}{\top}{B_1(x)}{\Succ{ex:why:v0-2,ex:why:v0-3}}
\infer{ex:why:v1-3}{\top}{S_j(x,f_{2,j}(x))}{\Hyper{ex:why:1,ex:why:v1-2}}
\infer{ex:why:v1-4}{\top}{B_2(x,f_{2,j}(x))}{\Hyper{ex:why:2,ex:why:v1-2}}

\infer{ex:why:vn-1}{\top}{S_j(y,x)}{\ruleword{Succ}[$\dots$]}
\infer{ex:why:vn-2}{\top}{B_n(x)}{\ruleword{Succ}[$\dots$]}
\infer{ex:why:vn-3}{\top}{C_n(x)}{\Hyper{ex:why:3,ex:why:vn-2}}
\infer{ex:why:vn-4}{\top}{C_{n-1}(y)}{\Hyper{ex:why:4,ex:why:vn-1,ex:why:vn-3}}

\infer{ex:why:v1-5}{\top}{C_1(x)}{\ruleword{Pred}[$\dots$]}
\infer{ex:why:v1-6}{\top}{C_0(y)}{\Hyper{ex:why:4,ex:why:v1-1,ex:why:v1-5}}

\infer{ex:why:v0-4}{\top}{C_0(x)}{\Pred{ex:why:v1-6}}

\begin{displaymath}
\begin{array}{r@{\;}lcr@{\;}l@{\qquad}l@{\;}l}
    \multicolumn{7}{@{}c@{}}{\text{Ontology } \Onto_1} \\
    \hline
    \multirow{2}{*}{$B_i$}  & \multirow{2}{*}{$\sqsubseteq \exists S_j.B_{i+1}$}    & \multirow{2}{*}{$\rightsquigarrow$}   & B_i(x) \rightarrow                        & S_j(x,f_{i+1,j}(x))   & \eqref{ex:why:1}  & \rdelim\}{2}{4.5cm}[$\text{ for } 0 \leq i < n \text{ and } 1 \leq j \leq 2$] \\
                            &                                                       &                                       & B_i(x) \rightarrow                        & B_{i+1}(f_{i+1,j}(x)) & \eqref{ex:why:2}  & \\
    B_n                     & \sqsubseteq C_n                                       & \rightsquigarrow                      & B_n(x) \rightarrow                        & C_n(x)                & \eqref{ex:why:3}  & \\
    \exists S_j.C_{i+1}     & \sqsubseteq C_i                                       & \rightsquigarrow                      & S_j(z_1,x) \wedge C_{i+1}(x) \rightarrow  & C_i(z_1)              & \eqref{ex:why:4}  & \;\;\;\,\text{for } 0 \leq i < n \text{ and } 1 \leq j \leq 2 \\
    \hline
\end{array}
\end{displaymath}
\vspace{0.2cm}
\begin{center}
\begin{tikzpicture}[scale=0.7, every node/.style={scale=0.7}]
    \context{v0}{0}{0}{v_{B_0(x)}}{B_0(x)}{
        \pinf{ex:why:v0-1}
        \pinf{ex:why:v0-2}
        \pinf{ex:why:v0-3}
        \pinf{ex:why:v0-4}
    } ;

    \context{v1}{8}{0}{v_{B_1(x)}}{B_1(x)}{
        \pinf{ex:why:v1-1}
        \pinf{ex:why:v1-2}
        \pinf{ex:why:v1-3}
        \pinf{ex:why:v1-4}
        \pinf{ex:why:v1-5}
        \pinf{ex:why:v1-6}
    } ;

    \coordinate (v2) at (11.5,0) ;

    \coordinate (vn-1) at (12.5,0) ;

    \node at (12,0) {$\cdots$} ;

    \context{vn}{15}{0}{v_{B_n(x)}}{B_n(x)}{
        \pinf{ex:why:vn-1}
        \pinf{ex:why:vn-2}
        \pinf{ex:why:vn-3}
        \pinf{ex:why:vn-4}
    } ;

    \draw[link] (v0) to[bend left=10]  node[above] {\edgelabel{ex:why:v0-2,ex:why:v0-3}{f_{1,1}}{eq:why:succ-v0-f11-v1}} (v1) ;
    \draw[link] (v0) to[bend left=-10] node[above] {\edgelabel{ex:why:v0-2,ex:why:v0-3}{f_{1,2}}{eq:why:succ-v0-f12-v1}} (v1) ;

    \draw[linkdots] (v1) to[bend left=15]  (v2) ;
    \draw[linkdots] (v1) to[bend left=-15] (v2) ;

    \draw[linkdots] (vn-1) to[bend left=15]  (vn) ;
    \draw[linkdots] (vn-1) to[bend left=-15] (vn) ;

\end{tikzpicture}
\end{center}\vspace{-0.5cm}
\caption{Example Motivating Consequence-Based Calculi}\label{fig:why-consequence-based}
\end{figure*}
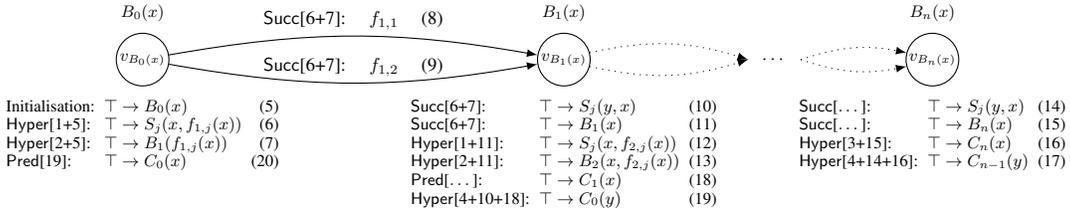

As motivation for our work, in \cref{sec:motivation:why} we discuss the
drawbacks of existing DL reasoning calculi, and then in
\cref{sec:motivation:notions} we discuss how existing consequence-based calculi
address these problems by separating clauses into contexts in a way that
considerably reduces the number of inferences. Next, in
\cref{sec:motivation:extending} we discuss the main contribution of this paper,
which lies in extending the consequence-based framework to a DL with
disjunctions and number restrictions. Handling the latter requires equality
reasoning, which requires a more involved calculus and completeness proof.

\subsection{Why Consequence-Based Calculi?}\label{sec:motivation:why}

Consider the \EL ontology $\Onto_1$ in \cref{fig:why-consequence-based}; one
can readily check that ${\Onto \models B_i(x) \rightarrow C_i(x)}$ holds for
${0 \leq i \leq n}$. To prove ${\Onto \models B_0(x) \rightarrow C_0(x)}$ using
the (hyper)tableau calculus, we start with $B_0(a)$ and apply
\eqref{ex:why:1}--\eqref{ex:why:4} in a forward-chaining manner. Since $\Onto$
contains \eqref{ex:why:1} for ${j \in \setsingle{1, 2}}$, this constructs a
tree-shaped model of depth $n$ and a fanout of two, where nodes at depth $i$
are labelled by $B_i$ and $C_i$. Forward chaining ensures that reasoning is
goal-oriented; however, all nodes labelled with $B_i$ are of the same type and
they share the same properties, which reveals a weakness of (hyper)tableau
calculi: the constructed models can be large (exponential in our example) and
highly redundant; apart from causing problems in practice, this often prevents
(hyper)tableau calculi from being worst-case optimal. Techniques such as
\emph{caching} \cite{DBLP:conf/tableaux/GoreN07} or \emph{anywhere blocking}
\cite{msh09hypertableau} can constrain model construction, but their
effectiveness often depends on the order of rule applications. Thus, model size
is a key limiting factor for (hyper)tableau-based reasoners
\cite{msh09hypertableau}.

In contrast, resolution describes models using (universally quantified) clauses
that `summarise' the model. This eliminates redundancy and ensures worst-case
optimality of many resolution decision procedures. Many resolution variants
have been proposed \cite{BachmairGanzinger:HandbookAR:resolution:2001}, each
restricting inferences in a specific way. However, to ensure termination, all
decision procedure for DLs we are aware of perform inferences with the
`deepest' and the `covering' clause atoms, so all of them will resolve all
\eqref{ex:why:1} with all \eqref{ex:why:4} to obtain all $2 n^2$ clauses of the
form
\begin{align}
    \begin{array}{@{}l@{}}
        B_i(x) \wedge C_{k+1}(f_{i+1,j}(x)) \rightarrow C_k(x) \\
        \hspace{2.5cm} \text{for } 1 \leq i,k < n \text{ and } 1 \leq j \leq 2.
    \end{array}
    \label{ex:why:conclusions}
\end{align}
Of these $2 n^2$ clauses, only those with ${i = k}$ are relevant to proving our
goal. If we extend $\Onto$ with additional clauses that contain $B_i$ and
$C_i$, each of these $2 n^2$ clauses can participate in further inferences and
give rise to more irrelevant clauses. This problem is particularly pronounced
when $\Onto$ is satisfiable since we must then produce all consequences of
$\Onto$.

\subsection{Basic Notions}\label{sec:motivation:notions}

Consequence-based calculi combine `summarisation' of resolution with
goal-directed search of (hyper)tableau calculi. \citeA{smh14consequence-FPT}
presented a framework for \ALCI capturing the key elements of the related
calculi by \citeA{babl05}, \citeA{DBLP:conf/ijcai/Kazakov09},
\citeA{DBLP:conf/kr/OrtizRS10}, and \citeA{DBLP:conf/ijcai/SimancikKH11}.
Before extending this framework to \ALCHIQ in \cref{sec:algorithm}, we next
informally recapitulate the basic notions; however, to make this paper easier
to follow, we use the same notation and terminology as in \cref{sec:algorithm}.

Our consequence-based calculus constructs a directed graph ${\D =
\tuple{\V,\E,\Ssym,\coresym,\csuccsym}}$ called a \emph{context structure}. The
vertices in $\V$ are called \emph{contexts}. Let $I$ be a Herbrand model of
$\Onto$; hence, the domain of $I$ contains ground terms. Instead of
representing each ground term of $I$ separately as in (hyper)tableau calculi,
$\D$ can represent the properties of several terms by a single context $v$.
Each context ${v \in \V}$ is associated with a (possibly empty) conjunction
$\core{v}$ of \emph{core} atoms that must hold for all ground terms that $v$
represents; thus, $\core{v}$ determines the `kind' of context $v$. Moreover,
$v$ is associated with a set $\S{v}$ of clauses that capture the constraints
that these terms must satisfy. Partitioning clauses into sets allows us to
restrict the inferences between clause sets and thus eliminate certain
irrelevant inferences. Clauses in $\S{v}$ are `relative' to $\core{v}$: for
each ${\Gamma \rightarrow \Delta \in \S{v}}$, we have ${\Onto \models \core{v}
\wedge \Gamma \rightarrow \Delta}$---that is, we do not include $\core{v}$ in
clause bodies since $\core{v}$ holds implicitly. Function $\csuccsym$ provides
each context ${v \in \V}$ with a concept order $\csucc{v}$ that restricts
resolution inferences in the presence of disjunctions.

Contexts are connected by directed edges labelled with function symbols. If $u$
is connected to $v$ via an $f$-labelled edge, then the $f$-successor of each
ground term represented by $u$ is represented by $v$. Conversely, if $u$ and
$v$ are \emph{not} connected by an $f$-edge, then each ground term represented
by $v$ is not an $f$-successor of a ground term represented by $u$, so no
inference between $\S{u}$ and $\S{v}$ is ever needed.

Consequence-based calculi are not just complete for refutation: they derive the
required consequences. \Cref{fig:why-consequence-based} demonstrates this for
${\Onto_1 \models B_0(x) \rightarrow C_0(x)}$. The cores and the clauses shown
above and below, respectively, each context, and clause numbers correspond to
the derivation order. To prove ${B_0(x) \rightarrow C_0(x)}$, we introduce
context $v_{B_0(x)}$ with core ${B_0(x)}$ and add clause \eqref{ex:why:v0-1} to
it. The latter says that $B_0$ holds for $a$, and it is analogous to
initialising a (hyper)tableau calculus with $B_0(a)$. The calculus then applies
rules from \cref{table:rules} to derive new clauses and/or extend $\D$.

\ruleword{Hyper} is the standard hyperresolution rule restricted to a single
context at a time. Thus, we derive \eqref{ex:why:v0-2} from \eqref{ex:why:1}
and \eqref{ex:why:v0-1}, and \eqref{ex:why:v0-3} from \eqref{ex:why:2} and
\eqref{ex:why:v0-1}. Hyperresolution resolves all body atoms, which makes the
resolvent relevant for the context and prevents the derivation of irrelevant
clauses such as \eqref{ex:why:conclusions}.

Context $v_{B_0(x)}$ contains atoms with function symbols $f_{1,1}$ and
$f_{1,2}$, so the \ruleword{Succ} rule must ensure that the $f_{1,1}$- and
$f_{1,2}$-successors of the ground terms represented by $v_{B_0(x)}$ are
adequately represented in $\D$. We can control context introduction via a
parameter called an \emph{expansion strategy}---a function that determines
whether to reuse an existing context or introduce a fresh one; in the latter
case, it also determines how to initialise the context's core. We discuss
possible strategies in \cref{sec:algorithm:definitions}; in the rest of this
example, we use the so-called cautious strategy, where the \ruleword{Succ} rule
introduces context $v_{B_1(x)}$ and initialises it with \eqref{ex:why:v1-1} and
\eqref{ex:why:v1-2}. Note that \eqref{ex:why:v0-2} represents two clauses, both
of which we satisfy (in separate applications of the \ruleword{Succ} rule)
using $v_{B_1(x)}$.

We construct contexts ${v_{B_2(x)}, \dots, v_{B_n(x)}}$ analogously, we derive
\eqref{ex:why:vn-3} by hyperresolving \eqref{ex:why:3} and \eqref{ex:why:vn-1},
and we derive \eqref{ex:why:vn-4} by hyperresolving \eqref{ex:why:4},
\eqref{ex:why:vn-1}, and \eqref{ex:why:vn-3}. Clause \eqref{ex:why:vn-4}
imposes a constraint on the predecessor context, which we propagate using the
\ruleword{Pred} rule, deriving \eqref{ex:why:v1-6} and \eqref{ex:why:v0-4}.
Since clauses of $v_{B_0(x)}$ are `relative' to the core of $v_{B_0(x)}$,
clause \eqref{ex:why:v0-4} represents our query clause, as required.

\subsection{Extending the Framework to $\boldsymbol{\ALCHIQ}$}\label{sec:motivation:extending}

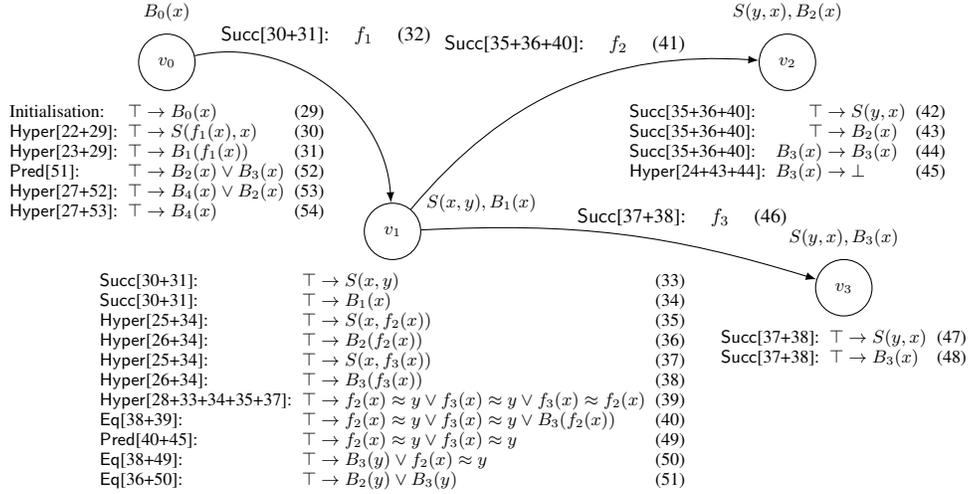
\begin{figure*}[tb]
\nextequationlabel{ex:diff:1}
\nextequationlabel{ex:diff:2}
\nextequationlabel{ex:diff:3}
\nextequationlabel{ex:diff:4}
\nextequationlabel{ex:diff:5}
\nextequationlabel{ex:diff:6}
\nextequationlabel{ex:diff:7}

\begin{displaymath}
\begin{array}{r@{\;}lcr@{\;}ll@{\;}l}
    \multicolumn{7}{@{}c@{}}{\text{Ontology } \Onto_2} \\
    \hline
    \multirow{2}{*}{$B_0$}  & \multirow{2}{*}{$\sqsubseteq \exists S^-.B_1$}    & \multirow{2}{*}{$\rightsquigarrow$}   & B_0(x)                & \rightarrow S(f_1(x),x)                                                                                           & \eqref{ex:diff:1} \\
                            &                                                   &                                       & B_0(x)                & \rightarrow B_1(f_1(x))                                                                                           & \eqref{ex:diff:2} \\
    \multirow{2}{*}{$B_1$}  & \multirow{2}{*}{$\sqsubseteq \exists S.B_i$}      & \multirow{2}{*}{$\rightsquigarrow$}   & B_1(x)                & \rightarrow S(x,f_i(x))                                                                                           & \eqref{ex:diff:4} & \rdelim\}{3}{2.35cm}[$\text{ for } 2 \leq i \leq 3$] \\
                            &                                                   &                                       & B_1(x)                & \rightarrow B_i(f_i(x))                                                                                           & \eqref{ex:diff:5} \\
    B_i                     & \sqsubseteq B_4                                   & \rightsquigarrow                      & B_i(x)                & \rightarrow B_4(x)                                                                                                & \eqref{ex:diff:6} \\
    B_2 \sqcap B_3          & \sqsubseteq \bot                                  & \rightsquigarrow                      & B_2(x) \wedge B_3(x)  & \rightarrow \bot                                                                                                  & \eqref{ex:diff:3} \\
    B_1                     & \sqsubseteq {\leq} 2.S                            & \rightsquigarrow                      & \multicolumn{2}{@{}l@{}}{B_1(x) \wedge \bigwedge_{1 \leq i \leq 3} S(x,z_i) \rightarrow \bigvee_{1 \leq j < k \leq 3} z_j \equals z_k}    & \eqref{ex:diff:7} \\
    \hline
\end{array}
\end{displaymath}

\infer{ex:diff:v1-1}{\top}{B_0(x)}{\Init}
\infer{ex:diff:v1-2}{\top}{S(f_1(x),x)}{\Hyper{ex:diff:1,ex:diff:v1-1}}
\infer{ex:diff:v1-3}{\top}{B_1(f_1(x))}{\Hyper{ex:diff:2,ex:diff:v1-1}}

\nextequationlabel{ex:diff:succ-v1-v2}

\infer{ex:diff:v2-1}{\top}{S(x,y)}{\Succ{ex:diff:v1-2,ex:diff:v1-3}}
\infer{ex:diff:v2-2}{\top}{B_1(x)}{\Succ{ex:diff:v1-2,ex:diff:v1-3}}
\infer{ex:diff:v2-3}{\top}{S(x,f_2(x))}{\Hyper{ex:diff:4,ex:diff:v2-2}}
\infer{ex:diff:v2-4}{\top}{B_2(f_2(x))}{\Hyper{ex:diff:5,ex:diff:v2-2}}
\infer{ex:diff:v2-5}{\top}{S(x,f_3(x))}{\Hyper{ex:diff:4,ex:diff:v2-2}}
\infer{ex:diff:v2-6}{\top}{B_3(f_3(x))}{\Hyper{ex:diff:5,ex:diff:v2-2}}
\infer{ex:diff:v2-7}{\top}{f_2(x) \equals y \vee f_3(x) \equals y \vee f_3(x) \equals f_2(x)}{\Hyper{ex:diff:7,ex:diff:v2-1,ex:diff:v2-2,ex:diff:v2-3,ex:diff:v2-5}}
\infer{ex:diff:v2-8}{\top}{f_2(x) \equals y \vee f_3(x) \equals y \vee B_3(f_2(x))}{\Eq{ex:diff:v2-6,ex:diff:v2-7}}

\nextequationlabel{ex:diff:succ-v2-v3}

\infer{ex:diff:v3-1}{\top}{S(y,x)}{\Succ{ex:diff:v2-3,ex:diff:v2-4,ex:diff:v2-8}}
\infer{ex:diff:v3-2}{\top}{B_2(x)}{\Succ{ex:diff:v2-3,ex:diff:v2-4,ex:diff:v2-8}}
\infer{ex:diff:v3-3}{B_3(x)}{B_3(x)}{\Succ{ex:diff:v2-3,ex:diff:v2-4,ex:diff:v2-8}}
\infer{ex:diff:v3-4}{B_3(x)}{\bot}{\Hyper{ex:diff:3,ex:diff:v3-2,ex:diff:v3-3}}

\nextequationlabel{ex:diff:succ-v2-v4}

\infer{ex:diff:v4-1}{\top}{S(y,x)}{\Succ{ex:diff:v2-5,ex:diff:v2-6}}
\infer{ex:diff:v4-2}{\top}{B_3(x)}{\Succ{ex:diff:v2-5,ex:diff:v2-6}}

\infer{ex:diff:v2-9}{\top}{f_2(x) \equals y \vee f_3(x) \equals y}{\Pred{ex:diff:v2-8,ex:diff:v3-4}}
\infer{ex:diff:v2-10}{\top}{B_3(y) \vee f_2(x) \equals y}{\Eq{ex:diff:v2-6,ex:diff:v2-9}}
\infer{ex:diff:v2-11}{\top}{B_2(y) \vee B_3(y)}{\Eq{ex:diff:v2-4,ex:diff:v2-10}}

\infer{ex:diff:v1-4}{\top}{B_2(x) \vee B_3(x)}{\Pred{ex:diff:v2-11}}
\infer{ex:diff:v1-5}{\top}{B_4(x) \vee B_2(x)}{\Hyper{ex:diff:6,ex:diff:v1-4}}
\infer{ex:diff:v1-6}{\top}{B_4(x)}{\Hyper{ex:diff:6,ex:diff:v1-5}}

\begin{center}
\begin{tikzpicture}[scale=0.75, every node/.style={scale=0.75}]

    \context{v1}{0}{0}{v_0}{B_0(x)}{
        \pinf{ex:diff:v1-1}
        \pinf{ex:diff:v1-2}
        \pinf{ex:diff:v1-3}
        \pinf{ex:diff:v1-4}
        \pinf{ex:diff:v1-5}
        \pinf{ex:diff:v1-6}
    } ;

    \context[right=3mm]{v2}{4}{-3}{v_1}{\;\; S(x,y), B_1(x)}{
        \pinf{ex:diff:v2-1}
        \pinf{ex:diff:v2-2}
        \pinf{ex:diff:v2-3}
        \pinf{ex:diff:v2-4}
        \pinf{ex:diff:v2-5}
        \pinf{ex:diff:v2-6}
        \pinf{ex:diff:v2-7}
        \pinf{ex:diff:v2-8}
        \pinf{ex:diff:v2-9}
        \pinf{ex:diff:v2-10}
        \pinf{ex:diff:v2-11}
    } ;

    \context{v3}{11}{0}{v_2}{S(y,x), B_2(x)}{
        \pinf{ex:diff:v3-1}
        \pinf{ex:diff:v3-2}
        \pinf{ex:diff:v3-3}
        \pinf{ex:diff:v3-4}
    } ;
    
    \context{v4}{12}{-4}{v_3}{S(y,x), B_3(x)}{
        \pinf{ex:diff:v4-1}
        \pinf{ex:diff:v4-2}
    } ;

    \draw[link] (v1) to[bend left=50] node[above=6mm] {\edgelabel{ex:diff:v1-2,ex:diff:v1-3}{f_1}{ex:diff:succ-v1-v2}} (v2) ;
    \draw[link] (v2) to[bend left=27] node[above=5mm] {\edgelabel{ex:diff:v2-3,ex:diff:v2-4,ex:diff:v2-8}{f_2}{ex:diff:succ-v2-v3}} (v3) ;
    \draw[link] (v2) to[bend left=10] node[above=3mm, right=-9mm] {\edgelabel{ex:diff:v2-5,ex:diff:v2-6}{f_3}{ex:diff:succ-v2-v4}} (v4) ;

\end{tikzpicture}
\end{center}
\caption{Challenges in Extending the Consequence-Based Framework to \ALCHIQ}\label{fig:challenges}
\end{figure*}

In all consequence-based calculi presented thus far, the constraints that the
ground terms represented by a context $v$ must satisfy can be represented using
standard DL-style axioms. For example, for \ALCI, \citeA{smh14consequence-FPT}
represented all relevant consequences using DL axioms of the following form:
\begin{align}
    \bigsqcap B_i \sqsubseteq \bigsqcup B_j \sqcup \bigsqcup \exists S_k.B_k \sqcup \bigsqcup \forall S_\ell.B_\ell
\end{align}    

\ALCHIQ provides both counting quantifiers and disjunctions, the interplay of
which may impose constraints that cannot be represented in \ALCHIQ. Let
$\Onto_2$ be as in \cref{fig:challenges}. To see that ${\Onto_2 \models B_0(x)
\rightarrow B_4(x)}$ holds, we construct a Herbrand interpretation $I$ from
$B_0(a)$: \eqref{ex:diff:1} and \eqref{ex:diff:2} derive $S(f_1(a),a)$ and
$B_1(f_1(a))$; and \eqref{ex:diff:4} and \eqref{ex:diff:5} derive
$S(f_1(a),f_2(f_1(a)))$ and $B_2(f_2(f_1(a)))$, and $S(f_1(a),f_3(f_1(a)))$ and
$B_3(f_3(f_1(a)))$. Due to \eqref{ex:diff:6} we derive $B_4(f_2(f_1(a)))$ and
$B_4(f_3(f_1(a)))$. Finally, from \eqref{ex:diff:7} we derive the following
clause:
\begin{align}
\begin{array}{@{}l@{}}
    f_2(f_1(a)) \equals a \vee f_3(f_1(a)) \equals a \; \vee \\
    \hspace{4cm} f_3(f_1(a)) \equals f_2(f_1(a)) \\
\end{array} \label{ex:motivation:constraint}
\end{align}
Disjunct ${f_3(f_1(a)) \equals f_2(f_1(a))}$ cannot be satisfied due to
\eqref{ex:diff:3}; but then, regardless of whether we choose to satisfy
${f_3(f_1(a)) \equals a}$ or ${f_2(f_1(a)) \equals a}$, we derive $B_4(a)$.

Our calculus must be able to capture constraint
\eqref{ex:motivation:constraint} and its consequences, but standard DL axioms
cannot explicitly refer to specific successors and predecessors. Instead, we
capture consequences using \emph{context clauses}---clauses over terms $x$,
$f_i(x)$, and $y$, where variable $x$ represents the ground terms that a
context stands for, $f_i(x)$ represents $f_i$-successors of $x$, and $y$
represents the predecessor of $x$. We can thus identify the predecessor and the
successors of $x$ `by name', allowing us to capture constraint
\eqref{ex:motivation:constraint} as
\begin{align}
    f_2(x) \equals y \vee f_3(x) \equals y \vee f_3(x) \equals f_2(x).
\end{align}
Based on this idea, we adapted the rules by \citeA{smh14consequence-FPT} to
handle context clauses correctly, and we added rules that capture the
consequences of equality. The resulting set of rules is shown in
\cref{table:rules}.

\Cref{fig:challenges} shows how to verify ${\Onto_2 \models B_0(x) \rightarrow
B_4(x)}$ using our calculus; the maximal literal of each clause is shown on the
right. We next discuss the inferences in detail.

We first create context $v_0$ and initialise it with \eqref{ex:diff:v1-1}; this
ensures that each interpretation represented by the context structure contains
a ground term for which $B_0$ holds. Next, we derive \eqref{ex:diff:v1-2} and
\eqref{ex:diff:v1-3} using hyperresolution. At this point, we could
hyperresolve \eqref{ex:diff:4} and \eqref{ex:diff:v1-3} to obtain ${\top
\rightarrow S(f_1(x),f_2(f_1(x)))}$; however, this could easily lead to
nontermination of the calculus due to increased term nesting. Therefore, we
require hyperresolution to map variable $x$ in the DL-clauses to variable $x$
in the context clauses; thus, hyperresolution derives in each context only
consequences about $x$, which prevents redundant derivations.

The \ruleword{Succ} rule next handles function symbol $f_1$ in clauses
\eqref{ex:diff:v1-2} and \eqref{ex:diff:v1-3}. To determine which information
to propagate to a successor, \cref{def:triggers} in \cref{sec:algorithm}
introduces a set $\SUtriggers$ of \emph{successor triggers}. In our example,
DL-clause \eqref{ex:diff:7} contains atoms $B_1(x)$ and $S(x,z_i)$ in its body,
and $z_i$ can be mapped to a predecessor or a successor of $x$; thus, a context
in which hyperresolution is applied to \eqref{ex:diff:7} will be interested in
information about its predecessors, which we reflect by adding $B_1(x)$ and
$S(x,y)$ to $\SUtriggers$. In this example we use the so-called eager strategy
(see \cref{sec:algorithm:definitions}), so the \ruleword{Succ} rule introduces
context $v_1$, sets its core to $B_1(x)$ and $S(x,y)$, and initialises the
context with \eqref{ex:diff:v2-1} and \eqref{ex:diff:v2-2}.

We next introduce \eqref{ex:diff:v2-3}--\eqref{ex:diff:v2-6} using
hyperresolution, at which point we have sufficient information to apply
hyperresolution to \eqref{ex:diff:7} to derive \eqref{ex:diff:v2-7}. Please
note how the presence of \eqref{ex:diff:v2-1} is crucial for this inference.

We use paramodulation to deal with equality in clause \eqref{ex:diff:v2-7}. As
is common in resolution-based theorem proving, we order the literals in a
clause and apply inferences only to maximal literals; thus, we derive
\eqref{ex:diff:v2-8}.

Clauses \eqref{ex:diff:v2-3}, \eqref{ex:diff:v2-4}, and \eqref{ex:diff:v2-8}
contain function symbol $f_2$, so the \ruleword{Succ} rule introduces context
$v_2$. Due to clause \eqref{ex:diff:v2-4}, $B_2(x)$ holds for all ground terms
that $v_2$ represents; thus, we add $B_2(x)$ to $\core{v_2}$. In contrast, atom
$B_3(f_2(x))$ occurs in clause \eqref{ex:diff:v2-8} in a disjunction, which
means it may not hold in $v_2$; hence, we add $B_3(x)$ to the body of clause
\eqref{ex:diff:v3-3}. The latter clause allows us to derive
\eqref{ex:diff:v3-4} using hyperresolution.

Clause \eqref{ex:diff:v3-4} essentially says `$B_3(f_2(x))$ should not hold in
the predecessor', which the \ruleword{Pred} rule propagates to $v_1$ as clause
\eqref{ex:diff:v2-9}; one can understand this inference as hyperresolution of
\eqref{ex:diff:v2-8} and \eqref{ex:diff:v3-4} while observing that term
$f_2(x)$ in context $v_1$ is represented as variable $x$ in context $v_2$.

After two paramodulation steps, we derive clause \eqref{ex:diff:v2-11}, which
essentially says `the predecessor must satisfy $B_2(x)$ or $B_3(x)$'. The set
$\PRtriggers$ of \emph{predecessor triggers} from \cref{def:triggers}
identifies this as relevant to $v_0$: the DL-clauses in \eqref{ex:diff:6}
contain $B_2(x)$ and $B_3(x)$ in their bodies, which are represented in $v_1$
as $B_2(y)$ and $B_3(y)$. Hence $\PRtriggers$ contains $B_2(y)$ and $B_3(y)$,
allowing the \ruleword{Pred} rule to derive \eqref{ex:diff:v1-4}.

After two more steps, we finally derive our target clause \eqref{ex:diff:v1-6}.
We could not do this if $B_4(x)$ were maximal in \eqref{ex:diff:v1-5}; thus, we
require all atoms in the head of a goal clause to be smallest. A similar
observation applies to $\PRtriggers$: if $B_3(y)$ were maximal in
\eqref{ex:diff:v2-10}, we would not derive \eqref{ex:diff:v2-11} and propagate
it to $v_0$; thus, all atoms in $\PRtriggers$ must be smallest too.

\section{Formalising the Algorithm}\label{sec:algorithm}

In this section, we first present our consequence-based algorithm for \ALCHIQ
formally, and then we present an outline of the completeness proof; full proofs
are given in the appendix.

\subsection{Definitions}\label{sec:algorithm:definitions}

Our calculus manipulates \emph{context clauses}, which are constructed from
\emph{context terms} and \emph{context literals} as described in
\cref{def:context-terms}. Unlike in general resolution, we restrict context
clauses to contain only variables $x$ and $y$, which have a special meaning in
our setting: variable $x$ represents a ground term in a Herbrand model, and $y$
represents the predecessor of $x$; this naming convention is important for the
rules of our calculus. This is in contrast to the DL-clauses of an ontology,
which can contain variables $x$ and $z_i$, and where $z_i$ refer to either the
predecessor or a successor of $x$.

\begin{definition}\label{def:context-terms}
    A \emph{context $\F$-term} is a term of the form $x$, $y$, or $f(x)$ for
    ${f \in \F}$; a \emph{context $\P$-term} is a term of the form $B(y)$,
    $B(x)$, $B(f(x))$, $S(x,y)$, $S(y,x)$, $S(x,f(x))$, or $S(f(x),x)$ for
    ${B,R \in \P}$ and ${f \in \F}$; and a \emph{context term} is an $\F$-term
    or a $\P$-term. A \emph{context literal} is a literal of the form ${A
    \equals \TOP}$ (called a \emph{context atom}), ${f(x) \eqorineq g(x)}$, or
    ${f(x) \eqorineq y}$, ${y \eqorineq y}$, for $A$ a context $\P$-term and
    ${{\eqorineq} \in \setsingle{{\equals}, \; {\nequals}}}$. A \emph{context
    clause} is a clause with only function-free context atoms in the body, and
    only context literals in the head.
\end{definition}

\Cref{def:triggers} introduces sets $\SUtriggers$ and $\PRtriggers$, that
identify the information that must be exchanged between adjacent contexts.
Intuitively, $\SUtriggers$ contains atoms that are of interest to a context's
successor, and it guides the \ruleword{Succ} rule whereas $\PRtriggers$
contains atoms that are of interest to a context's predecessor and it guides
the \ruleword{Pred} rule.

\begin{definition}\label{def:triggers}
    The set $\SUtriggers$ of \emph{successor triggers} of an ontology $\Onto$
    is the smallest set of atoms such that, for each clause ${\Gamma
    \rightarrow \Delta \in \Onto}$,
    \begin{itemize}
         \item ${B(x) \in \Gamma}$ implies ${B(x) \in \SUtriggers}$,
         \item ${S(x,z_i) \in \Gamma}$ implies ${S(x,y) \in \SUtriggers}$, and
         \item ${S(z_i,x) \in \Gamma}$ implies ${S(y,x) \in \SUtriggers}$.
    \end{itemize}     
    The set $\PRtriggers$ of \emph{predecessor triggers} of $\Onto$ is defined
    as
    \begin{displaymath}
    \begin{array}{@{}r@{\;}l@{}}
        \PRtriggers =   & \setbuilder{A\subst{x \mapsto y, \; y \mapsto x}}{A \in \SUtriggers} \; \cup \\
                        & \setbuilder{B(y)}{B \text{ occurs in } \Onto}.
    \end{array}
    \end{displaymath}
\end{definition}

As in resolution, we restrict the inferences using a term order $\csuccsym$.
\Cref{def:context:term:order} specifies the conditions that the order must
satisfy. \Cref{cond:term:order:f-term:1,cond:term:order:f-term:2} ensure that
$\F$-terms are compared uniformly across contexts; however, $\P$-terms can be
compared in different ways in different contexts.
\Cref{cond:term:order:f-term:1,cond:term:order:f-term:2,cond:term:order:mon,cond:term:order:subterm}
ensure that, if we ground the order by mapping $x$ to a term $t$ and $y$ to the
predecessor of $t$, we obtain a \emph{simplification order}
\cite{baader98term}---a kind of term order commonly used in equational theorem
proving. Finally, \cref{cond:term:order:PR} ensures that atoms that might be
propagated to a context's predecessor via the \ruleword{Pred} rule are
smallest, which is important for completeness.

\begin{definition}\label{def:context:term:order}
    Let $\funsucc$ be a total, well-founded order on function symbols. A
    \emph{context term order} $\csuccsym$ is an order on context terms
    satisfying the following conditions:
    \begin{enumerate}
        \item\label[condition]{cond:term:order:f-term:1}
        for each ${f \in \F}$, we have ${f(x) \csuccsym x \csuccsym y}$;

        \item\label[condition]{cond:term:order:f-term:2}
        for all ${f,g \in \F}$ with ${f \funsucc g}$, we have ${f(x) \csuccsym
        g(x)}$;

        \item\label[condition]{cond:term:order:mon}
        for all terms $s_1$, $s_2$, and $t$ and each position $p$ in $t$, if
        ${s_1 \csuccsym s_2}$, then ${\replPos{t}{s_1}{p} \csuccsym
        \replPos{t}{s_2}{p}}$;

        \item\label[condition]{cond:term:order:subterm}
        for each term $s$ and each proper position $p$ in $s$, we have ${s
        \csuccsym \atPos{s}{p}}$; and

        \item\label[condition]{cond:term:order:PR}
        for each atom ${A \equals \TOP \in \PRtriggers}$ and each context term
        ${s \not\in \setsingle{x,y}}$, we have ${A \not\csuccsym s}$.
    \end{enumerate}
    Each term order is extended to a literal order, also written $\csuccsym$,
    as described in \cref{sec:preliminaries}.
\end{definition}

A lexicographic path order (LPO) \cite{baader98term} over context $\F$-terms
and context $\P$-terms, in which $x$ and $y$ are treated as constants such that
${x \csuccsym y}$, satisfies
\cref{cond:term:order:f-term:1,cond:term:order:f-term:2,cond:term:order:mon,cond:term:order:subterm}.
Furthermore, $\PRtriggers$ contains only atoms of the form $B(y)$, $S(x,y)$,
and $S(y,x)$, which we can always make smallest in the ordering; thus,
\cref{cond:term:order:PR} does not contradict the other conditions. Hence, an
LPO that is relaxed for \cref{cond:term:order:PR} satisfies
\cref{def:context:term:order}, and thus, for any given $\funsucc$, at least one
context term order exists.

Apart from orders, effective redundancy elimination techniques are critical to
efficiency of resolution calculi. \Cref{def:redundancy} defines a notion
compatible with our setting.

\begin{definition}\label{def:redundancy}
    A set of clauses $U$ \emph{contains a clause} ${\Gamma \rightarrow \Delta}$
    \emph{up to redundancy}, written ${\Gamma \rightarrow \Delta \strin U}$, if
    \begin{enumerate}
        \item\label[condition]{cond:redundancy:tautology}
        ${\{ s \equals s', \; s \nequals s' \} \subseteq \Delta}$ or ${s
        \equals s \in \Delta}$ for some terms $s$ and $s'$, or

        \item\label[condition]{cond:redundancy:strengthening}
        ${\Gamma' \subseteq \Gamma}$ and ${\Delta' \subseteq \Delta}$ for some
        clause ${\Gamma' \rightarrow \Delta' \in U}$.
    \end{enumerate}
\end{definition}

Intuitively, if $U$ contains ${\Gamma \rightarrow \Delta}$ up to redundancy,
then adding ${\Gamma \rightarrow \Delta}$ to $U$ will not modify the
constraints that $U$ represents because either ${\Gamma \rightarrow \Delta}$ is
a tautology or $U$ contains a stronger clause. Note that tautologies of the
form ${A \rightarrow A}$ are \emph{not} redundant in our setting as they are
used to initialise contexts; however, whenever our calculus derives a clause
${A \rightarrow A \vee A'}$, the set of clauses will have been initialised with
${A \rightarrow A}$, which makes the former clause redundant by
\cref{cond:redundancy:strengthening} of \cref{def:redundancy}. Moreover, clause
heads are subjected to the usual tautology elimination rules; thus, clauses
${\gamma \rightarrow \Delta \vee s \equals s}$ and ${\Gamma \rightarrow \Delta
\vee s \equals t \vee s \nequals t}$ can be eliminated.
\Cref{prop:del:redundant} shows that we can remove from $U$ each clause $C$
that is contained in ${U \setminus \setsingle{C}}$ up to redundancy; the
\ruleword{Elim} uses this to support clause subsumption.

\begin{proposition}\label{prop:del:redundant}
    For $U$ a set of clauses and $C$ and $C'$ clauses with ${C \strin U
    \setminus \setsingle{ C }}$ and ${C' \strin U}$, we have ${C' \strin U
    \setminus \setsingle{ C }}$.
\end{proposition}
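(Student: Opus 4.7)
The plan is a case analysis on why $C' \strin U$ holds, following \cref{def:redundancy}.

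First, I would dispatch the easy case: if $C'$ satisfies \cref{cond:redundancy:tautology} of \cref{def:redundancy} (i.e.\ $C'$ is a tautology because its head contains either $s \equals s$ or both $s \equals s'$ and $s \nequals s'$), then this property is intrinsic to $C'$ and does not depend on the surrounding clause set, so $C' \strin U \setminus \setsingle{C}$ immediately.

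Next I would handle the strengthening case, where $C' = \Gamma \rightarrow \Delta$ and there exists some ${\Gamma' \rightarrow \Delta' \in U}$ with ${\Gamma' \subseteq \Gamma}$ and ${\Delta' \subseteq \Delta}$. If $\Gamma' \rightarrow \Delta' \neq C$, then $\Gamma' \rightarrow \Delta' \in U \setminus \setsingle{C}$ and we are done by \cref{cond:redundancy:strengthening}. The only remaining case is that the subsuming clause is $C$ itself, and here I would use the assumption ${C \strin U \setminus \setsingle{C}}$ with a sub-case analysis. If $C$ is redundant by \cref{cond:redundancy:tautology} of \cref{def:redundancy}, then $C$'s head contains either $s \equals s$ or both $s \equals s'$ and $s \nequals s'$; since the head of $C$ is a subset of $\Delta$, the same literal(s) lie in $\Delta$, so $C'$ is itself a tautology and hence in $U \setminus \setsingle{C}$ up to redundancy. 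If instead $C$ is redundant by \cref{cond:redundancy:strengthening}, witnessed by some clause ${\Gamma'' \rightarrow \Delta'' \in U \setminus \setsingle{C}}$, then transitivity of set inclusion gives ${\Gamma'' \subseteq \Gamma}$ and ${\Delta'' \subseteq \Delta}$, so $\Gamma'' \rightarrow \Delta''$ witnesses ${C' \strin U \setminus \setsingle{C}}$.

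The argument is essentially a bookkeeping exercise in unfolding definitions; the only mild subtlety I would watch for is in the tautology sub-case of the nested analysis, making sure that the tautology structure of $C$ really does transfer to $C'$ through subsumption—which works here precisely because subsumption requires the head of the subsumer to be contained in the head of the subsumed clause, so any tautology-witnessing literals survive. I do not expect any serious obstacle.
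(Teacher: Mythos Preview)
Your argument is correct. The paper actually states this proposition without proof, treating it as an easy observation, so there is no paper proof to compare against; your case analysis on \cref{def:redundancy}, with the nested sub-case when the subsuming witness for $C'$ happens to be $C$ itself, is exactly the natural way to verify it, and the transfer of tautology-witnessing literals through head inclusion is handled correctly.
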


We are finally ready to formalise the notion of a context structure, as well as
a notion of context structure soundness. The latter captures the fact that
context clauses from a set $\S{v}$ do not contain $\core{v}$ in their bodies.
We shall later show that our inference rules preserve context structure
soundness, which essentially proves that all clauses derived by our calculus
are indeed conclusions of the ontology in question.

\begin{definition}\label{def:contexts}
    A \emph{context structure} for an ontology $\Onto$ is a tuple ${\D =
    \tuple{\V,\E,\Ssym,\coresym,\csuccsym}}$, where $\V$ is a finite set of
    \emph{contexts}, ${\E \subseteq \V \times \V \times \F}$ is a finite set of
    edges each labelled with a function symbol, function $\coresym$ assigns to
    each context $v$ a conjunction $\core{v}$ of atoms over the $\P$-terms from
    $\SUtriggers$, function $\Ssym$ assigns to each context $v$ a finite set
    $\S{v}$ of context clauses, and function $\csuccsym$ assigns to each
    context $v$ a context term order $\csucc{v}$. A context structure $\D$ is
    \emph{sound} for $\Onto$ if the following conditions both hold.
    \begin{enumerate}[label={S\arabic*.},ref={S\arabic*},leftmargin=2.5em]
        \item\label[condition]{soundness:cond:clauses}
        For each context ${v \in \V}$ and each clause ${\Gamma \rightarrow
        \Delta \in \S{v}}$, we have ${\Onto \models \core{v} \wedge \Gamma
        \rightarrow \Delta}$.

        \item\label[condition]{soundness:cond:edges}
        For each edge ${\tuple{u,v,f} \in \E}$, we have
        \begin{displaymath}
            \Onto \models \core{u} \rightarrow \core{v} \subst{x \mapsto f(x), y \mapsto x}.
        \end{displaymath}
    \end{enumerate}
\end{definition}

\begin{table}[p]
\caption{Rules of the Consequence-Based Calculus}\label{table:rules}
\centering
\begin{tabular}{@{}l@{\;}l@{}}
\hline
\hline

\multicolumn{2}{@{}c@{}}{\textbf{\ruleword{Core} rule}} \\
If      & $A \in \core{v}$, \\
        & and $\top \rightarrow A \notin \S{v}$, \\
then    & add $\top \rightarrow A$ to $\S{v}$. \\
\hline
\hline

\multicolumn{2}{@{}c@{}}{\textbf{\ruleword{Hyper} rule}} \\
If      & $\bigwedge_{i=1}^n A_i \rightarrow \Delta \in \Onto$, \\
        & $\sigma$ is a substitution such that $\sigma(x) = x$, \\
        & $\Gamma_i \rightarrow \Delta_i \vee A_i\sigma \in \S{v}$ s.t.~$\Delta_i \not\csucceq{v} A_i\sigma$ for $1 \leq i \leq n$, \\
        & and $\bigwedge_{i=1}^n \Gamma_i \rightarrow \Delta\sigma \vee \bigvee_{i=1}^n \Delta_i \not\strin \S{v}$, \\
then    & add $\bigwedge_{i=1}^n \Gamma_i \rightarrow \Delta\sigma \vee \bigvee_{i=1}^n \Delta_i$ to $\S{v}$. \\
\hline
\hline

\multicolumn{2}{@{}c@{}}{\textbf{\ruleword{Eq} rule}} \\
If      & $\Gamma_1 \rightarrow \Delta_1 \vee s_1 \equals t_1 \in \S{v}$, \\
        & $s_1 \csucc{v} t_1$ and $\Delta_1 \not\csucceq{v} s_1 \equals t_1$, \\
        & $\Gamma_2 \rightarrow \Delta_2 \vee s_2 \eqorineq t_2 \in \S{v}$ with ${\eqorineq} \in \setsingle{\equals,\not\equals}$, \\
        & $s_2 \csucc{v} t_2$ and $\Delta_2 \not\csucceq{v} s_2 \eqorineq t_2$, \\
        & $\atPos{s_2}{p} = s_1$,\\
        & and $\Gamma_1 \wedge \Gamma_2 \rightarrow \Delta_1 \vee \Delta_2 \vee \replPos{s_2}{t_1}{p} \eqorineq t_2 \not\strin \S{v}$, \\
then    & add $\Gamma_1 \wedge \Gamma_2 \rightarrow \Delta_1 \vee \Delta_2 \vee \replPos{s_2}{t_1}{p} \eqorineq t_2$ to $\S{v}$. \\
\hline
\hline

\multicolumn{2}{@{}c@{}}{\textbf{\ruleword{Ineq} rule}} \\
If      & $\Gamma \rightarrow \Delta \vee t \nequals t \in \S{v}$ \\
        & and $\Gamma \rightarrow \Delta \not\strin \S{v}$, \\
then    & add $\Gamma \rightarrow \Delta$ to $\S{v}$. \\
\hline
\hline

\multicolumn{2}{@{}c@{}}{\textbf{\ruleword{Factor} rule}} \\
If      & $\Gamma \rightarrow \Delta \vee s \equals t \vee s \equals t' \in \S{v}$, \\
        & $\Delta \cup \setsingle{s \equals t} \not\csucceq{v} s \equals t'$ and $s \csucc{v} t'$ \\
        & and $\Gamma \rightarrow \Delta \vee t \nequals t' \vee s \equals t' \not\strin \S{v}$, \\
then    & add $\Gamma \rightarrow \Delta \vee t \nequals t' \vee s \equals t'$ to $\S{v}$. \\
\hline
\hline

\multicolumn{2}{@{}c@{}}{\textbf{\ruleword{Elim} rule}} \\
If      & $\Gamma \rightarrow \Delta \in \S{v}$ and \\
        & $\Gamma \rightarrow \Delta \strin \S{v} \setminus \setsingle{\Gamma \rightarrow \Delta}$ \\
then    & remove $\Gamma \rightarrow \Delta$ from $\S{v}$. \\
\hline
\hline

\multicolumn{2}{@{}c@{}}{\textbf{\ruleword{Pred} rule}} \\
If      & $\tuple{u,v,f} \in \E$, \\
        & $\bigwedge_{i=1}^l A_i \rightarrow \bigvee_{i=l+1}^{l+n} A_i \in \S{v}$, \\
        & $\Gamma_i \rightarrow \Delta_i \vee A_i\sigma \in \S{u}$ s.t.~$\Delta_i \not\csucceq{u} A_i\sigma$ for $1 \leq i \leq l$, \\
        & $A_i \in \PRtriggers$ for each $l+1 \leq i \leq l+n$, \\
        & and $\bigwedge_{i=1}^l \Gamma_i \rightarrow \bigvee_{i=1}^l \Delta_i \vee \bigvee_{i=l+1}^{l+n} A_i\sigma \not\strin \S{u}$, \\
then    & add $\bigwedge_{i=1}^l \Gamma_i \rightarrow \bigvee_{i=1}^l \Delta_i \vee \bigvee_{i=l+1}^{l+n} A_i\sigma$ to $\S{u}$, \\
where   & $\sigma = \subst{x \mapsto f(x), y \mapsto x}$. \\
\hline
\hline

\multicolumn{2}{@{}c@{}}{\textbf{\ruleword{Succ} rule}} \\
If      & $\Gamma \rightarrow \Delta \vee A \in \S{u}$ s.t.~$\Delta \not\csucceq{u} A$ and $A$ contains $f(x)$, \\
        & and, for each $A' \in K_2 \setminus \core{v}$, no edge $\tuple{u,v,f} \in \E$ \\
        & exists such that $A' \rightarrow A' \strin \S{v}$, \\
then    & let $\tuple{v,\coresym',\csuccsym'} \defeq \strategy(f, K_1, \D)$; \\
        & if $v \in \V$, then let ${\csucc{v}} \defeq {\csucc{v}} \cap {\csuccsym'}$, and \\
        & otherwise let $\V \defeq \V \cup \setsingle{v}$, \quad ${\csucc{v}} \defeq {\csuccsym'}$, \\
        & \phantom{otherwise let} $\core{v} \defeq {\coresym'}$, \quad and $\S{v} \defeq \emptyset$; \\
        & add the edge $\tuple{u,v,f}$ to $\E$; and \\
        & add $A' \rightarrow A'$ to $\S{v}$ for each $A' \in K_2 \setminus \core{v}$; \\
where   & $\sigma = \subst{x \mapsto f(x), y \mapsto x}$, \\
        & $K_1 = \setbuilder{A' \in \SUtriggers}{\top \rightarrow A'\sigma \in \S{u}}$, and \\
        & $K_2 = \{\,A' \in \SUtriggers \mid \Gamma' \rightarrow \Delta' \vee A'\sigma \in \S{u}$ and \\
        & \hspace{3cm} $\Delta' \not\csucceq{u} A'\sigma\,\}$. \\
\hline
\hline
\end{tabular}
\end{table}

\Cref{def:strategy} introduces an expansion strategy---a parameter of our
calculus that determines when and how to reuse contexts in order to satisfy
existential restrictions.

\begin{definition}\label{def:strategy}
    An \emph{expansion strategy} is a function $\strategy$ that takes a
    function symbol $f$, a set of atoms $K$, and a context structure ${\D =
    \tuple{\V,\E,\Ssym,\coresym,\csuccsym}}$. The result of $\strategy(f,
    K,\D)$ is computable in polynomial time and it is a triple
    $\tuple{v,\coresym',\csuccsym'}$ where $\coresym'$ is a subset of $K$;
    either ${v \notin \V}$ is a fresh context, or ${v \in \V}$ is an existing
    context in $\D$ such that ${\core{v} = \coresym'}$; and $\csuccsym'$ is a
    context term order.
\end{definition}

\citeA{smh14consequence-FPT} presented two basic strategies, which we can adapt
to our setting as follows.
\begin{itemize}
    \item The \emph{eager} strategy returns for each $K_1$ the context
    $v_{K_1}$ with core $K_1$. The `kind' of ground terms that $v_{K_1}$
    represents is then very specific so the set $\S{v_{K_1}}$ is likely to be
    smaller, but the number of contexts can be exponential.
    
    \item The \emph{cautious} strategy examines the function symbol $f$: if $f$
    occurs in $\Onto$ in exactly one atom of the form $B(f(x))$ and if ${B(x)
    \in K_1}$, then the result is the context $v_{B(x)}$ with core $B(x)$;
    otherwise, the result is the `trivial' context $v_\top$ with the empty
    core. Context $v_{B(x)}$ is then less constrained, but the number of
    contexts is at most linear.
\end{itemize}
\citeA{smh14consequence-FPT} discuss extensively the differences between and
the relative merits of the two strategies; although their discussion deals with
\ALCI only, their conclusions apply to \SRIQ as well.

We are now ready to show soundness and completeness.

\begin{restatable}[Soundness]{theorem}{thmsoundness}\label{theorem:soundness}
    For any expansion strategy, applying an inference rule from
    \cref{table:rules} to an ontology $\Onto$ and a context structure $\D$ that
    is sound for $\Onto$ produces a context structure that is sound for $\Onto$.
\end{restatable}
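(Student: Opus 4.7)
The plan is to proceed by case analysis on which of the eight rules in \cref{table:rules} is applied, verifying that both \cref{soundness:cond:clauses,soundness:cond:edges} continue to hold in the resulting context structure. Since only \ruleword{Succ} alters $\E$ or introduces new contexts (thereby potentially creating situations requiring \cref{soundness:cond:edges} to be re-checked) and \ruleword{Elim} only deletes clauses, the work for each other rule reduces to showing that the single newly-added clause satisfies the entailment obligation from \cref{soundness:cond:clauses}; that the term orders $\csucc{v}$ are irrelevant to soundness follows because \cref{def:contexts} does not refer to them.

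The \ruleword{Core} rule is immediate because $A \in \core{v}$ implies $\Onto \models \core{v} \rightarrow A$. For \ruleword{Ineq}, strengthening $\Delta \vee t \nequals t$ to $\Delta$ preserves entailment since $t \nequals t$ is unsatisfiable. \ruleword{Factor} is standard equality factoring: reasoning in a model of $\core{v} \wedge \Gamma$, if $s \equals t'$ fails then $s \equals t$ must hold by the premise, forcing $t \nequals t'$ on pain of deriving $s \equals t'$. \ruleword{Eq} is ordered paramodulation: in a model of $\core{v} \wedge \Gamma_1 \wedge \Gamma_2$ where neither $\Delta_1$ nor $\Delta_2$ holds, we have both $s_1 \equals t_1$ and $s_2 \eqorineq t_2$, and congruence gives $\replPos{s_2}{t_1}{p} \eqorineq t_2$. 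For \ruleword{Hyper}, the DL-clause is universally quantified, so $\Onto \models \bigwedge A_i\sigma \rightarrow \Delta\sigma$ after instantiation; combining this with the $n$ premises $\core{v} \wedge \Gamma_i \models \Delta_i \vee A_i\sigma$ yields the required entailment by a standard resolution argument. \ruleword{Elim} preserves both conditions trivially since no entailment is added, and the remaining clauses in $\S{v}$ retain their semantic obligations.

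The substantive cases are \ruleword{Pred} and \ruleword{Succ}. For \ruleword{Pred}, \cref{soundness:cond:edges} applied to the edge $\tuple{u,v,f}$ gives $\Onto \models \core{u} \rightarrow \core{v}\sigma$ for $\sigma = \subst{x \mapsto f(x), y \mapsto x}$; applying $\sigma$ to the clause $\bigwedge A_i \rightarrow \bigvee A_i \in \S{v}$ (which by \cref{soundness:cond:clauses} is entailed from $\core{v}$) yields $\Onto \models \core{u} \wedge \bigwedge A_i\sigma \rightarrow \bigvee A_i\sigma$, and hyperresolving on the $l$ body atoms with the premises from $\S{u}$ gives exactly the added clause. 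For \ruleword{Succ}, the only new clauses in $\S{v}$ are tautologies of the form $A' \rightarrow A'$, which trivially satisfy \cref{soundness:cond:clauses}; modifying $\csucc{v}$ to $\csucc{v} \cap \csuccsym'$ is semantically inert. The obligation for the new edge $\tuple{u,v,f}$ follows because \cref{def:strategy} guarantees $\core{v} \subseteq K_1$, and each $A' \in K_1$ satisfies $\top \rightarrow A'\sigma \in \S{u}$, so the soundness of $\S{u}$ gives $\Onto \models \core{u} \rightarrow A'\sigma$ for every $A' \in \core{v}$, hence $\Onto \models \core{u} \rightarrow \core{v}\sigma$ as required. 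The main obstacle is the bookkeeping for \ruleword{Succ}, in particular checking that the strategy's constraint $\coresym' = \core{v}$ when $v \in \V$ means we are not silently strengthening the core of an existing context, so that soundness of its pre-existing clauses remains intact.
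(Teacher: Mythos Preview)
Your argument matches the paper's proof almost exactly: the same rule-by-rule case analysis, the same use of hyperresolution soundness for \ruleword{Hyper} and \ruleword{Pred}, the same appeal to ${\core{v} \subseteq K_1}$ (guaranteed by \cref{def:strategy}) together with the soundness of $\S{u}$ to discharge \cref{soundness:cond:edges} for the new edge in \ruleword{Succ}, and the same observation that the tautologies $A' \rightarrow A'$ and the order intersection are semantically harmless.

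There is one omission. The theorem claims the result is a \emph{context structure} that is sound for $\Onto$, and by \cref{def:contexts} each $\S{v}$ must contain only context clauses in the sense of \cref{def:context-terms}. The paper therefore also verifies, for each rule that adds a clause, that the conclusion is syntactically a context clause. This is not automatic: for \ruleword{Hyper} one must argue that the constraint $\sigma(x)=x$ forces each $\sigma(z_i)$ to be $y$ or some $f(x)$ (so $\Delta\sigma$ stays within the allowed shapes), and for \ruleword{Eq} one must check that the rewrite $\replPos{s_2}{t_1}{p}$ cannot produce a term outside the context-term grammar (e.g., no $B(y)$-into-$f(x)$ collisions or nested function terms). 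For \ruleword{Pred} one similarly needs that each $A_i\sigma$ with $A_i \in \PRtriggers$ and $\sigma = \subst{x \mapsto f(x), y \mapsto x}$ is a context atom. These checks are routine but not vacuous, and without them you have established the entailment obligations \ref{soundness:cond:clauses} and \ref{soundness:cond:edges} but not that the resulting object is a context structure at all.
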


\begin{restatable}[Completeness]{theorem}{thmcompleteness}\label{theorem:completeness}
    Let $\Onto$ be an ontology, and let ${\D =
    \tuple{\V,\E,\Ssym,\coresym,\csuccsym}}$ be a context structure such that
    no inference rule from \cref{table:rules} is applicable to $\Onto$ and
    $\D$. Then, ${\InputQueryLHS \rightarrow \InputQueryRHS \strin \S{q}}$
    holds for each query clause ${\InputQueryLHS \rightarrow \InputQueryRHS}$
    and each context ${q \in \V}$ that satisfy conditions
    \ref{theorem:completeness:query}--\ref{theorem:completeness:LHS}.
    \begin{enumerate}[label={C\arabic*.},ref={C\arabic*},leftmargin=2.5em]
        \item\label{theorem:completeness:query}
        ${\Onto \models \InputQueryLHS \rightarrow \InputQueryRHS}$.

        \item\label{theorem:completeness:RHS}
        For each atom ${A \equals \TOP \in \InputQueryRHS}$ and each context
        term ${s \not\in \setsingle{x,y}}$, if ${A \csucc{q} s}$, then ${s
        \equals \TOP \in \InputQueryRHS \cup \PRtriggers}$.

        \item\label{theorem:completeness:LHS}
        For each ${A \in \InputQueryLHS}$, we have ${\InputQueryLHS \rightarrow
        A \strin \S{q}}$.
    \end{enumerate}
\end{restatable}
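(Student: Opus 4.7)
The plan is to prove the theorem by contradiction: assume ${\InputQueryLHS \rightarrow \InputQueryRHS \not\strin \S{q}}$ while conditions \ref{theorem:completeness:query}--\ref{theorem:completeness:LHS} hold, and construct a Herbrand equality interpretation $I$ that models $\Onto$ and $\InputQueryLHS$ but falsifies $\InputQueryRHS$, thereby contradicting \ref{theorem:completeness:query}. This is the standard Bachmair--Ganzinger model-functor approach for ordered paramodulation, adapted so that $I$ is assembled from per-context pieces that are glued together along the edges of $\D$.

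The construction proceeds in three stages. First, I unfold $\D$ into a tree of ground terms rooted at a fresh constant $a$ associated with context $q$: whenever a context $v$ has been associated with a ground term $t$ and ${\tuple{v,w,f} \in \E}$, the term $f(t)$ is associated with $w$. For each such $t$ with associated context $v$, every clause in $\S{v}$ is grounded by the substitution ${\subst{x \mapsto t, \; y \mapsto t'}}$ where $t'$ is the parent of $t$ in the tree (and $t' = a$ when $t = a$); soundness condition \ref{soundness:cond:clauses} makes the resulting ground clauses consequences of $\Onto$. Second, I extend the per-context orders $\csucc{v}$ to a single ground order using \cref{cond:term:order:f-term:1,cond:term:order:f-term:2,cond:term:order:mon,cond:term:order:subterm} so that the extension is a simplification order agreeing with each $\csucc{v}$ on the ground instances relevant to $v$. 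Third, I build, by the Bachmair--Ganzinger model functor, a convergent ground rewrite system $R$: ground clauses are processed in order of their maximal literal, and a clause produces a rewrite rule ${s \to t}$ in $R$ whenever its maximal literal is ${s \equals t}$ with ${s \succ t}$ and no smaller head literal has already been satisfied by the rules generated so far. The Herbrand interpretation $I$ induced by $R$-normal forms is the candidate model.

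The crux is then to show that $I$ satisfies $\Onto$, and that $I \models \InputQueryLHS$ but ${I \not\models \InputQueryRHS}$. The latter relies on \ref{theorem:completeness:LHS} to populate $\InputQueryLHS$ at $a$, and on \ref{theorem:completeness:RHS}---whose shape deliberately mirrors \cref{cond:term:order:PR}---to argue that the grounding of $\InputQueryRHS$ at $a$ contains only minimal atoms, so leaving them false does not force any spurious productive rules. The main obstacle is showing that $I \models \Onto$. The plan is the usual minimal counterexample argument: if some ground instance of a clause in $\Onto$ or in some $\S{v}$ is falsified by $I$, pick one that is minimal in the multiset extension of the ground literal order, and show that such an instance must trigger an applicable rule from \cref{table:rules}. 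Depending on where the violation occurs, the applicable rule is \ruleword{Hyper}, \ruleword{Eq}, \ruleword{Ineq}, or \ruleword{Factor} inside a single context, or \ruleword{Pred} or \ruleword{Succ} across an edge; in each case saturation is contradicted.

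The most delicate part is the cross-context bookkeeping. Because \ruleword{Hyper} insists that $\sigma(x) = x$, all information about a neighbouring ground term must be transferred via \ruleword{Succ} in the forward direction and \ruleword{Pred} in the backward direction, and this transfer is filtered by $\SUtriggers$ and $\PRtriggers$. The heart of the argument will therefore be two propagation lemmas: any atom about a successor that participates in the minimal counterexample has already been seeded into the successor context through the set $K_2$ computed by \ruleword{Succ}, and any atom about a predecessor that is needed there is already the head of a \ruleword{Pred} conclusion. \cref{cond:term:order:PR} is what makes the predecessor side work: by forcing $\PRtriggers$-atoms to be minimal in each $\csucc{v}$, it guarantees that a productive equation whose head mentions a predecessor trigger would violate saturation of \ruleword{Pred}. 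Once these two lemmas are in hand, the three-stage construction assembles without conflict and yields the required model, contradicting \ref{theorem:completeness:query}.
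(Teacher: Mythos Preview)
Your high-level strategy matches the paper's: argue the contrapositive, unfold $\D$ from a fresh constant associated with $q$, build a convergent rewrite system via the Bachmair--Ganzinger model functor, and obtain a countermodel to \ref{theorem:completeness:query}. The minimal-counterexample analysis you sketch for \ruleword{Hyper}, \ruleword{Eq}, \ruleword{Ineq}, \ruleword{Factor}, \ruleword{Pred}, and \ruleword{Succ} is also what the paper does.

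Where you diverge is in building \emph{one} global rewrite system under \emph{one} global ground order. The paper instead constructs, for each ground term $t$ reached in the unfolding, a \emph{local} model fragment $\Model{t}$ using a local order $\gsucc{t}$ compatible with $\csucc{\VertexMap{t}}$, and only afterwards takes the union $\EntireModel = \bigcup_t \Model{t}$. This is not a cosmetic difference. Your step~2 assumes the per-context orders $\csucc{v}$ can be extended to a single ground simplification order, but in general they cannot: the same ground $\P$-atom, say $B(t')$, arises both as the grounding of the predecessor-trigger $B(y)$ at $t$ (where \cref{cond:term:order:PR} forces it to be below every nontrivial term) and as the grounding of $B(x)$ at $t'$ (where $\csucc{\VertexMap{t'}}$ may place it above other atoms). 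The paper's local construction sidesteps this conflict entirely---each $\gsucc{t}$ only has to be coherent on the neighbourhood of $t$---and then a separate compatibility lemma shows that atoms in $\GroundSu{t}\cup\GroundPr{f(t)}$ evaluate the same way in $\Star{\Model{t}}$ as in $\Star{\EntireModel}$. A single global LPO \emph{is} used, but only to prove that $\EntireModel$ terminates, and there it suffices that it agrees with every $\gsucc{t}$ on $\F$-terms; the $\P$-term ordering is allowed to differ.

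Your unfolding is also stated backwards. You follow every edge $\tuple{v,w,f}\in\E$, but an edge need not exist for every $f$, and when one does exist there may be several with the same label. The paper drives the unfolding by the model fragment already built: it moves to $f(t')$ only when $f(t')$ actually occurs in $\Model{t'}$ (which, via a generative clause and saturation of \ruleword{Succ}, guarantees an appropriate edge exists and lets you \emph{choose} one satisfying the $K_2$ condition); when $f(t')$ does not occur, it simply adds the rule $f(t')\RuleSymbol c$ and stops. Without this case analysis you cannot argue that the \ruleword{Succ} preconditions are met, nor that $\EntireModel$ is left-reduced.
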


Conditions~\ref{theorem:completeness:RHS} and~\ref{theorem:completeness:LHS}
can be satisfied by appropriately initialising the corresponding context.
Hence, \cref{theorem:soundness,theorem:completeness} show that the following
algorithm is sound and complete for deciding ${\Onto \models \InputQueryLHS
\rightarrow \InputQueryRHS}$.
\begin{enumerate}[label={A\arabic*.},ref={A\arabic*},leftmargin=2.5em]
    \item\label{alg:initialize-D}
    Create an empty context structure $\D$ and select an expansion strategy.
    
    \item\label{alg:initialize-q}
    Introduce a context $q$ into $\D$; set ${\core{q} = \InputQueryLHS}$; for
    each ${A \in \InputQueryLHS}$, add ${\top \rightarrow A}$ to $\S{q}$ to
    satisfy condition \ref{theorem:completeness:LHS}; and initialise $\succ_q$
    in a way that satisfies condition \ref{theorem:completeness:RHS}.

    \item\label{alg:apply-rules}
    Apply the inference rules from \cref{table:rules} to $\D$ and $\Onto$.

    \item\label{alg:read}
    ${\InputQueryLHS \rightarrow \InputQueryRHS}$ holds \iff ${\InputQueryLHS
    \rightarrow \InputQueryRHS \strin \S{v}}$.
\end{enumerate}

\Cref{prop:complexity,prop:worst-case-el} show that our calculus is worst-case
optimal for both \ALCHIQ and \ELH.

\begin{restatable}{proposition}{propcomplexity}\label{prop:complexity}
    For each expansion strategy that introduces at most exponentially many
    contexts, algorithm \ref{alg:initialize-D}--\ref{alg:read} runs in
    worst-case exponential time.
\end{restatable}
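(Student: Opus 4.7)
The plan is to bound, independently, (a) the size of the saturated context structure and (b) the number of inference-rule firings leading to it, showing that each is at most exponential in $|\Onto|$. For (a), I rely on \cref{def:context-terms}: the set of all context $\F$-terms is $\setsingle{x,y} \cup \setbuilder{f(x)}{f \in \F}$, which is of polynomial size, and the seven allowed shapes for context $\P$-terms likewise yield polynomially many context atoms. The context literals (including equalities and inequalities among $\F$-terms) also form a polynomial-size set. Hence there is a polynomial $p$ such that the number of syntactically distinct context clauses is at most $2^{p(|\Onto|)}$. Combined with the hypothesis that $\strategy$ creates at most exponentially many contexts, the total number of ${\tuple{v, C}}$ pairs that can ever simultaneously populate the structure is exponential.

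For (b), I would show that each clause can be added to any given $\S{v}$ at most once throughout the run. Whenever \ruleword{Elim} removes a clause $C$ from $\S{v}$, it does so because ${C \strin \S{v} \setminus \setsingle{C}}$, i.e., some member of $\S{v}$ witnesses the redundancy of $C$. By \cref{prop:del:redundant}, this redundancy is preserved under any further \ruleword{Elim} deletions; and no add-rule can undo it either, since every add-rule requires the new clause to not be contained up to redundancy in $\S{v}$. Thus $C$ stays redundant forever after its removal, and the precondition of each add-rule blocks re-insertion. Hence the number of additions, and symmetrically of deletions, per context is bounded by the exponential number of distinct clauses; summing over the exponentially many contexts yields an exponential bound on the total number of rule firings.

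Finally, I would check that each individual rule invocation runs in time exponential in $|\Onto|$: the matching searches in \ruleword{Hyper}, \ruleword{Pred}, \ruleword{Eq}, \ruleword{Factor}, and \ruleword{Ineq} range over at most exponentially many clauses whose bodies are of polynomial width, and the substitutions and paramodulation positions considered are polynomially many; the \ruleword{Succ} rule additionally queries $\strategy$, whose output is computable in polynomial time by \cref{def:strategy}. Multiplying the exponential per-step cost by the exponential bound on the number of steps gives the claimed overall exponential time bound. The main technical subtlety I would need to get right is the no-re-addition argument: it requires that the relation $\strin$ be monotone under deletions in a sufficiently strong sense, which is exactly the content of \cref{prop:del:redundant}, so most of the real work of the proof has already been done.
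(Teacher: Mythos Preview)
Your proposal is correct and in fact more carefully worked out than the paper's own proof. Both arguments rest on the same two observations: the number of syntactically distinct context clauses is at most exponential in $|\Onto|$, and the number of contexts is at most exponential by hypothesis. Where you and the paper diverge is in how the rule firings are counted. The paper simply observes that each inference has at most $m$ premises with $m$ linear in $|\Onto|$, and so bounds the number of inference \emph{instances} (premise tuples) by $(k\cdot\wp)^m$, which is exponential. You instead bound the number of clause \emph{additions} directly: at most one addition of each clause to each $\S{v}$, giving $k\cdot 2^{p(|\Onto|)}$ add-rule firings and hence also at most that many \ruleword{Elim} firings. Your route has the advantage of explicitly dealing with the interaction between \ruleword{Elim} and the add-rules: the paper's counting of premise tuples tacitly assumes each tuple fires at most once, which is not immediate when \ruleword{Elim} can delete conclusions, whereas your no-re-addition argument via \cref{prop:del:redundant} and monotonicity of $\strin$ closes that gap cleanly. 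The paper's route is shorter and yields the same asymptotic bound without needing the persistence-of-redundancy lemma, but at the cost of leaving that subtlety implicit.
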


\begin{restatable}{proposition}{propelh}\label{prop:worst-case-el}
    For \ELH ontologies and queries of the form ${B_1(x) \rightarrow B_2(x)}$,
    algorithm \ref{alg:initialize-D}--\ref{alg:read} runs in polynomial time
    with either the cautious or the eager strategy; and with the cautious
    strategy and the \ruleword{Hyper} rule applied eagerly, the inferences in
    step \ref{alg:apply-rules} correspond directly to the inferences of the
    \ELH calculus by~\citeA{babl05}.
\end{restatable}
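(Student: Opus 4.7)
I would split the proof into the polynomial running-time bound and the structural simulation of the \ELH calculus of \citeA{babl05}. The first step is to restrict attention to the shape of \ELH DL-clauses: as noted at the end of \cref{sec:preliminaries}, these are DL1 with $m=n+1$, DL2 with $n=1$, DL3, and DL5. In particular no DL-clause head contains an equality, an inequality, or more than one literal, and every variable in a body is either $x$ or a single $z_1$. A straightforward induction on rule applications then shows that every context clause ever added to any $\S{v}$ has the form $\top \rightarrow A$ or $A \rightarrow A$ (the latter only arising from core initialization in \ruleword{Core} and \ruleword{Succ}). Consequently \ruleword{Eq}, \ruleword{Ineq}, and \ruleword{Factor} are never applicable, \ruleword{Hyper} collapses to standard Horn forward chaining that keeps $x$ fixed, and \ruleword{Pred} only propagates atoms of the form $\top \rightarrow B(y)$ backwards as $\top \rightarrow B(x)$.

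For the polynomial bound under the cautious strategy, I would observe that every function symbol $f$ originates from a unique axiom $B_1 \sqsubseteq \exists S.B_2$, so $f$ occurs in exactly one atom $B_2(f(x))$ in $\Onto$. The strategy therefore returns either $v_{B_2(x)}$ or the trivial $v_\top$, so $|\V|$ is bounded by $|\P_1|+1$. For the eager strategy, I would argue that the only context $\P$-terms involving $f(x)$ that are ever derivable as $\top \rightarrow A$ in any context are $B_2(f(x))$, together with $S'(x,f(x))$ for $S'$ in the role-hierarchy closure of $S$ (from DL2 and DL5). Hence the set $K_1$ computed by \ruleword{Succ} when it fires on $f$ is drawn from a polynomially bounded family of candidates depending only on $f$, so $|\V|$ remains polynomial. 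Each $\S{v}$ then contains only unit clauses with head atoms over $\{x, y, f(x)\}$ for the $f$'s labelling outgoing edges of $v$, giving $|\S{v}|$ polynomial in $|\Onto|$. Since every rule application either adds a new clause (blocked once $\strin$ holds) or a new edge (bounded by $|\V|^2 \cdot |\F|$), the total number of rule firings, and hence the total running time, is polynomial.

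For the correspondence with the BBL calculus, I would work with the cautious strategy and assume \ruleword{Hyper} is applied eagerly. The simulation maps context $v_{B(x)}$ to the completion set $S(B)$, the trivial $v_\top$ to $S(\top)$, and the presence of an $f$-labelled edge $\tuple{u, v_{B_2(x)}, f}$ together with the derived fact $\top \rightarrow S(x,f(x)) \in \S{u}$ to the insertion of the role edge labelled $S$ in the BBL completion graph. I would then verify each BBL completion rule against our rules: CR1 (conjunction of subsumptions) is realized by \ruleword{Hyper} on DL1; CR2 (existential axiom) by \ruleword{Hyper} on the two DL2 clauses for $f$ followed by \ruleword{Succ}; CR3/CR4 (propagation through existentials) by \ruleword{Hyper} on DL3 inside the successor context combined with \ruleword{Pred} back to the predecessor; and CR5 (role hierarchy) by \ruleword{Hyper} on DL5. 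The converse direction is an invariant argument: every unit clause $\top \rightarrow B'(x)$ our calculus derives in $v_{B(x)}$ corresponds to an entry $B' \in S(B)$ produced by BBL.

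The main obstacle is the eager-strategy polynomial bound, because a priori \ruleword{Succ} can be invoked with an ever-growing $K_1$ as new unit facts about $f(x)$ are produced in the same context, potentially yielding many distinct cores for the same $f$. The key is the observation above that in \ELH the derivable atoms on $f(x)$ in any context are determined by a short chain of DL2/DL5 inferences from the unique axiom introducing $f$, so the set of possible eager cores for each function symbol is small; formalizing this structural lemma and using it to control the number of \ruleword{Succ} firings is where the technical work concentrates.
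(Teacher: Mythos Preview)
Your structural invariant is too tight and is in fact false under the cautious strategy. You claim that every derived context clause has the form $\top \rightarrow A$ or $A \rightarrow A$, but consider what happens when \ruleword{Succ} creates a successor context $v$ with the cautious strategy: the core is just $\{B(x)\}$, so for $S(y,x)\in\SUtriggers$ (which it is, since DL3 has $S(z_1,x)$ in its body) the rule adds the tautology $S(y,x)\rightarrow S(y,x)$ to $\S{v}$, not $\top\rightarrow S(y,x)$. Now apply \ruleword{Hyper} to the DL3 clause $S(z_1,x)\wedge B_1(x)\rightarrow B_2(z_1)$ with $\sigma(z_1)=y$, using the premises $S(y,x)\rightarrow S(y,x)$ and $\top\rightarrow B_1(x)$. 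The result is $S(y,x)\rightarrow B_2(y)$, which is neither $\top\rightarrow A$ nor $A\rightarrow A$. The same happens with DL5, producing $S_1(y,x)\rightarrow S_2(y,x)$. The paper's proof accordingly tracks five clause shapes for the cautious strategy (and two further ones for the eager strategy), not two.

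This mis-invariant propagates. Your description of \ruleword{Pred} as ``only propagates atoms of the form $\top\rightarrow B(y)$ backwards as $\top\rightarrow B(x)$'' is wrong for the cautious case: there \ruleword{Pred} acts on a clause such as $S(y,x)\rightarrow B_2(y)$ in $\S{v}$, resolves the body atom $S(y,x)$ against $\top\rightarrow S(x,f(x))\in\S{u}$ via $\sigma=\{x\mapsto f(x),\,y\mapsto x\}$, and only then outputs $\top\rightarrow B_2(x)$ in $\S{u}$. This is precisely the step that corresponds to the BBL rule handling $\exists S.B_1\sqsubseteq B_2$; your sketch of the simulation (``CR3/CR4 \dots by \ruleword{Hyper} on DL3 inside the successor context combined with \ruleword{Pred}'') has the right shape but would not go through with your stated clause invariant, and similarly for DL5. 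For the eager strategy your picture is closer to correct because $S(y,x)$ can belong to the core, but you do not separate the two cases. The polynomial bound still survives once you enlarge the invariant, since the additional clause shapes are still polynomially many; the paper bounds the eager-strategy contexts directly by observing that every core has the form $B(x)$, $S(y,x)$, or $B(x)\wedge S(y,x)$, which is simpler than your proposed analysis of growing $K_1$.
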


\subsection{An Outline of the Completeness Proof}\label{sec:algorithm:proof-outline}

To prove \cref{theorem:completeness}, we fix an ontology $\Onto$, a context
structure $\D$, a query clause ${\InputQueryLHS \rightarrow \InputQueryRHS}$,
and a context $q$ such that properties \ref{theorem:completeness:RHS} and
\ref{theorem:completeness:LHS} of \cref{theorem:completeness} are satisfied and
${\InputQueryLHS \rightarrow \InputQueryRHS \not\strin \S{q}}$ holds, and we
construct a Herbrand interpretation that satisfies $\Onto$ but refutes
${\InputQueryLHS \rightarrow \InputQueryRHS}$. We reuse techniques from
equational theorem proving \cite{nieuwenhuis95theorem} and represent this
interpretation by a \emph{rewrite system} $\EntireModel$---a finite set of
rules of the form ${l \RuleSymbol r}$. Intuitively, such a rule says that that
any two terms of the form ${f_1(\dots f_n(l)\dots)}$ and ${f_1(\dots
f_n(r)\dots)}$ with ${n \geq 0}$ are equal, and that we can prove this equality
in one step by rewriting (i.e., replacing) $l$ with $r$. Rewrite system
$\EntireModel$ induces a Herbrand equality interpretation $\Star{\EntireModel}$
that contains each ${l \equals r}$ for which the equality between $l$ and $r$
can be verified using a finite number of such rewrite steps. The universe of
$\Star{\EntireModel}$ consists of $\F$- and $\P$-terms constructed using the
symbols in $\F$ and $\P$, and a special constant $c$; for convenience, let $\T$
be the set of all $\F$-terms from this universe.

We obtain $\EntireModel$ by unfolding the context structure $\D$ starting from
context $q$: we map each $\F$-term ${t \in \T}$ to a context $\VertexMap{t}$ in
$\D$, and we use the clauses in $\S{\VertexMap{t}}$ to construct a model
fragment $\Model{t}$---the part of $\EntireModel$ that satisfies the DL-clauses
of $\Onto$ when $x$ is mapped to $t$. The key issue is to ensure compatibility
between adjacent model fragments: when moving from a \emph{predecessor} term
$t'$ to a \emph{successor} term ${t = f(t')}$, we must ensure that adding
$\Model{t}$ to $\Model{t'}$ does not affect the truth of the DL-clauses of
$\Onto$ at term $t'$; in other words, the model fragment constructed at $t$
must respect the choices made at $t'$. We represent these choices by a ground
clause ${\QueryLHS{t} \rightarrow \QueryRHS{t}}$: conjunction $\QueryLHS{t}$
contains atoms that are `inherited' from $t'$ and so must hold at $t$, and
disjunction $\QueryRHS{t}$ contains atoms that must not hold at $t$ because
$t'$ relies on their absence.

The model fragment construction takes as parameters a term $t$, a context ${v =
\VertexMap{t}}$, and a clause ${\QueryLHS{t} \rightarrow \QueryRHS{t}}$. Let
$\GroundS{t}$ be the set of ground clauses obtained from $\S{v}$ by mapping $x$
to $t$ and $y$ to the predecessor of $t$ (if it exists), and whose body is
contained in $\QueryLHS{t}$. Moreover, let $\GroundSu{t}$ and $\GroundPr{t}$ be
obtained from $\SUtriggers$ and $\PRtriggers$ by mapping $x$ to $t$ and $y$ to
the predecessor of $t$ if one exists; thus, $\GroundSu{t}$ contains the ground
atoms of interest to the successors of $t$, and $\GroundPr{t}$ contains the
ground atoms of interest to the predecessor of $t$. The model fragment for $t$
can be constructed if properties \ref{cond:local:query}--\ref{cond:local:LHS}
hold:
\begin{enumerate}[label={L\arabic*.},ref={L\arabic*},leftmargin=2.5em]
    \item\label[condition]{cond:local:query}
    ${\QueryLHS{t} \rightarrow \QueryRHS{t} \not\strin \GroundS{t}}$.

    \item\label[condition]{cond:local:RHS}
    If ${t = c}$, then ${\QueryRHS{t} = \InputQueryRHS}$; and if ${t \neq c}$,
    then ${\QueryRHS{t} \subseteq \GroundPr{t}}$.

    \item\label[condition]{cond:local:LHS}
    For each ${A \in \QueryLHS{t}}$, we have ${\QueryLHS{t} \rightarrow A
    \strin \GroundS{t}}$.
\end{enumerate}
The construction produces a rewrite system $\Model{t}$ such that
\begin{enumerate}[label={F\arabic*.},ref={F\arabic*},leftmargin=2.5em]
    \item\label[condition]{cond:local:satisfies-clauses}
    ${\mdl{\Model{t}} \GroundS{t}}$, and

    \item\label[condition]{cond:local:not-satisfies-query}
    ${\nmdl{\Model{t}} \QueryLHS{t} \rightarrow \QueryRHS{t}}$---that is, all
    of $\QueryLHS{t}$, but none of $\QueryRHS{t}$ hold in $\Star{\Model{t}}$,
    and so the model fragment at $t$ is compatible with the `inherited'
    constraints.
\end{enumerate}
We construct rewrite system $\Model{t}$ by adapting the techniques from
paramodulation-based theorem proving. First, we order all clauses in
$\GroundS{t}$ into a sequence ${C^i = \Gamma^i \rightarrow \Delta^i \vee L^i}$,
${1 \leq i \leq n}$, that is compatible with the context ordering $\csucc{v}$
in a particular way. Next, we initialise $\Model{t}$ to $\emptyset$, and then
we examine each clause $C^i$ in this sequence; if $C^i$ does not hold in the
model constructed thus far, we make the clause true by adding $L^i$ to
$\Model{t}$. To prove \cref{cond:local:satisfies-clauses}, we assume for the
sake of a contradiction that a clause $C^i$ with smallest $i$ exists such that
${\nmdl{\Model{t}} C^i}$, and we show that an application of the \ruleword{Eq},
\ruleword{Ineq}, or \ruleword{Factor} rule to $C^i$ necessarily produces a
clause $C^j$ such that ${\nmdl{\Model{t}} C^j}$ and ${j < i}$.
\Cref{cond:local:query,cond:local:RHS,cond:local:LHS} allow us to satisfy
\cref{cond:local:not-satisfies-query}. Due to \cref{cond:local:RHS} and
\cref{cond:term:order:PR} of \cref{def:context:term:order}, we can order the
clauses in the sequence such that each clause $C^i$ capable of producing an
atom from $\QueryRHS{t}$ comes before any other clause in the sequence; and
then we use \cref{cond:local:query} to show that no such clause actually
exists. Moreover, \cref{cond:local:LHS} ensures that all atoms in
$\QueryLHS{t}$ are actually produced in $\Star{\Model{t}}$.

To obtain $\EntireModel$, we inductively unfold $\D$, and at each step we apply
the model fragment construction to the appropriate parameters. For the base
case, we map constant $c$ to context ${\VertexMap{c} = q}$, and we define
${\QueryLHS{c} = \InputQueryLHS}$ and ${\QueryRHS{c} = \InputQueryRHS}$; then,
\cref{cond:local:query,cond:local:RHS} hold by definition, and
\cref{cond:local:LHS} holds by property \ref{theorem:completeness:LHS} of
\cref{theorem:completeness}. For the induction step, we assume that we have
already mapped some term $t'$ to a context ${u = \VertexMap{t'}}$, and we
consider term $t = f(t')$ for each ${f \in \F}$.
\begin{itemize}
    \item If $t$ does not occur in an atom in $\Model{t'}$, we let ${\Model{t}
    = \setsingle{ t \RuleSymbol c}}$ and thus make $t$ equal to $c$. Term $t$
    is thus interpreted in exactly the same way as $c$, so we stop the
    unfolding.

    \item If $\Model{t'}$ contains a rule ${t \RuleSymbol s}$, then $t$ and $s$
    are equal, and so we interpret $t$ exactly as $s$; hence, we stop the
    unfolding.

    \item In all other cases, the \ruleword{Succ} rule ensures that $\D$
    contains an edge ${\tuple{u,v,f}}$ such that $v$ satisfies all
    preconditions of the rule, so we define ${\VertexMap{t} = v}$. Moreover, we
    let ${\QueryLHS{t} = \Star{\Model{t'}} \cap \GroundSu{t}}$ be the set of
    atoms that hold at $t'$ and are relevant to $t$ , and we let ${\QueryRHS{t}
    = \GroundPr{t} \setminus \Star{\Model{t'}}}$ be the set of atoms that do
    not hold at $t'$ and are relevant to $t$. We finally show that such
    $\QueryLHS{t}$ and $\QueryRHS{t}$ satisfy \cref{cond:local:query}:
    otherwise, the \ruleword{Pred} rule derives a clause in $\GroundS{t'}$ that
    is not true in $\Star{\Model{t'}}$.
\end{itemize}

After processing all relevant terms, we let $\EntireModel$ be the union of all
$\Model{t}$ from the above construction. To show that $\Star{\EntireModel}$
satisfies $\Onto$, we consider a DL-clause ${\Gamma \rightarrow \Delta \in
\Onto}$ and a substitution $\tau$ that makes the clause ground. W.l.o.g.\ we
can assume that $\tau$ is irreducible by $\EntireModel$---that is, it does not
contain terms that can we rewritten using the rules in $\EntireModel$. Since
each model fragment satisfies \cref{cond:local:not-satisfies-query}, we can
evaluate ${\Gamma\tau \rightarrow \Delta\tau}$ in $\Star{\Model{\tau(x)}}$
instead of $\Star{\EntireModel}$. Moreover, we show that
${\mdl{\Model{\tau(x)}} \Gamma\tau \rightarrow \Delta\tau}$ holds: if that were
not the case, the \ruleword{Hyper} rule derives a clause in $\GroundS{\tau(x)}$
that violates \cref{cond:local:satisfies-clauses}. Finally, we show that the
same holds for the query clause ${\InputQueryLHS \rightarrow \InputQueryRHS}$,
which completes our proof.

\section{Evaluation}\label{sec:evaluation}

\begin{figure}[t]
\centering
\begin{tikzpicture}
\pgfplotsset{every axis legend/.append style={at={(0,1)}, anchor=north west}}
\begin{groupplot}[
    ymax             = 190000,
    width            = 24.5em,
    height           = 40ex,
    ymode            = log,
    log basis y      = 10,
    axis lines       = left,
    xtick            = {100,200,300,400,500,600,700,800},
    ymajorgrids      = true,
    label style      = {font=\scriptsize},
    tick label style = {font=\scriptsize},
    ylabel           = {Classification time (ms).},
    enlargelimits    = false,
    tick align       = outside,
    legend style     = {fill=white,draw=black,at={(0.01,1)},anchor=north west,font=\scriptsize},
]
\nextgroupplot

\addplot[color=newLightBlue, mark=*, mark repeat=50, mark size=0.18em, thick] table[x=Index,y=HermitTotalTime,col sep=comma] {data/entire-corpus-data-hermit.csv};
\addlegendentry{\textcolor{newLightBlue}{HermiT}}

\addplot [color=newRed, mark=square*, mark repeat=50, mark size=0.18em, thick] table[x=Index,y=PelletTotalTime,col sep=comma] {data/entire-corpus-data-pellet.csv};
\addlegendentry{\textcolor{newRed}{Pellet}}

\addplot[color=newGreen, mark=diamond*, mark repeat=50, mark size=0.18em, thick] table[x=Index,y=FactppTotalTime,col sep=comma] {data/entire-corpus-data-factpp.csv};
\addlegendentry{\textcolor{newGreen}{FaCT++}}

\addplot[color=newOrange, mark=star, mark repeat=50, mark size=0.18em, thick] table[x=Index,y=Konclude1WTotalTime,col sep=comma] {data/entire-corpus-data-konclude.csv};
\addlegendentry{\textcolor{newOrange}{Konclude}}

\addplot[color=newPurple, mark=triangle*, mark repeat=50, mark size=0.18em, thick] table[x=Index,y=SequoiaTotalTime,col sep=comma] {data/entire-corpus-data-sequoia.csv};
\addlegendentry{\textcolor{newPurple}{Sequoia}}
\end{groupplot}
\end{tikzpicture}
\caption{Classification Times for All Ontologies}\label{fig:entire-corpus}
\end{figure}
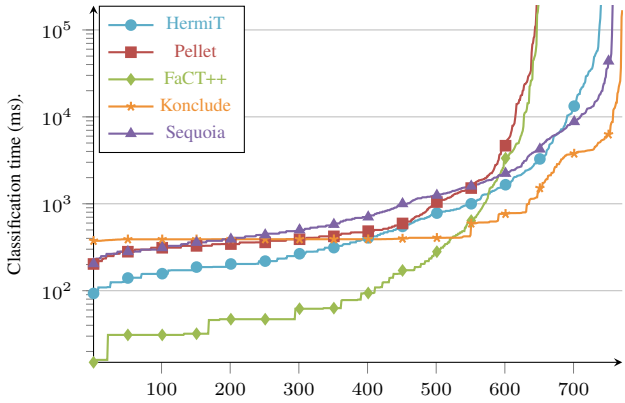

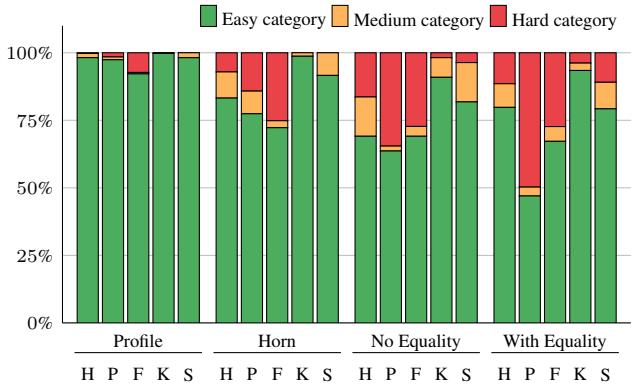
\begin{figure}
\centering
\begin{tikzpicture}
\pgfplotstableread{data/partitioned-corpus-data.txt}{\datatable}
\begin{axis}[
    ybar stacked,
    ymajorgrids            = true,
    axis lines*            = left,
    tick style             = transparent,
    width                  = 26em,
    height                 = 35ex,
    ymin                   = 0,
    xmin                   = 0,
    xmax                   = 22.5,
    bar width              = 0.8em,
    xtick                  = data,
    yticklabel             = {\pgfmathparse{\tick}\pgfmathprintnumber{\pgfmathresult}\%},
    ytick                  = {0,25,50,75,100},
    xticklabels from table = {\datatable}{Reasoner},
    xticklabel style       = {yshift=-3ex},
    label style            = {font=\scriptsize},
    tick label style       = {font=\scriptsize},
    legend style           = {fill=none,draw=none,at={(1,0.95)},anchor=south east,font=\scriptsize},
    legend columns         = -1,
    draw group line={Bucket}{Profile}{\scriptsize Profile}{-2.5ex}{0.5em},
    draw group line={Bucket}{NoProfileHorn}{\scriptsize Horn}{-2.5ex}{0.5em},
    draw group line={Bucket}{NoProfileNonHornNoEquality}{\scriptsize No Equality}{-2.5ex}{0.5em},
    draw group line={Bucket}{NoProfileNonHornWithEquality}{\scriptsize With Equality}{-2.5ex}{0.5em}
]
\addplot [fill=colorEasy!80] table [x=X, y=Easy] {\datatable};
\addlegendentry{Easy category}
\addplot [fill=colorMedium!80] table [x=X, y=Medium] {\datatable}; 
\addlegendentry{Medium category}
\addplot [fill=colorHard!80] table [x=X, y=Hard] {\datatable}; 
\addlegendentry{Hard category}
\end{axis}
\end{tikzpicture}
\caption{\label{fig:partitioned-corpus} Percentage of Easy, Medium and Hard
Ontologies per Ontology Group for HermiT (H), Pellet (P), FaCT++ (F), Konclude
(K), and Sequoia (S)}
\end{figure}

\newcommand{\ontoNr}[1]{#1\xspace}
\newcommand{\totalOntologies}{\ontoNr{777}}
\newcommand{\numberOntologiesBadRBox}{\ontoNr{7}}
\newcommand{\numHFail}{\ontoNr{42}}
\newcommand{\numPFail}{\ontoNr{138}}
\newcommand{\numFFail}{\ontoNr{132}}
\newcommand{\numKFail}{\ontoNr{8}}
\newcommand{\numSFail}{\ontoNr{22}}
\newcommand{\numSFailAllOthersPassed}{\ontoNr{8}}

We have implemented our calculus in a prototype system called Sequoia. The
calculus was implemented exactly as presented in this paper, with no
optimisation other than a suitable indexing scheme for clauses. The system is
written in Scala, and it can be used via the command line or the OWL API. It
currently handles the \SRIQ subset of OWL 2 DL (i.e., it does not support
datatypes, nominals, or reflexive roles), for which it supports ontology
classification and concept satisfiability; other standard services such as ABox
realisation are currently not supported.

We have evaluated our system using the methodology by
\citeA{DBLP:journals/ws/SteigmillerLG14} by comparing Sequoia with
HermiT~1.3.8, Pellet~2.3.1, FaCT++~1.6.4, and Konclude~1.6.1. We used all
reasoners in single-threaded mode in order to compare the underlying calculi;
moreover, Sequoia was configured to use the cautious strategy. All systems,
ontologies, and test results are available
online.\footnote{\url{http://krr-nas.cs.ox.ac.uk/2015/KR/cr/}}

We used the Oxford Ontology
Repository\footnote{\url{http://www.cs.ox.ac.uk/isg/ontologies/}} from which we
excluded \numberOntologiesBadRBox ontologies with irregular RBoxes. Since
Sequoia does not support datatypes or nominals, we have systematically replaced
datatypes and nominals with fresh classes and data properties with object
properties, and we have removed ABox assertions. We thus obtained a corpus of
\totalOntologies ontologies on which we tested all reasoners.

We run our experiments on a Dell workstation with two Intel Xeon E5-2643 v3 3.4
GHz processors with 6 cores per processor and 128 GB of RAM running Windows
Server 2012 R2. We used Java 8 update 66 with 15 GB of heap memory allocated to
each Java reasoner, and a maximum private working set size of 15 GB for each
reasoner in native code. In each test, we measured the wall-clock
classification time; this excludes parsing time for reasoners based on the OWL
API (i.e., HermiT, Pellet, FaCT++, and Sequoia). Each test was given a timeout
of 5 minutes. We report the average time over three runs, unless an exception
or timeout occurred in one of the three runs, in which case we report failure.

\Cref{fig:entire-corpus} shows an overview of the classification times for the
entire corpus. The $y$-axis shows the classification times in logarithmic
scale, and timeouts are shown as infinity. A number $n$ on the $x$-axis
represents the $n$-th easiest ontology for a reasoner with ontologies sorted
(for that reasoner) in the ascending order of classification time. For example,
a point (50, 100) on a reasoner's curve means that the 50th easiest ontology
for that reasoner took 100 ms to classify.

Sequoia could process most ontologies (733 out of 784) in under 10s, which is
consistent with the other reasoners. The system was fairly robust, failing on
only \numSFail ontologies; in contrast, HermiT failed on \numHFail, Pellet on
\numPFail, FaCT++ on \numFFail, and Konclude on \numKFail ontologies. Moreover,
Sequoia succeeded on 21 ontologies on which all of HermiT, Pellet and FaCT++
failed. Finally, there was one ontology where Sequoia succeeded and all other
reasoners failed; this was a hard version of FMA (ID $00285$) that uses both
disjunctions and number restrictions.

\Cref{fig:partitioned-corpus} shows an overview of how each reasoner performed
on each type of ontology. We partitioned the ontologies in the following four
groups: within a profile of OWL 2 DL (i.e., captured by OWL 2 EL, QL, or RL);
Horn but not in a profile; disjunctive but without number restrictions; and
disjunctive and with number restrictions. We used the OWL API to determine
profile membership, and we identified the remaining three groups after
structural transformation. In addition, for each reasoner, we categorise each
ontology as either `easy' ($<10$s), `medium' ($10$s to $5$min), and `hard'
(timeout or exception). The figure depicts a bar for each reasoner and group,
where each bar is divided into blocks representing the percentage of ontologies
in each of the aforementioned categories of difficulty. For Sequoia, over
$98\%$ of profile ontologies and over $91\%$ of out-of-profile Horn ontologies
are easy, with the remainder being of medium difficulty. Sequoia timed out
largely on ontologies containing both disjunctions and equality, and even in
this case only Konclude timed out in fewer cases.

In summary, although only an early prototype, Sequoia is a competitive reasoner
that comfortably outperforms HermiT, Pellet, and FaCT++, and which exhibits a
nice pay-as-you-go behaviour. Furthermore, problematic ontologies seem to
mostly contain complex RBoxes or large numbers in cardinality restrictions,
which suggests promising directions for future optimisation.

\section{Conclusion and Future Work}\label{sec:conclusion}

We have presented the first consequence based calculus for \SRIQ---a DL that
includes both disjunction and counting quantifiers. Our calculus combines ideas
from state of the art resolution and (hyper)tableau calculi, including the use
of ordered paramodulation for equality reasoning. Despite its increased
complexity, the calculus mimics existing calculi on \ELH ontologies. Although
it is an early prototype with plenty of room for optimisation, our system
Sequoia is competitive with well-established reasoners and it exhibits nice
pay-as-you-go behaviour in practice.

For future work, we are confident that we can extend the calculus to support
role reflexivity and datatypes, thus handling all of OWL 2 DL except nominals.
In contrast, handling nominals seems to be much more involved. In fact, adding
nominals to $\ALCHIQ$ raises the complexity of reasoning to \NExpTime so a
worst-case optimal calculus must be nondeterministic, which is quite different
from all consequence-based calculi we are aware of. Moreover, a further
challenge is to modify the calculus so that it can effectively deal with large
numbers in number restrictions.

\bibliographystyle{aaai}
\bibliography{references}

\clearpage
\appendix
\onecolumn

\section{Proof of \cref{theorem:soundness}}\label{sec:proofs}

In this chapter, we show that our calculus is sound, as stated in
\cref{theorem:soundness}. The proof is analogous to the soundness proof of
ordered superposition~\cite{nieuwenhuis95theorem}.

\thmsoundness*

\begin{proof}
Let $\Onto$ be an ontology, let ${\D = \tuple{\V,\E,\Ssym,\coresym,\csuccsym}}$
be a context structure that is sound for $\Onto$, and consider an application
of an inference rule from \cref{table:rules} to $\D$ and $\Onto$. We show that
each clause produced by the rule is a context clause and that it satisfies
\cref{soundness:cond:clauses,soundness:cond:edges} of \cref{def:contexts}.
\Cref{soundness:cond:clauses} holds obviously for the rules different from
\ruleword{Hyper}, \ruleword{Eq}, and \ruleword{Pred}. For
\cref{soundness:cond:edges}, we rely on soundness of hyperresolution: for
arbitrary formulas $\omega$, $\phi_i$, $\psi_i$, and $\gamma_i$, ${1 \leq i
\leq n}$, we have
\begin{equation}
    \{ \bigwedge_{j=1}^n \phi_j \rightarrow \omega \} \cup \bigcup_{1 \leq i \leq n} \{ \gamma_i \rightarrow \psi_i \vee \phi_i \} \models \bigwedge_{i=1}^n \gamma_i \rightarrow \bigvee_{i=1}^n \psi_i \vee \omega. \label{eqlemma:hyper:resolution}
\end{equation}
To prove the claim, we consider each rule from \cref{table:rules} and assume
that the rule is applied to clauses, contexts, and edges as shown in the table;
then, we show that the clause produced by the rule satisfies
\cref{soundness:cond:clauses} of \cref{def:contexts}; moreover, for the
\ruleword{Succ} rule, we show in addition that the edge introduced by the rule
satisfies \cref{soundness:cond:edges}.

\medskip

\noindent
(\ruleword{Core}) For each ${A \in \core{v}}$, we clearly have ${\Onto \models
\core{v} \rightarrow A}$.

\medskip

\noindent
(\ruleword{Hyper}) Since $\D$ is sound for $\Onto$, we have ${\Onto \models
\core{v} \wedge \Gamma_i \rightarrow \Delta_i \vee A_i\sigma}$ for each $i$
with ${1 \leq i \leq n}$. By \cref{eqlemma:hyper:resolution}, we have ${\Onto
\models \core{v} \wedge \bigwedge_{i=1}^n \Gamma_i \rightarrow \bigvee_{i=1}^n
\Delta_i \vee \Delta\sigma}$. Moreover, substitution $\sigma$ satisfies
${\sigma(x) = x}$, all premises are context clauses, and $\Onto$ contains only
DL-clauses; thus, the inference rule can only match an atom $S(x,z_i)$ or
$S(z_i,x)$ in an ontology clause to atoms $S(y,x)$, $S(x,y)$, $S(f(x),x)$ or
$S(x,f(x))$ in the context clause, and so $\sigma(z_i)$ is either $y$ or
$f(x)$; thus, the result is a context clause.

\medskip

\noindent
(\ruleword{Eq}) Since $\D$ is sound for $\Onto$, properties
\cref{eq:soundness:eq1:1} and \cref{eq:soundness:eq1:2} hold. Moreover, clause
in \cref{eq:soundness:eq1:3} is a logical consequence of the clauses in
\cref{eq:soundness:eq1:1} and \cref{eq:soundness:eq1:2}, so property
\cref{eq:soundness:eq1:3} holds, as required.
\begin{align}
    \Onto   & \models \core{v} \wedge \Gamma_1 \rightarrow \Delta_1 \vee s_1 \equals t_1 \label{eq:soundness:eq1:1} \\
    \Onto   & \models \core{v} \wedge \Gamma_2 \rightarrow \Delta_2 \vee s_2 \eqorineq t_2 \label{eq:soundness:eq1:2} \\
    \Onto   & \models \core{v} \wedge \Gamma_1 \wedge \Gamma_2 \rightarrow \Delta_1 \vee \Delta_2 \vee \replPos{s_2}{t_1}{p} \eqorineq t_2 \label{eq:soundness:eq1:3}
\end{align}
Finally, term $s_1$ is always of the form $g(f(x))$, term $t_1$ is of the form
$h(f(x))$ or $y$, and term $s_2$ is of the form $g(f(x))$, $B(g(f(x)))$,
$S(f(x),g(f(x)))$, or $S(g(f(x)),f(x))$; thus, $\replPos{s_2}{t_1}{p}$ is a
context term, and so the result is a context clause.

\medskip

\noindent
(\ruleword{Ineq}) Since $\D$ is sound for $\Onto$, we have ${\Onto \models
\core{v} \wedge \Gamma \rightarrow \Delta \vee t \not\equals t}$; but then, we
clearly have ${\Onto \models \core{v} \wedge \Gamma \rightarrow \Delta}$, as
required.

\medskip

\noindent
(\ruleword{Factor}) Since $\D$ is sound for $\Onto$, property
\cref{eq:soundness:factor:1} holds. Moreover, clause in
\cref{eq:soundness:factor:2} is a logical consequence of the clause in
\cref{eq:soundness:factor:1}, so property \cref{eq:soundness:factor:2} holds,
as required.
\begin{align}
    \Onto   & \models \core{v} \wedge \Gamma \rightarrow \Delta \vee s \equals t \vee s \equals t' \label{eq:soundness:factor:1} \\
    \Onto   & \models \core{v} \wedge \Gamma \rightarrow \Delta \vee t \nequals t' \vee s \equals t' \label{eq:soundness:factor:2}
\end{align}

\medskip

\noindent
(\ruleword{Elim}) The resulting context structure contains a subset of the
clauses from $\D$, so it is clearly sound for $\Onto$.

\medskip

\noindent
(\ruleword{Pred}) Let ${\sigma = \subst{x \mapsto f(x), y \mapsto x}}$. Since
$\D$ is sound for $\Onto$, properties
\cref{eq:soundness:pred:1}--\cref{eq:soundness:pred:3} hold. Now clause in
\cref{eq:soundness:pred:4} is an instance of the clause in
\cref{eq:soundness:pred:1}, so property \cref{eq:soundness:pred:4} holds. But
then, by \cref{eqlemma:hyper:resolution}, properties
\cref{eq:soundness:pred:1,eq:soundness:pred:2} imply property
\cref{eq:soundness:pred:5}. Finally, properties
\cref{eq:soundness:pred:3,eq:soundness:pred:5} imply property
\cref{eq:soundness:pred:6}, as required.
\begin{align}
    \Onto   & \textstyle\models \core{v} \wedge \bigwedge_{i=1}^m A_i \rightarrow \bigvee_{j=m+1}^{m+n} A_j \label{eq:soundness:pred:1} \\
    \Onto   & \models \core{u} \wedge \Gamma_i \rightarrow \Delta_i \vee A_i\sigma   & \text{for } 1 \leq i \leq m \label{eq:soundness:pred:2} \\
    \Onto   & \models \core{u} \rightarrow \core{v}\sigma \label{eq:soundness:pred:3} \\
    \Onto   & \textstyle\models \core{v}\sigma \wedge \bigwedge_{i=1}^m A_i\sigma \rightarrow \bigvee_{j=m+1}^{m+n} A_j\sigma \label{eq:soundness:pred:4} \\
    \Onto   & \textstyle\models \core{v}\sigma \wedge \core{v} \wedge \bigwedge_{i=1}^m \Gamma_i \rightarrow \bigvee_{j=m+1}^{m+n} A_j\sigma \label{eq:soundness:pred:5} \\
    \Onto   & \textstyle\models \core{u} \wedge \bigwedge_{i=1}^m \Gamma_i \rightarrow \bigvee_{j=m+1}^{m+n} A_j\sigma \label{eq:soundness:pred:6}
\end{align}
For each ${m+1 \leq i \leq m+n}$, we have ${A_i \in \PRtriggers}$, so $A_i$ is
of the form $B(y)$, $S(x,y)$, or $S(y,x)$; but then, the definition of $\sigma$
ensures that $A_i\sigma$ is a context atom, as required.

\medskip

\noindent
(\ruleword{Succ}) Let ${\sigma = \subst{x \mapsto f(x), y \mapsto x}}$. For
each clause ${A \rightarrow A}$ added to $\S{v}$, we clearly have ${\Onto
\models \core{v} \wedge A \rightarrow A}$, as required for
\cref{soundness:cond:clauses} of \cref{def:contexts}. Moreover, assume that the
inference rule adds an edge $\tuple{u,v,f_k}$ to $\E$; since $\D$ is sound for
$\Onto$, we have \cref{eq:soundness:succ:1}; by \cref{def:strategy}, we have
${\core{v} \subseteq K_1}$.
\begin{align}
    \Onto   & \models \core{u} \rightarrow A\sigma          & \text{for each } A \in K_1 \label{eq:soundness:succ:1} \\
    \Onto   & \models \core{u} \rightarrow \core{v}\sigma \label{eq:soundness:succ:2}
\end{align}
But then, property \cref{eq:soundness:succ:2} holds, as required for
\cref{soundness:cond:edges} of \cref{def:contexts}.
\end{proof}

\section{Preliminaries: Rewrite Systems}

In the proof of \cref{theorem:completeness} we construct a model of an
ontology, which, as is common in equational theorem proving, we represent using
a ground \emph{rewrite system}. We next recapitulate the definitions of rewrite
systems, following the presentation by \citeA{baader98term}.

Let $\T$ be the set of all ground terms constructed using a distinguished
constant $c$ (of sort $\F$), the function symbols from $\F$, and the predicate
symbols from $\P$. A (ground) \emph{rewrite system} $R$ is a binary relation on
$\T$. Each pair ${(s,t) \in {R}}$ is called a \emph{rewrite rule} and is
commonly written as ${s \RuleSymbol t}$. The \emph{rewrite relation}
$\RwRelSymbol{R}$ for $R$ is the smallest binary relation on $\T$ such that,
for all terms ${s_1, s_2, t \in \T}$ and each (not necessarily proper) position
$p$ in $t$, if $\Rule{s_1}{R}{s_2}$, then
$\RwRel{\replPos{t}{s_1}{p}}{R}{\replPos{t}{s_2}{p}}$. Moreover,
$\RwRelRefTransSymbol{R}$ is the reflexive--transitive closure of
$\RwRelSymbol{R}$, and $\CongruenceSymbol{R}$ is the
reflexive--symmetric--transitive closure of $\RwRelSymbol{R}$. A term $s$ is
\emph{irreducible by} $R$ if no term $t$ exists such that $\RwRel{s}{R}{t}$;
and a literal, clause, or substitution $\alpha$ is \emph{irreducible by} $R$ if
no term occurring in $\alpha$ is irreducible by $R$. Moreover, term $t$ is a
\emph{normal form} of $s$ w.r.t.\ $R$ if ${\Congruence{s}{R}{t}}$ and $t$ is
irreducible by $R$. We consider the following properties of rewrite systems.
\begin{itemize}
    \item $R$ is \emph{terminating} if no infinite sequence ${s_1, s_2, \dots}$
    of terms exists such that, for each~$i$, we have
    ${\RwRel{s_i}{R}{s_{i+1}}}$.

    \item $R$ is \emph{left-reduced} if, for each ${\Rule{s}{R}{t}}$, the term
    $s$ is irreducible by ${R \setminus \setsingle{s \RuleSymbol t}}$.

    \item $R$ is \emph{Church-Rosser} if, for all terms $t_1$ and $t_2$ such
    that ${\Congruence{t_1}{R}{t_2}}$, a term $z$ exists such that
    ${\RwRelRefTrans{t_1}{R}{z}}$ and ${\RwRelRefTrans{t_2}{R}{z}}$.
\end{itemize}
If $R$ is terminating and left-reduced, then $R$ is Church-Rosser
\cite[Theorem~2.1.5 and Exercise 6.7]{baader98term}. If $R$ is Church-Rosser,
then each term $s$ has a unique normal form $t$ such that ${s
\RwRelRefTransSymbol{R} t}$ holds. The \emph{Herbrand interpretation induced
by} a Church-Rosser system $R$ is the set $\Star{R}$ such that, for all ${s,t
\in \T}$, we have ${s \equals t \in \Star{R}}$ \iff $\Congruence{s}{R}{t}$.

Term orders can be used to prove termination of rewrite systems. A term order
$\succ$ is a \emph{simplification order} if the following conditions hold:
\begin{itemize}
    \item for all terms $s_1$, $s_2$, and $t$, all positions $p$ in $t$, and
    all substitutions $\sigma$, we have that ${s_1 \succ s_2}$ implies
    ${\replPos{t}{s_1\sigma}{p} \succ \replPos{t}{s_2\sigma}{p}}$; and

    \item for each term $s$ and each proper position $p$ in $s$, we have ${s
    \succ \atPos{s}{p}}$.
\end{itemize}
Given a rewrite system $R$, if a simplification order $\succ$ exists such that
${s \RuleSymbol t \in R}$ implies ${s \succ t}$, then $R$ is terminating
\cite[Theorems~5.2.3 and~5.4.8]{baader98term}, and ${s \RwRelSymbol{R} t}$
implies ${s \succ t}$.

\section{Proof of \cref{theorem:completeness}}\label{sec:proofs:completeness}

\thmcompleteness*

In this section, we fix an ontology $\Onto$, a context structure ${\D =
\tuple{\V,\E,\Ssym,\coresym,\csuccsym}}$, a context ${q \in \V}$, and a query
clause ${\InputQueryLHS \rightarrow \InputQueryRHS}$ such that
conditions~\ref{theorem:completeness:LHS} and~\ref{theorem:completeness:RHS} of
\cref{theorem:completeness} are satisfied, and we show the contrapositive of
condition \ref{theorem:completeness:query}: if ${\InputQueryLHS \rightarrow
\InputQueryRHS \not\strin \S{q}}$, then ${\Onto \not\models \InputQueryLHS
\rightarrow \InputQueryRHS}$. To this end, we construct a rewrite system
$\EntireModel$ such that the induced Herbrand model $\Star{\EntireModel}$
satisfies all clauses in $\Onto$, but not ${\InputQueryLHS \rightarrow
\InputQueryRHS}$. We construct the model using a distinguished constant $c$,
the unary function symbols from $\F$, and the unary and binary predicate
symbols from $\P_1$ and $\P_2$, respectively.

Let $t$ be a term. If $t$ is of the form ${t = f(s)}$, then $s$ is the
\emph{predecessor} of $t$, and $t$ is a \emph{successor} of $s$; by these
definitions, a constant has no predecessor. The $\F$-\emph{neighbourhood} of
$t$ is the set of $\F$-terms containing $t$, $f(t)$ with ${f \in \F}$, and the
predecessor $t'$ of $t$ if one exists; the $\P$-\emph{neighbourhood} of $t$
contains $\P$-terms $B(t)$, $S(t,f(t))$, $S(f(t),t)$, $B(f(t))$, and, if $t$
has the predecessor $t'$, also $\P$-terms $S(t',t)$, $S(t,t')$, and $B(t')$,
for all ${B \in \P_1}$ and ${S \in \P_2}$. Let $\Grounding{t}$ be the
substitution such that ${\Grounding{t}(x) = t}$ and, if $t$ has the predecessor
$t'$, then ${\Grounding{t}(y) = t'}$. Finally, for each term $t$, we define
sets of atoms $\GroundPr{t}$ and $\GroundSu{t}$ as follows:
\begin{align}
    \GroundSu{t} & = \setbuilder{A\Grounding{t}}{A \in \SUtriggers \text{ and } A\Grounding{t} \text{ is ground}} \label{eq:ground:SU} \\
    \GroundPr{t} & = \setbuilder{A\Grounding{t}}{A \in \PRtriggers \text{ and } A\Grounding{t} \text{ is ground}} \label{eq:ground:PR}
\end{align}

\subsection{Constructing a Model Fragment}\label{sec:proofs:completeness:fragment}

In this section, we show how, given a term $t$, we can generate a part of the
model of $\Onto$ that covers the neighbourhood of $t$. In the rest of
\cref{sec:proofs:completeness:fragment}, we fix the following parameters to the
model fragment generation process:
\begin{itemize}
    \item $t$ is a ground $\F$-term,

    \item $v$ is a context in $\D$,

    \item $\QueryLHS{t}$ is a conjunction of atoms, and

    \item $\QueryRHS{t}$ is a disjunction of atoms.
\end{itemize}
Let $\GroundS{t}$ be the set of ground clauses obtained from $\S{v}$ as follows:
\begin{displaymath}
    \GroundS{t} = \{ \Gamma\Grounding{t} \rightarrow \Delta\Grounding{t} \mid \Gamma \rightarrow \Delta \in \S{v}, \text{ both } \Gamma\Grounding{t} \text{ and } \Delta\Grounding{t} \text{ are ground, and } \Gamma\Grounding{t} \subseteq \QueryLHS{t} \}
\end{displaymath}
We assume that the following conditions hold.
\begin{enumerate}[label={L\arabic*.},ref={L\arabic*},leftmargin=2.5em]
    \item
    ${\QueryLHS{t} \rightarrow \QueryRHS{t} \not\strin \GroundS{t}}$.

    \item
    If ${t = c}$, then ${\QueryRHS{t} = \InputQueryRHS}$; and if ${t \neq c}$,
    then ${\QueryRHS{t} \subseteq \GroundPr{t}}$.

    \item
    For each ${A \in \QueryLHS{t}}$, we have ${\QueryLHS{t} \rightarrow A
    \strin \GroundS{t}}$.
\end{enumerate}

We next construct a rewrite system $\Model{t}$ such that ${\Star{\Model{t}}
\models \GroundS{t}}$ and ${\Star{\Model{t}} \not\models \QueryLHS{t}
\rightarrow \QueryRHS{t}}$ holds. Throughout
\cref{sec:proofs:completeness:fragment}, we treat the terms in the
$\F$-neighbourhood of $t$ as if they were constants. Thus, even though the
rewrite system $R$ will contain terms $t$ and $f(t)$, we will not consider
terms with further nesting.

\subsubsection{Grounding the Context Order}\label{sec:proofs:completeness:fragment:order}

To construct $\Model{t}$, we need an order on the terms in the neighbourhood of
$t$ that is compatible with $\csucc{v}$. To this end, let $\gsucc{t}$ be a
total, strict, simplification order on the set of ground terms constructed
using the $\F$-neighbourhood of $t$ and the predicate symbols in $\P$ that
satisfies the following conditions for all context terms $s_1$ and $s_2$ such
that $s_1\Grounding{t}$ and $s_2\Grounding{t}$ are both ground, and where $t'$
is the predecessor of $t$ if it exists.
\begin{enumerate}[label={O\arabic*.},ref={O\arabic*},leftmargin=2.5em]
    \item\label[condition]{cond:order:compatibility}
    ${s_1 \csucc{v} s_2}$ implies ${s_1\Grounding{t} \gsucc{t}
    s_2\Grounding{t}}$.

    \item\label[condition]{cond:order:query}
    ${s_1\Grounding{t} \equals \TOP \in \QueryRHS{t}}$ and ${s_1\Grounding{t}
    \gsucc{t} s_2\Grounding{t}}$ and ${s_2\Grounding{t} \not\in
    \setsingle{t,t'}}$ imply ${s_2\Grounding{t} \equals \TOP \in \QueryRHS{t}}$.
\end{enumerate}
Condition \ref{theorem:completeness:RHS} of \cref{theorem:completeness} and
\cref{cond:term:order:PR} of \cref{def:context:term:order} ensure that the
order $\csucc{v}$ on (nonground) context terms can be grounded in a way
compatible with \cref{cond:local:RHS}. Moreover, since in this section we treat
all $\F$-terms as constants, we can make the $\P$-terms of the form $B(t')$,
$S(t',t)$, and $S(t,t')$ smaller than other $\F$- and $\P$-terms (i.e., we do
not need to worry about defining the order on the predecessor of $t'$ or on the
ancestors of $f(t)$). Thus, at least one such order exists, so in the rest of
this section we fix an arbitrary such order $\gsucc{t}$. We extend $\gsucc{t}$
to ground literals (also written $\gsucc{t}$) by identifying each ${s \nequals
t}$ with the multiset ${\{ s, s, t, t \}}$ and each ${s \equals t}$ with the
multiset ${\{ s, t \}}$, and then comparing the result using the multiset
extension of the term order (as defined in \cref{sec:preliminaries}). Finally,
we further extend $\gsucc{t}$ to disjunctions of ground literals (also written
$\gsucc{t}$) by identifying each disjunction $\bigvee_{i=1}^n L_i$ with the
multiset ${\{L_1,\ldots,L_n\}}$ and then comparing the result using the
multiset extension of the literal order.

\subsubsection{Constructing the Rewrite System $\Model{t}$}\label{sec:proofs:completeness:fragment:construction}

We arrange all clauses in $\GroundS{t}$ into a sequence ${C^1, \dots, C^n}$. Since
the body of each $C^i$ is a subset of $\QueryLHS{t}$, no
$C^i$ can contain $\bot$ in its head as that would contradict
\cref{cond:local:query}; thus, we can assume that each $C^i$ is of the form
${C^i = \Gamma^i \rightarrow \Delta^i \vee L^i}$ where ${L^i \gsucc{t}
\Delta^i}$, literal $L^i$ is of the form ${L^i = l^i \eqorineq r^i}$ with ${{\eqorineq}
\in \setsingle{{\equals}, {\nequals}}}$, and ${l^i \gsucceq{t} r^i}$. For the
rest of \cref{sec:proofs:completeness:fragment}, we reserve $C^i$, $\Gamma^i$,
$\Delta^i$, $L^i$, $l^i$, and $r^i$ for referring to the (parts of) the clauses
in this sequence. Finally, we assume that, for all ${1 \leq i<j \leq n}$, we
have ${\Delta^j \vee L^j \gsucceq{t} \Delta^i \vee L^i}$.

We next define the sequence ${\Model{t}^0, \dots, \Model{t}^n}$ of rewrite
systems by setting ${\Model{t}^0 = \emptyset}$ and defining each $\Model{t}^i$
with ${1 \leq i \leq n}$ inductively as follows:
\begin{itemize}
    \item $\Model{t}^i = \Model{t}^{i-1} \cup \setsingle{l^i \RuleSymbol r^i}$
    if $L^i$ is of the form $l^i \equals r^i$ such that
    \begin{enumerate}[label={R\arabic*.},ref={R\arabic*},leftmargin=2.5em]
        \item\label[condition]{cond:rule:not:models}
        $\nmdl{(\Model{t}^{i-1})} \Delta^i \vee l^i \equals r^i$,

        \item\label[condition]{cond:rule:order}
        $l^i \gsucc{t} r^i$,

        \item\label[condition]{cond:rule:irreducible}
        $l^i$ is irreducible by $\Model{t}^{i-1}$, and

        \item\label[condition]{cond:rulegen}
        $s \equals r^i \not\in \Star{(\Model{t}^{i-1})}$ for each $l^i \equals
        s \in \Delta^i$;
    \end{enumerate}

    \medskip

    \item $\Model{t}^i = \Model{t}^{i-1}$ in all other cases.
\end{itemize}
Finally, let ${\Model{t} = \Model{t}^n}$; we call $\Model{t}$ the \emph{model
fragment for} $t$, $v$, $\QueryLHS{t}$, and $\QueryRHS{t}$. Each clause ${C^i =
\Gamma^i \rightarrow \Delta^i \vee l^i \equals r^i}$ that satisfies the first
condition in the above construction is called \emph{generative}, and the clause
is said to \emph{generate} the rule ${l^i \RuleSymbol r^i}$ in $\Model{t}$.

\subsubsection{The Properties of the Model Fragment $\Model{t}$}

\begin{lemma}
    The rewrite system $\Model{t}$ is Church-Rosser.
\end{lemma}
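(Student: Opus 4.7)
The plan is to invoke the standard fact recalled in the preliminaries of this appendix: any terminating, left-reduced rewrite system is Church--Rosser. So I would reduce the statement to verifying these two properties of $\Model{t}$.

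\emph{Termination} is the easy half. Every rule ${l^i \RuleSymbol r^i}$ placed into $\Model{t}$ is added only when condition \ref{cond:rule:order} of the construction holds, i.e.\ ${l^i \gsucc{t} r^i}$. Since $\gsucc{t}$ was chosen to be a simplification order on ground terms, the rewrite relation $\RwRelSymbol{\Model{t}}$ is contained in $\gsucc{t}$, and the latter is well-founded. Hence $\Model{t}$ terminates.

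\emph{Left-reducedness} is the main obstacle, and I would handle it by contradiction: suppose some rule ${l^i \RuleSymbol r^i \in \Model{t}}$ is such that $l^i$ is reducible by another rule ${l^j \RuleSymbol r^j \in \Model{t}}$ with ${j \neq i}$, and split on whether ${j < i}$ or ${j > i}$. If ${j < i}$, then ${l^j \RuleSymbol r^j \in \Model{t}^{i-1}}$, so $l^i$ is reducible by $\Model{t}^{i-1}$, contradicting condition \ref{cond:rule:irreducible} for step $i$; this case is immediate.

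The case ${j > i}$ is where the ordering of the sequence ${C^1,\dots,C^n}$ does the work. If $l^j$ rewrites $l^i$, then $l^j$ occurs as a subterm of $l^i$, so the simplification-order property gives ${l^i \gsucceq{t} l^j}$. On the other hand, the sorting assumption gives ${\Delta^j \vee L^j \gsucceq{t} \Delta^i \vee L^i}$ as multisets of literals; combined with ${L^j \gsucc{t} \Delta^j}$ and ${L^i \gsucc{t} \Delta^i}$, a short analysis of the multiset extension applied to ${L^j = l^j \equals r^j}$ and ${L^i = l^i \equals r^i}$ (using ${l^j \gsucc{t} r^j}$ and ${l^i \gsucc{t} r^i}$) forces ${l^j \gsucceq{t} l^i}$. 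Hence ${l^j = l^i}$. But then the rule ${l^i \RuleSymbol r^i}$ already sits in $\Model{t}^{j-1}$ and rewrites $l^j$, so condition \ref{cond:rule:irreducible} fails at step $j$, contradicting the fact that ${l^j \RuleSymbol r^j}$ was added at step $j$. This rules out the second case and establishes left-reducedness, at which point the standard theorem delivers the Church--Rosser property.
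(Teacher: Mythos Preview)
Your proposal is correct and matches the paper's proof in all essentials: both reduce Church--Rosser to termination plus left-reducedness, get termination immediately from \ref{cond:rule:order} and the fact that $\gsucc{t}$ is a simplification order, and for left-reducedness use the head ordering of the clause sequence together with \ref{cond:rule:irreducible} to rule out any interaction between distinct generated rules. The only cosmetic differences are that the paper argues directly (fixing a rule at step $i$ and showing every later left-hand side $l^j$ satisfies $l^j \gsucc{t} l^i$ and hence cannot be a subterm of $l^i$), whereas you argue by contradiction and, in the $j>i$ case, combine the subterm inequality $l^i \gsucceq{t} l^j$ with the ordering inequality $l^j \gsucceq{t} l^i$ to force $l^j = l^i$ and then invoke \ref{cond:rule:irreducible} at step $j$; both routes are equally valid.
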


\begin{proof}
To see that $\Model{t}$ is terminating, simply note that, for each rule
${\Rule{l}{\Model{t}}{r}}$, \cref{cond:rule:order} ensures ${l \gsucc{t} r}$,
and that $\gsucc{t}$ is a simplification order.

To see that $\Model{t}$ is left-reduced, consider an arbitrary rule
${\Rule{l}{\Model{t}}{r}}$ that is added to $\Model{t}$ in step $i$ of the
clause sequence. By \cref{cond:rule:irreducible}, ${l \RuleSymbol r}$ is
irreducible by $\Model{t}^i$. Now consider an arbitrary rule
${\Rule{l'}{\Model{t}}{r'}}$ that is added to $\Model{t}$ at any step $j$ of
the construction where ${j > i}$. The definition of the clause order implies
${l' \equals r' \gsucceq{t} l \equals r}$; since ${l' \gsucc{t} r'}$ and ${l
\gsucc{t} r}$ by \cref{cond:rule:order}, by the definition of the literal order
we have ${l' \gsucceq{t} l}$. Since ${\Rule{l}{\Model{t}^{j-1}}{r}}$,
\cref{cond:rule:irreducible} ensures ${l \neq l'}$, and so we have ${l'
\gsucc{t} l}$; consequently, $l'$ is not a subterm of $l$, and thus $l$ is
irreducible by $\Model{t}^j$.
\end{proof}

\begin{lemma}\label{lemma:inequality:stays}
    For each ${1 \leq i \leq n}$ and each ${l \nequals r \in \Delta^i \vee
    L^i}$, we have ${\mdl{(\Model{t}^{i-1})} l \equals r}$ \iff
    ${\mdl{\Model{t}} l \equals r}$.
\end{lemma}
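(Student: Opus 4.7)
The plan is to prove the two directions separately. The forward implication is immediate, because $\Model{t}^{i-1} \subseteq \Model{t}$ gives $\CongruenceSymbol{\Model{t}^{i-1}} \subseteq \CongruenceSymbol{\Model{t}}$. For the converse I will rely on Church-Rosser of $\Model{t}$ (established in the preceding lemma) and show that no rule added to $\Model{t}$ at a step $j \geq i$ can participate in any rewrite sequence starting at $l$ or at $r$; then the $\Model{t}$-rewrites of $l$ and $r$ to their common normal form already live inside $\Model{t}^{i-1}$, which yields $\Congruence{l}{\Model{t}^{i-1}}{r}$.

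The crux is the following key lemma: for each $j \geq i$ at which a rule $l^j \RuleSymbol r^j$ is added, both $l^j \gsucc{t} l$ and $l^j \gsucc{t} r$ hold. To prove it, note that by \cref{cond:rule:order} the maximal literal of $C^j$ must be $L^j = l^j \equals r^j$ with $l^j \gsucc{t} r^j$. The clause sequence is sorted so $C^j \gsucceq{t} C^i$; since the literal order is the multiset extension of a total term order and hence itself total, multiset dominance on a totally ordered base transfers the maxima, giving $L^j \gsucceq{t} L^i$. By maximality $L^i \gsucceq{t} l \nequals r$, so $L^j \gsucceq{t} l \nequals r$; and because $L^j$ is an equality while $l \nequals r$ is an inequality, these literals are distinct, so the comparison is strict. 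Unfolding $\{l^j, r^j\} \mul{\gsucc{t}} \{l, l, r, r\}$ (assuming WLOG $l \gsucceq{t} r$ and using $l^j \gsucc{t} r^j$) then forces $l^j \gsucc{t} l$, and hence $l^j \gsucc{t} r$ too.

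Given the key lemma, the converse is immediate. Any rewrite sequence $l = t_0, t_1, \ldots, t_k$ in $\Model{t}$ satisfies $l \gsucceq{t} t_m$ for every $m$, since $\gsucc{t}$ is a simplification order, and each step uses a rule whose left-hand side is a subterm of some $t_m$ and therefore satisfies $l \gsucceq{t} l^j$; by the key lemma this excludes every $j \geq i$, so only rules of $\Model{t}^{i-1}$ can occur, and symmetrically for $r$. The only mildly delicate step is the multiset case analysis at the end of the key lemma, which I expect to be the main---but still very manageable---obstacle; all remaining steps are routine applications of the simplification-order machinery recalled at the start of the appendix.
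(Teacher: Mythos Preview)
Your proof is correct and follows essentially the same approach as the paper. Both arguments hinge on the same key fact---that every rule $l^j \RuleSymbol r^j$ generated at a step $j \geq i$ satisfies $l^j \gsucc{t} l$ and $l^j \gsucc{t} r$---which you derive in more detail than the paper (the paper simply asserts $l^j \equals r^j \gsucc{t} l \nequals r$ and unpacks the literal order). The only cosmetic difference is packaging: the paper takes the contrapositive and shows that the $\Model{t}^{i-1}$-normal forms $l',r'$ of $l,r$ remain irreducible by every later rule (since $l^j$ cannot be a subterm of $l'$ or $r'$), whereas you argue directly that any $\Model{t}$-rewrite sequence from $l$ or $r$ stays bounded by $l$ (resp.\ $r$) in $\gsucc{t}$ and therefore cannot invoke a rule with left-hand side $l^j \gsucc{t} l$; both formulations yield the same conclusion via the subterm property of the simplification order.
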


\begin{proof}
Consider an arbitrary clause ${C^i = \Gamma^i \rightarrow \Delta^i \vee L^i}$
and an arbitrary inequality ${l \nequals r \in \Delta^i \vee L^i}$. If ${l
\equals r \in \Star{(\Model{t}^{i-1})}}$, then ${\Model{t}^{i-1} \subseteq
\Model{t}}$ implies ${l \equals r \in \Star{\Model{t}}}$, and so we have
${\mdl{\Model{t}} l \equals r}$, as required. Now assume that ${l \equals r
\not\in \Star{(\Model{t}^{i-1})}}$. Let $l'$ and $r'$ be the normal forms of
$l$ and $r$, respectively, w.r.t.\ $\Model{t}^{i-1}$. Now consider an arbitrary
$j$ with ${i \leq j \leq n}$ such that ${l^j \RuleSymbol r^j}$ is generated by
$C^j$. We then have ${l^j \equals r^j \gsucc{t} l \nequals r}$, which by the
definition of literal order implies ${l^j \gsucc{t} l \gsucceq{t} l'}$ and ${l^j
\gsucc{t} r \gsucceq{t} r'}$; since $\gsucc{t}$ is a simplification order, $l^j$
is a subterm of neither $l'$ nor $r'$. Thus, $l'$ and $r'$ are the normal forms
of $l$ and $r$, respectively, w.r.t.\ $\Model{t}^j$, and so we have ${l'
\equals r' \not\in \Star{(\Model{t}^j)}}$; but then, we have ${l \equals r
\not\in \Star{(\Model{t}^j)}}$, as required.
\end{proof}

\begin{lemma}\label{lemma:generative:clauses}
    For each generative clause ${\Gamma^i \rightarrow \Delta^i \vee l^i \equals
    r^i}$, we have ${\nmdl{\Model{t}} \Delta^i}$.
\end{lemma}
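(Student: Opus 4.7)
The plan is to proceed by contradiction: assume some literal $L \in \Delta^i$ is satisfied in $\Star{\Model{t}}$, and derive a contradiction with either R1 (which says $\nmdl{(\Model{t}^{i-1})} \Delta^i$) or R4 (the factoring condition). The argument splits on the polarity of $L$ and, for positive $L$, on whether its larger side equals $l^i$. If $L = l \nequals r$ is negative, \cref{lemma:inequality:stays} settles things immediately: $L$ being true in $\Star{\Model{t}}$ means $l \equals r \not\in \Star{\Model{t}}$, so $l \equals r \not\in \Star{(\Model{t}^{i-1})}$, so $L$ is already true in $\Star{(\Model{t}^{i-1})}$, contradicting R1.

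For positive $L = l \equals r$ with $l \gsucceq{t} r$ (WLOG), I would first unfold the ordering assumption $L^i \gsucc{t} L$ via the multiset extension to obtain $l^i \gsucceq{t} l$ and $l^i \gsucceq{t} r$ with at least one strict. I also record the auxiliary fact that $l^j \gsucceq{t} l^i$ for every $j \geq i$: since $L^j$ and $L^i$ are the maxima of their respective clauses and the clauses are sorted by $\Delta^j \vee L^j \gsucceq{t} \Delta^i \vee L^i$, the multiset order forces the corresponding maxima to be ordered. In Case B ($l \neq l^i$, hence $l^i \gsucc{t} l$ strictly), I claim that no rule $l^j \RuleSymbol r^j$ added at any step ${j \geq i}$ can rewrite $l$, $r$, or any of their reducts: the chain $l^j \gsucceq{t} l^i \gsucc{t} l \gsucceq{t} r$ together with the subterm property of the simplification order $\gsucc{t}$ rules out $l^j$ ever occurring as a subterm. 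Hence the normal forms of $l$ and $r$ in $\Model{t}$ coincide with their normal forms in $\Model{t}^{i-1}$; since $l \equals r \not\in \Star{(\Model{t}^{i-1})}$ by R1, we conclude $l \equals r \not\in \Star{\Model{t}}$, contradicting our assumption.

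Case A, where $l = l^i$, is the main obstacle and is the only place where R4 is used. The multiset comparison now forces $r^i \gsucc{t} r$ strictly, so the same subterm-order argument as in Case B shows that $r$ is untouched by every rule from step $\geq i$, and its normal form in $\Model{t}$ equals its normal form $r_*$ in $\Model{t}^{i-1}$. On the $l = l^i$ side, the rule $l^i \RuleSymbol r^i$ fires once to yield $r^i$, and then the chain $l^j \gsucceq{t} l^i \gsucc{t} r^i$ for $j \geq i$ prevents any further rule from step $\geq i$ from reducing $r^i$ or any of its reducts; hence the normal form of $l^i$ in $\Model{t}$ is exactly the normal form $r^i_*$ of $r^i$ in $\Model{t}^{i-1}$. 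If $l \equals r$ were in $\Star{\Model{t}}$ we would therefore need $r^i_* = r_*$, which is the same as $r^i \equals r \in \Star{(\Model{t}^{i-1})}$, directly contradicting R4. The delicate point is to verify uniformly that no rule from step $\geq i$ sneaks into any stage of either reduction sequence; this is entirely controlled by the single inequality chain above together with the subterm property of $\gsucc{t}$, and I do not expect any further complication beyond keeping those bookkeeping details consistent.
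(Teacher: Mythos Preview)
Your proposal is correct and follows essentially the same approach as the paper. The paper organises the positive-literal case as an explicit induction on $j$ from $i$ to $n$ (showing $\nmdl{(\Model{t}^j)} L$ at each step, with subcases $l^j = l$ versus $l^j \gsucc{t} l$), whereas you unroll this into a direct comparison of normal forms in $\Model{t}$ versus $\Model{t}^{i-1}$; but the ordering facts, the use of \cref{lemma:inequality:stays} for inequalities, and the appeal to condition~R4 in the $l = l^i$ case are identical.
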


\begin{proof}
Consider a generative clause ${C^i = \Gamma^i \rightarrow \Delta^i \vee l^i
\equals r^i}$ and a literal ${L \in \Delta^i}$; \cref{cond:rule:not:models}
ensures that ${\nmdl{(\Model{t}^{i-1})} L}$. We next show that
${\nmdl{(\Model{t}^{i-1})} L}$.

Assume that $L$ is of the form ${l \nequals r}$. Since ${l \nequals r \in
\Delta^i \vee l^i \equals r^i}$, by \cref{lemma:inequality:stays} we have
${\nmdl{\Model{t}} L}$, as required.

Assume that $L$ is of the form ${l \equals r}$ with ${l \gsucc{t} r}$. We show
by induction that, for each $j$ with ${i \leq j \leq n}$, we have
${\nmdl{(\Model{t}^j)} L}$. To this end, we assume that
${\nmdl{(\Model{t}^{j-1})} L}$. If $C^j$ is not generational, then
${\Model{t}^j = \Model{t}^{j-1}}$, and so ${\nmdl{(\Model{t}^j)} L}$. Thus,
assume that $C^j$ is generational. We consider the following two cases.
\begin{itemize}
    \item ${l^j = l}$. We have the following two subcases.
    \begin{itemize}
        \item ${j = i}$. \Cref{cond:rulegen} then ensures ${r \equals r^i \not\in
        \Star{(\Model{t}^{i-1})}}$. Let $r'$ and $r''$ be the normal forms of
        $r$ and $r^i$, respectively, w.r.t.\ $\Model{t}^{i-1}$; we have ${r'
        \equals r'' \not\in \Star{(\Model{t}^{i-1})}}$. Moreover, ${l \gsucc{t}
        r \gsucceq{t} r'}$ and ${l \gsucc{t} r^i \gsucceq{t} r''}$ hold; since
        $\gsucc{t}$ is a simplification order, $l$ is a subterm of neither $r'$
        nor $r''$; therefore, $r'$ and $r''$ are the normal forms of $r$ and
        $r^i$, respectively, w.r.t.\ $\Model{t}^i$, and therefore ${r' \equals
        r'' \not\in \Star{(\Model{t}^i)}}$. Finally, since
        ${\Rule{l}{\Model{t}^i}{r^i}}$, term $r''$ is the normal form of $l$
        w.r.t.\ $\Model{t}^i$, and so ${l \equals r \not\in
        \Star{(\Model{t}^i)}}$.

        \item ${j > i}$. But then, ${l^j \equals r^j \gsucceq{t} l^i \equals r^i
        \gsucc{t} l \equals r}$ implies ${l^j = l^i = l}$. Furthermore, $C^i$ is
        generational, so we have ${\Rule{l^i}{\Model{t}^{j-1}}{r^i}}$. But
        then, $l^j$ is not irreducible by $\Model{t}^{j-1}$, which contradicts
        \cref{cond:rule:irreducible}.
    \end{itemize}

    \item ${l^j \gsucc{t} l}$. Let $l'$ and $r'$ be the normal forms of $l$ and
    $r$, respectively, w.r.t.\ $\Model{t}^{j-1}$. Then, we have ${l^j \gsucc{t}
    l \gsucceq{t} l'}$ and ${l^j \gsucc{t} r \gsucceq{t} r'}$; since
    $\gsucc{t}$ is a simplification order, $l^j$ is a subterm of neither $l'$
    nor $r'$. Thus, $l'$ and $r'$ are the normal forms of $l$ and $r$,
    respectively, w.r.t.\ $\Model{t}^j$, and so ${l' \equals r' \not\in
    \Star{(\Model{t}^j)}}$; hence, ${l \equals r \not\in \Star{(\Model{t}^j)}}$
    holds. \qedhere
\end{itemize}
\end{proof}

\begin{lemma}\label{lemma:all:true:before}
    Let ${\Gamma \rightarrow \Delta}$ be a clause with ${\Gamma \rightarrow
    \Delta \strin \GroundS{t}}$. Then ${\mdl{\Model{t}} \Delta}$ holds if $i$
    with ${1 \leq i \leq n+1}$ exists such that
    \begin{enumerate}
        \item\label[condition]{lemma:all:true:before:true}
        for each ${1 \leq j < i}$, we have ${\mdl{\Model{t}} \Delta^j \vee
        L^j}$, and

        \item\label[condition]{lemma:all:true:before:clauses}
        if ${i \leq n}$ (i.e., $i$ is an index of a clause from $\GroundS{t}$),
        then ${\Delta^i \vee L^i \gsucc{t} \Delta}$.
    \end{enumerate}
\end{lemma}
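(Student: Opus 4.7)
The plan is to split on the definition of $\strin$ from \cref{def:redundancy}. If $\Gamma \rightarrow \Delta \strin \GroundS{t}$ holds via \cref{cond:redundancy:tautology}---that is, $\Delta$ contains either $s \equals s$ or a pair $\{s \equals s', \; s \nequals s'\}$---then $\mdl{\Model{t}} \Delta$ is immediate because $\Star{\Model{t}}$ is an equality interpretation and so satisfies $s \equals s$, and because for any $s,s'$ any interpretation satisfies at least one of $s \equals s'$ or $s \nequals s'$. This case is entirely routine and requires no use of hypotheses \ref{lemma:all:true:before:true} or \ref{lemma:all:true:before:clauses}.

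Otherwise, \cref{cond:redundancy:strengthening} supplies a clause $\Gamma' \rightarrow \Delta' \in \GroundS{t}$ with $\Gamma' \subseteq \Gamma$ and $\Delta' \subseteq \Delta$. This clause appears in the enumeration of $\GroundS{t}$ as some $C^k = \Gamma^k \rightarrow \Delta^k \vee L^k$ for a unique $k$ with $1 \leq k \leq n$, so $\Delta' = \Delta^k \vee L^k$ as disjunctions (equivalently, multisets of literals). The approach is then to prove $k < i$, apply hypothesis~\ref{lemma:all:true:before:true} to obtain $\mdl{\Model{t}} \Delta^k \vee L^k$, and finally observe that the satisfied literal lies in $\Delta' \subseteq \Delta$, so $\mdl{\Model{t}} \Delta$.

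To establish $k < i$, I would argue as follows. If $i = n+1$, then $k \leq n < i$ trivially. If $i \leq n$, hypothesis~\ref{lemma:all:true:before:clauses} yields $\Delta^i \vee L^i \gsucc{t} \Delta$. The next step is the one deserving care: the order $\gsucc{t}$ is extended to disjunctions by the multiset extension, and from the definition of $\mul{\succ}$ recalled in \cref{sec:preliminaries} I would derive that $\Delta' \subseteq \Delta$ implies $\Delta \gsucceq{t} \Delta'$---proper inclusion makes the strict-extension condition hold vacuously (the set $\Delta' \setminus \Delta$ is empty), and equality gives $\Delta = \Delta'$. Combining, $\Delta^i \vee L^i \gsucc{t} \Delta^k \vee L^k$. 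The construction in \cref{sec:proofs:completeness:fragment:construction} arranges the clauses so that $j \mapsto \Delta^j \vee L^j$ is non-decreasing under $\gsucc{t}$, so this strict inequality forces $k < i$, completing the argument.

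The proof has no conceptual obstacle; the only subtle point is the monotonicity step $\Delta' \subseteq \Delta \Rightarrow \Delta \gsucceq{t} \Delta'$ for the multiset extension, which I would state explicitly and justify directly from the definition of $\mul{\succ}$. Everything else reduces to unfolding definitions and invoking the two hypotheses in the stated roles.
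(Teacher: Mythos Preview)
Your proposal is correct and follows essentially the same argument as the paper's proof: split on \cref{def:redundancy}, dispatch the tautology case immediately, and in the strengthening case locate the witnessing clause in the enumeration, use the multiset monotonicity $\Delta' \subseteq \Delta \Rightarrow \Delta \gsucceq{t} \Delta'$ together with hypothesis~\ref{lemma:all:true:before:clauses} and the ordering of the sequence to force the witness index below $i$, then invoke hypothesis~\ref{lemma:all:true:before:true}. The paper's proof is identical in structure (it uses $j$ for your $k$) and treats the monotonicity step as obvious where you spell it out.
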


\begin{proof}
Assume that ${\Gamma \rightarrow \Delta \strin \GroundS{t}}$ holds. If ${\Gamma
\rightarrow \Delta}$ satisfies \cref{cond:redundancy:tautology} of
\cref{def:redundancy}, then we clearly have ${\mdl{\Model{t}} \Delta}$. Assume
that ${\Gamma \rightarrow \Delta}$ satisfies
\cref{cond:redundancy:strengthening} of \cref{def:redundancy} due to some
clause ${\Gamma^j \rightarrow \Delta^j \vee L^j \in \GroundS{t}}$ such that
${\Gamma^j \subseteq \Gamma}$ and ${\Delta^j \cup \setsingle{L^j} \subseteq
\Delta}$ hold; the latter clearly implies ${\Delta \gsucceq{t} \Delta^j \vee
L^j}$. Let $i$ be an integer satisfying this lemma's assumption. If ${i =
n+1}$, then we clearly have ${j < i}$; otherwise, ${\Delta^i \vee L^i \gsucc{t}
\Delta}$ implies ${\Delta^i \vee L^i \gsucc{t} \Delta^j \vee L^j}$, and so we
also have ${j < i}$. But then, by the lemma assumption we have
${\mdl{\Model{t}} \Delta^j \vee L^j}$, which implies ${\mdl{\Model{t}}
\Delta}$, as required.
\end{proof}

\begin{lemma}\label{lemma:lift-redundancy}
    For each clause ${\Gamma \rightarrow \Delta}$ such that ${\Gamma
    \rightarrow \Delta \strin \S{v}}$ and ${\Gamma\Grounding{t} \subseteq
    \QueryLHS{t}}$ hold, we have ${\Gamma\Grounding{t} \rightarrow
    \Delta\Grounding{t} \strin \GroundS{t}}$.
\end{lemma}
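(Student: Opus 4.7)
My plan is to unfold the definition of containment up to redundancy from Definition \ref{def:redundancy} and case-split on which of its two conditions justifies $\Gamma \rightarrow \Delta \strin \S{v}$. Since the $\subseteq$-based checks in that definition commute with substitution, each case should lift to $\GroundS{t}$ with little work, leaving only a small groundness check.

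In the tautology case (condition~1 of Definition~\ref{def:redundancy}), the head $\Delta$ contains either a pair $\{s \equals s', s \nequals s'\}$ for some terms $s, s'$, or a reflexive equality $s \equals s$. Applying $\Grounding{t}$ preserves this property verbatim: we obtain $\{s\Grounding{t} \equals s'\Grounding{t}, s\Grounding{t} \nequals s'\Grounding{t}\} \subseteq \Delta\Grounding{t}$ or $s\Grounding{t} \equals s\Grounding{t} \in \Delta\Grounding{t}$. Hence $\Gamma\Grounding{t} \rightarrow \Delta\Grounding{t}$ again satisfies condition~1, so it is contained in every set of clauses---in particular in $\GroundS{t}$---up to redundancy.

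In the subclause case (condition~2), there is some $\Gamma' \rightarrow \Delta' \in \S{v}$ with $\Gamma' \subseteq \Gamma$ and $\Delta' \subseteq \Delta$. Since substitution acts atom-wise and is monotone with respect to $\subseteq$, we have $\Gamma'\Grounding{t} \subseteq \Gamma\Grounding{t} \subseteq \QueryLHS{t}$ and $\Delta'\Grounding{t} \subseteq \Delta\Grounding{t}$. Groundness of $\QueryLHS{t}$ forces $\Gamma'\Grounding{t}$ to be ground, and groundness of $\Delta'\Grounding{t}$ follows from the observation that context clauses use only the variables $x$ and $y$, both of which lie in the domain of $\Grounding{t}$ in the non-trivial case. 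These facts together with $\Gamma' \rightarrow \Delta' \in \S{v}$ witness $\Gamma'\Grounding{t} \rightarrow \Delta'\Grounding{t} \in \GroundS{t}$ by the very definition of $\GroundS{t}$; and since this ground clause is a subclause of $\Gamma\Grounding{t} \rightarrow \Delta\Grounding{t}$, condition~2 of Definition~\ref{def:redundancy} closes the argument.

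The one delicate point I expect to have to pin down is the groundness of $\Delta'\Grounding{t}$: because $\Grounding{t}$ is only required to bind $y$ when $t$ has a predecessor, I must justify that the lifted witness is genuinely ground rather than a non-ground instance that fails to sit in $\GroundS{t}$. Everything else is purely syntactic and follows from the fact that substitution distributes over the set-level operations that appear in Definition~\ref{def:redundancy}.
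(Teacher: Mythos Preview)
Your proposal is correct and follows essentially the same approach as the paper's proof: both case-split on the two conditions of Definition~\ref{def:redundancy}, push $\Grounding{t}$ through the tautology witnesses in the first case, and use $\Gamma'\Grounding{t} \subseteq \Gamma\Grounding{t} \subseteq \QueryLHS{t}$ to place the grounded witness clause in $\GroundS{t}$ in the second case. The paper's proof in fact glosses over the groundness of $\Delta'\Grounding{t}$ that you flag; your extra care there is warranted but does not change the argument's structure.
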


\begin{proof}
Assume that ${\Gamma \rightarrow \Delta \strin \S{v}}$ holds. If ${\Gamma
\rightarrow \Delta}$ satisfies \cref{cond:redundancy:tautology} of
\cref{def:redundancy}, then terms $s$ and $s'$ exist such that ${s \equals s
\in \Delta}$ or ${\{ s \equals s', \; s \nequals s' \} \subseteq \Delta}$; but
then, ${s\Grounding{t} \equals s'\Grounding{t} \in \Delta\Grounding{t}}$ or
${\{ s\Grounding{t} \equals s'\Grounding{t}, \; s\Grounding{t} \nequals
s'\Grounding{t} \} \subseteq \Delta\Grounding{t}}$, so ${\Gamma\Grounding{t}
\rightarrow \Delta\Grounding{t} \strin \GroundS{t}}$ holds. Furthermore, if
${\Gamma \rightarrow \Delta}$ satisfies \cref{cond:redundancy:strengthening} of
\cref{def:redundancy}, then clause ${\Gamma' \rightarrow \Delta' \in \S{v}}$
exists such that ${\Gamma' \subseteq \Gamma}$ and ${\Delta' \subseteq \Delta}$;
but then, due to ${\Gamma'\Grounding{t} \subseteq \Gamma\Grounding{t} \subseteq
\QueryLHS{t}}$, we have that ${\Gamma'\Grounding{t} \rightarrow
\Delta'\Grounding{t} \in \GroundS{t}}$ holds, and so ${\Gamma\Grounding{t}
\rightarrow \Delta\Grounding{t} \strin \GroundS{t}}$ holds as well.
\end{proof}

\begin{lemma}\label{lemma:interpretation:satisfies:clause}
    For each ${\Gamma \rightarrow \Delta \in \GroundS{t}}$, we have
    ${\mdl{\Model{t}} \Delta}$.
\end{lemma}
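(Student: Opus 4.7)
The plan is to argue by contradiction using a minimal counterexample with respect to the clause sequence ordering. Suppose for the sake of contradiction that some clause $\Gamma \rightarrow \Delta$ in $\GroundS{t}$ has $\nmdl{\Model{t}} \Delta$; by \cref{lemma:all:true:before} one may choose the $\gsucc{t}$-minimal clause from the sequence, say $C^i = \Gamma^i \rightarrow \Delta^i \vee L^i$, for which $\nmdl{\Model{t}} \Delta^i \vee L^i$, so that $\mdl{\Model{t}} \Delta^j \vee L^j$ for every $j < i$. The strategy is then to exhibit an application of the \ruleword{Eq}, \ruleword{Ineq}, or \ruleword{Factor} rule whose conclusion, once grounded under $\Grounding{t}$, yields a clause whose head is strictly $\gsucc{t}$-smaller than $\Delta^i \vee L^i$ and still refuted by $\Star{\Model{t}}$. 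Since no inference rule is applicable to $\S{v}$, this conclusion must already lie in $\S{v}$ up to redundancy; by \cref{lemma:lift-redundancy} its grounding then lies in $\GroundS{t}$ up to redundancy, and \cref{lemma:all:true:before} forces its head to be satisfied by $\Model{t}$, contradicting its falsity.

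The case split is on the form of $L^i$. If $L^i$ is $l^i \nequals r^i$ with $l^i = r^i$, then because $\Grounding{t}$ substitutes $x$ and $y$ uniformly, the underlying context literal must already be either $f(x) \nequals f(x)$ or $y \nequals y$, so the \ruleword{Ineq} rule applies and yields the smaller clause $\Gamma^i \rightarrow \Delta^i$, which is false since $\nmdl{\Model{t}} \Delta^i$. If $L^i$ is $l^i \nequals r^i$ with $l^i \neq r^i$ but $l^i \equals r^i \in \Star{\Model{t}}$, then the $\gsucceq{t}$-larger side $l^i$ must be reducible by some earlier rule $l^j \RuleSymbol r^j$ generated by a clause $C^j$; the \ruleword{Eq} rule paramodulates $L^j$ into $L^i$ at the reduction position, producing a clause whose ground head replaces $l^j$ by $r^j$ inside $l^i$ and is strictly $\gsucc{t}$-smaller and still false, with $\nmdl{\Model{t}} \Delta^j$ supplied by \cref{lemma:generative:clauses}. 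If $L^i$ is $l^i \equals r^i$, non-generativity of $C^i$ means one of the conditions R1--R4 fails: failure of R1 would give $\mdl{(\Model{t}^{i-1})} \Delta^i \vee L^i$ and hence $\mdl{\Model{t}} \Delta^i \vee L^i$, contradicting our choice of $C^i$; failure of R2 would make $L^i$ the tautology $l^i \equals l^i$, contradicting $\nmdl{\Model{t}} L^i$; failure of R3 reduces to the previous \ruleword{Eq} scenario; and failure of R4 supplies a second equation $l^i \equals s$ in $\Delta^i$ with $s \equals r^i \in \Star{(\Model{t}^{i-1})}$, enabling \ruleword{Factor} to produce the smaller false clause $\Gamma^i \rightarrow (\Delta^i \setminus \setsingle{l^i \equals s}) \vee s \nequals r^i \vee l^i \equals r^i$.

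The principal obstacle is the lifting step from the ground-level inference to the corresponding context-level rule in \cref{table:rules}. The context rules are stated over context clauses in the variables $x$ and $y$, with side conditions phrased in terms of $\csucc{v}$ and of redundancy in $\S{v}$, whereas the argument above takes place at the ground level under $\Grounding{t}$. This lifts cleanly because every clause in $\GroundS{t}$ has a unique preimage in $\S{v}$ and, by \cref{cond:order:compatibility} of the grounded order, every strict $\gsucc{t}$-comparison between grounded context terms is witnessed by a matching $\csucc{v}$-comparison on their preimages; in particular, the maximality conditions of the form $\Delta \not\gsucceq{t} L$ at the ground level lift to $\Delta_0 \not\csucceq{v} L_0$ on the preimages by the contrapositive of compatibility. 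Consequently the side conditions of \ruleword{Eq}, \ruleword{Ineq}, and \ruleword{Factor} are met at the context level, saturation of $\S{v}$ guarantees that the context-level conclusion lies in $\S{v}$ up to redundancy, and \cref{lemma:lift-redundancy} combined with \cref{lemma:all:true:before} closes the contradiction.
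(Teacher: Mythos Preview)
Your proposal is correct and follows the same approach as the paper's proof: pick the least $i$ with $\nmdl{\Model{t}}\Delta^i\vee L^i$, case-split on the form of $L^i$, and in each case use saturation of $\S{v}$ under \ruleword{Eq}, \ruleword{Ineq}, or \ruleword{Factor} together with \cref{lemma:lift-redundancy} and \cref{lemma:all:true:before} to obtain a smaller refuted clause and hence a contradiction. The one detail you elide is the appeal to \cref{lemma:inequality:stays}: it is needed both to pass between $\Star{(\Model{t}^{i-1})}$ and $\Star{\Model{t}}$ when $\Delta^i$ contains inequalities (your ``failure of R1 $\Rightarrow$ $\mdl{\Model{t}}\Delta^i\vee L^i$'' step), and, in the case $L^i=l^i\nequals r^i$, to ensure that the rule reducing $l^i$ already lies in $\Model{t}^{i-1}$ so that the generating clause $C^j$ really satisfies $j<i$.
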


\begin{proof}
For the sake of a contraction, choose ${C^i = \Gamma^i \rightarrow \Delta^i
\vee L^i}$ as the clause in the sequence of clauses from
\cref{sec:proofs:completeness:fragment:construction} with the smallest $i$ such
that ${\nmdl{\Model{t}} \Delta^i \vee L^i}$; please recall that ${L^i \gsucc{t}
\Delta^i}$ and that ${L^i = l^i \eqorineq r^i}$ with ${\eqorineq} \in \{
{\equals}, {\nequals} \}$. Due to our choice of $i$,
\cref{lemma:all:true:before:true} of \cref{lemma:all:true:before} holds for
$C^i$ and $i$. By the definition of $\GroundS{t}$, a clause ${\Gamma
\rightarrow \Delta \vee L \in \S{v}}$ exists such that
\begin{align}
    \Gamma\Grounding{t} = \Gamma^i \subseteq \QueryLHS{t}, \quad \Delta\Grounding{t} = \Delta^i, \quad L\Grounding{t} = L^i, \quad \text{and} \quad \Delta \not\csucceq{v} L. \label{eq:nonground:premise}
\end{align}
We next prove the claim of this lemma by considering the possible forms of $L^i$.

\medskip

Assume ${L^i = l^i \equals r^i}$ with ${l^i = r^i}$. But then, we have
${\mdl{\Model{t}} L^i}$, which contradicts our assumption that
${\nmdl{\Model{t}} \Delta^i \vee L^i}$.

\medskip

Assume ${L^i = l^i \equals r^i}$ with ${l^i \gsucc{t} r^i}$. Then, literal $L$
is of the form ${l \equals r}$ such that ${l\Grounding{t} \equals
r\Grounding{t} = l^i \equals r^i}$. By the definition of $\gsucc{t}$, we have
${l \csucc{v} r}$. We first show that ${\nmdl{(\Model{t}^{i-1})} \Delta^i \vee
L^i}$ holds; towards this goal, note that, for each equality ${s_1 \equals s_2
\in \Delta^i \vee L^i}$, properties ${\nmdl{\Model{t}} s_1 \equals s_2}$ and
${\Model{t}^{i-1} \subseteq \Model{t}}$ imply ${\nmdl{(\Model{t}^{i-1})} s_1
\equals s_2}$; and for each inequality ${s_1 \nequals s_2 \in \Delta^i}$,
\cref{lemma:inequality:stays} and ${\nmdl{\Model{t}} s_1 \nequals s_2}$ imply
${\nmdl{(\Model{t}^{i-1})} s_1 \nequals s_2}$. Thus, clause $C^i$ satisfies
\cref{cond:rule:not:models,cond:rule:order}; however, since ${\nmdl{\Model{t}}
l^i \equals r^i}$, clause $C^i$ is not generational and thus either
\cref{cond:rule:irreducible} or \cref{cond:rulegen} are not satisfied. We next
consider both of these possibilities.
\begin{itemize}
    \item \Cref{cond:rule:irreducible} does not hold---that is, $l^i$ is
    reducible by $\Model{t}^{i-1}$. By the definition of reducibility, a
    position $p$ and a clause ${C^j = \Gamma^j \rightarrow \Delta^j
    \vee l^j \equals r^j}$ generating the rule ${l^j \RuleSymbol r^j}$ exist such that ${j < i}$ and ${\atPos{l^i}{p} =
    l^j}$. Due to ${j < i}$, we have ${l^i \equals r^i \gsucceq{t} l^j \equals
    r^j}$; together with ${l^j \equals r^j \gsucc{t} \Delta^j}$, we have ${l^i
    \equals r^i \gsucc{t} \Delta^j}$. \cref{lemma:generative:clauses} ensures
    ${\nmdl{\Model{t}} \Delta^j}$, and the definition of $\GroundS{t}$ ensures
    that a clause ${\Gamma' \rightarrow \Delta' \vee l' \equals r' \in \S{v}}$
    exists such that
    \begin{align}
         \Gamma'\Grounding{t} = \Gamma^j \subseteq \QueryLHS{t}, \quad \Delta'\Grounding{t} = \Delta^j, \quad  l'\Grounding{t} =l^j, \quad r'\Grounding{t} = r^j, \quad \Delta' \not\csucceq{v} l' \equals r', \quad \text{and} \quad l' \csucc{v} r'.  \label{eq:generational:premise}
    \end{align}
    By the assumption of \cref{theorem:completeness}, the \ruleword{Eq} rule is
    not applicable to \cref{eq:nonground:premise,eq:generational:premise}, and
    so ${\Gamma \wedge \Gamma' \rightarrow \Delta \vee \Delta' \vee
    \replPos{l}{r'}{p} \equals r \strin \S{v}}$. Let ${\Delta'' = \Delta^i \vee
    \Delta^j \vee \replPos{l^i}{r^j}{p} \equals r^i}$. Then clearly
    ${\Gamma\Grounding{t} \cup \Gamma'\Grounding{t} \subseteq \QueryLHS{t}}$,
    so \cref{lemma:lift-redundancy} ensures that ${\Gamma^i \wedge \Gamma^j
    \rightarrow \Delta'' \strin \GroundS{t}}$ holds. Set $\Star{\Model{t}}$ is
    a congruence, so ${\replPos{l^i}{r^j}{p} \equals r^i \not\in
    \Star{\Model{t}}}$ holds, and therefore ${\nmdl{\Model{t}} \Delta''}$
    holds. Finally, $\gsucc{t}$ is a simplification order, which ensures ${l^i
    \equals r^i \gsucc{t} \replPos{l^i}{r^j}{p} \equals r^i}$; together with
    ${l^i \equals r^i \gsucc{t} \Delta^i}$ and ${l^i \equals r^i \gsucc{t}
    \Delta^j}$, we have ${l^i \equals r^i \gsucc{t} \Delta''}$. But then,
    \cref{lemma:all:true:before} implies ${\mdl{\Model{t}} \Delta''}$, which is
    a contradiction.

    \item \Cref{cond:rulegen} does not hold. Then, some term $s$ exists such
    that ${l^i \equals s \in \Delta^i}$ and ${s \equals r^i \in
    \Star{(\Model{t}^{i-1})}}$. Due to ${\Model{t}^{i-1} \subseteq \Model{t}}$,
    we have ${s \equals r^i \in \Star{\Model{t}}}$, and so ${\nmdl{\Model{t}} s
    \nequals r^i}$. Furthermore, ${\Delta \vee L}$ is of the form ${\Delta'
    \vee l \equals r \vee l' \equals r'}$ such that
     \begin{align}
         l\Grounding{t} = l^i, \quad r\Grounding{t} = s, \quad l'\Grounding{t} = l^i, \quad \text{and} \quad r'\Grounding{t} = r^i.
     \end{align}
    But then, we clearly have ${l' = l}$. By the assumption of
    \cref{theorem:completeness}, the \ruleword{Factor} rule is not applicable
    to ${\Gamma \rightarrow \Delta \vee L}$, and so we have ${\Gamma
    \rightarrow \Delta' \vee r \nequals r' \vee l' \equals r' \strin \S{v}}$.
    Let ${\Delta'' = \Delta'\Grounding{t} \vee s \nequals r^i \vee l^i \equals
    r^i}$. But then, ${\Gamma\Grounding{t} \subseteq \QueryLHS{t}}$ and
    \cref{lemma:lift-redundancy} ensure that ${\Gamma^i \rightarrow \Delta''
    \strin \GroundS{t}}$ holds. By all the previous observations, we have
    ${\nmdl{\Model{t}} \Delta''}$. Moreover, ${l^i \gsucc{t} r^i}$ and ${l^i
    \gsucc{t} s}$ imply ${l^i \equals r^i \gsucc{t} s \equals r^i}$; thus,
    ${\Delta^i \vee l^i \equals r^i \gsucc{t} \Delta''}$ holds. But then,
    \cref{lemma:all:true:before} implies ${\mdl{\Model{t}} \Delta''}$, which is
    a contradiction.
\end{itemize}

\medskip

Assume ${L^i = l^i \nequals r^i}$ with ${l^i = r^i}$. Then, literal $L$ is of
the form ${l \nequals r}$ such that ${l\Grounding{t} \nequals r\Grounding{t} =
l^i \nequals r^i}$. But then, ${l^i = r^i}$ implies ${l = r}$. By the
assumption of \cref{theorem:completeness}, the \ruleword{Ineq} rule is not
applicable to clause ${\Gamma \rightarrow \Delta \vee L}$, and so we have
${\Gamma \rightarrow \Delta \strin \S{v}}$. Since ${\Gamma\Grounding{t}
\subseteq \QueryLHS{t}}$, by \cref{lemma:lift-redundancy} we have ${\Gamma^i
\rightarrow \Delta^i \strin \GroundS{t}}$. Clearly, ${\Delta^i \vee l^i
\nequals r^i \gsucc{t} \Delta^i}$, and so \cref{lemma:all:true:before} implies
${\mdl{\Model{t}} \Delta^i}$, which is a contradiction.

\medskip

Assume ${L^i = l^i \nequals r^i}$ with ${l^i \gsucc{t} r^i}$.
\cref{lemma:inequality:stays} ensures ${\nmdl{(\Model{t}^{i-1})} l^i \nequals
r^i}$; hence, $l^i$ is reducible by $\Model{t}^{i-1}$ so, by the definition of
reducibility, a position $p$ and a generative clause ${C^j = \Gamma^j
\rightarrow \Delta^j \vee l^j \equals r^j}$ exist such that ${j < i}$ and
${\atPos{l^i}{p} = l^j}$. Due to ${j < i}$, we have ${l^i \nequals r^i
\gsucc{t} l^j \equals r^j \gsucc{t} \Delta^j}$. \cref{lemma:generative:clauses}
ensures ${\nmdl{\Model{t}} \Delta^j}$, and the definition of $\GroundS{t}$
ensures that a clause ${\Gamma' \rightarrow \Delta' \vee l' \equals r' \in
\S{v}}$ exists satisfying \cref{eq:generational:premise}, as in the first case.
By the assumption of \cref{theorem:completeness}, the \ruleword{Eq} rule is not
applicable to clauses \cref{eq:nonground:premise,eq:generational:premise}, and
so ${\Gamma \wedge \Gamma' \rightarrow \Delta \vee \Delta' \vee
\replPos{l}{r'}{p} \nequals r \strin \S{v}}$ holds. Let ${\Delta'' = \Delta^i
\vee \Delta^j \vee \replPos{l^i}{r^j}{p} \nequals r^i}$. We clearly have
${\Gamma\Grounding{t} \cup \Gamma'\Grounding{t} \subseteq \QueryLHS{t}}$, so by
\cref{lemma:lift-redundancy} we have ${\Gamma^i \wedge \Gamma^j \rightarrow
\Delta'' \strin \GroundS{t}}$. Since $\Star{\Model{t}}$ is a congruence, we
have ${\nmdl{\Model{t}} \replPos{l^i}{l^j}{p} \nequals r^i}$, and therefore
${\nmdl{\Model{t}} \Delta''}$ holds. Finally, $\gsucc{t}$ is a simplification
order, so ${l^i \nequals r^i \gsucc{t} \replPos{l^i}{l^j}{p}}$; together with
${l^i \equals r^i \gsucc{t} \Delta^i}$ and ${l^i \equals r^i \gsucc{t}
\Delta^j}$, we have ${l^i \equals r^i \gsucc{t} \Delta''}$. But then,
\cref{lemma:all:true:before} implies ${\mdl{\Model{t}} \Delta''}$, which is a
contradiction.
\end{proof}

\begin{lemma}\label{lemma:strengthening:true}
    For each clause ${\Gamma \rightarrow \Delta}$ with ${\Gamma \rightarrow
    \Delta \strin \GroundS{t}}$, we have ${\mdl{\Model{t}} \Delta}$.
\end{lemma}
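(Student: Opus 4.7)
The plan is to reduce this lemma directly to \cref{lemma:interpretation:satisfies:clause} by a short case analysis on the two conditions of \cref{def:redundancy}. Since ${\Gamma \rightarrow \Delta \strin \GroundS{t}}$, one of \cref{cond:redundancy:tautology} or \cref{cond:redundancy:strengthening} must hold, and each case handles itself almost immediately.

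First I would treat the tautology case: if some term $s$ satisfies ${s \equals s \in \Delta}$, then ${s \equals s \in \Star{\Model{t}}}$ because $\Star{\Model{t}}$ is an equality interpretation (it is a congruence) and so reflexivity gives ${\mdl{\Model{t}} \Delta}$; similarly, if terms $s$ and $s'$ satisfy ${\{s \equals s', s \nequals s'\} \subseteq \Delta}$, then regardless of whether ${s \equals s' \in \Star{\Model{t}}}$ holds or not, the disjunction contains one literal true in $\Star{\Model{t}}$, so ${\mdl{\Model{t}} \Delta}$ holds (recall from \cref{sec:preliminaries} that satisfaction of a ground disjunction of literals correctly accounts for inequalities).

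Second I would treat the strengthening case: by \cref{cond:redundancy:strengthening} there is a clause ${\Gamma' \rightarrow \Delta' \in \GroundS{t}}$ with ${\Gamma' \subseteq \Gamma}$ and ${\Delta' \subseteq \Delta}$. Applying \cref{lemma:interpretation:satisfies:clause} to ${\Gamma' \rightarrow \Delta'}$ yields ${\mdl{\Model{t}} \Delta'}$, and since satisfaction of a ground disjunction is monotone under extending the disjunction, ${\Delta' \subseteq \Delta}$ gives ${\mdl{\Model{t}} \Delta}$, as required.

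There is no real obstacle here; the lemma is essentially a bookkeeping step that lifts \cref{lemma:interpretation:satisfies:clause} from syntactic membership in $\GroundS{t}$ to the redundancy-closure relation $\strin$, which will be convenient when later invoking ${\cdot \strin \GroundS{t}}$ after applications of \cref{lemma:lift-redundancy}. The only mild care needed is noting that the second clause of \cref{cond:redundancy:tautology} handles the mixed equality/inequality disjunct correctly under the Herbrand-equality semantics of $\Star{\Model{t}}$.
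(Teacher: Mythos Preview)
Your proof is correct and follows essentially the same logic as the paper: a case split on \cref{def:redundancy}, with the strengthening case reduced to \cref{lemma:interpretation:satisfies:clause}. The only cosmetic difference is that the paper factors this case analysis through \cref{lemma:all:true:before} (instantiated with ${i = n+1}$, so that condition~2 there is vacuous and condition~1 is supplied by \cref{lemma:interpretation:satisfies:clause}), whereas you inline that argument directly; the underlying reasoning is identical.
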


\begin{proof}
Apply \cref{lemma:all:true:before} for ${i = n+1}$ and
\cref{lemma:interpretation:satisfies:clause}.
\end{proof}

\begin{lemma}\label{lemma:generative:no-reflexive-inequality}
    For each generative clause ${\Gamma^i \rightarrow \Delta^i \vee l^i
    \equals r^i}$, disjunction $\Delta^i$ does not contain a literal of the
    form ${s \nequals s}$.
\end{lemma}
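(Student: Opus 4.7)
The plan is to argue by contradiction, exploiting non-applicability of both the \ruleword{Ineq} and \ruleword{Elim} rules in the terminal context structure. Suppose $C^i$ is generative and ${s \nequals s \in \Delta^i}$. First I pick any context clause ${C = \Gamma \rightarrow \Delta \vee L \in \S{v}}$ with ${C\Grounding{t} = C^i}$; it contains some literal ${M \in \Delta}$ whose grounding is ${s \nequals s}$. A quick check on $\Grounding{t}$ shows that $M$ must have the form ${t'' \nequals t''}$ for some context $\F$-term $t''$: among the context $\F$-terms ${y}$ and ${f(x)}$, the groundings $t'$ and $f(t)$ are pairwise distinct, since well-foundedness of $\F$-terms rules out ${f(t) = t'}$ and distinct function symbols give distinct groundings.

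Next, set ${C^* := \Gamma \rightarrow (\Delta \vee L) \setminus \setsingle{M}}$. Non-applicability of \ruleword{Ineq} at $C$ gives ${C^* \strin \S{v}}$. I then argue that this containment must be witnessed by \cref{cond:redundancy:tautology} of \cref{def:redundancy} rather than by \cref{cond:redundancy:strengthening}. Indeed, suppose some clause ${C^{**} \in \S{v}}$ strengthens $C^*$ (allowing ${C^{**} = C^*}$ if $C^*$ itself lies in $\S{v}$). Then the head of $C^{**}$ is contained in ${(\Delta \vee L) \setminus \setsingle{M}}$ and so excludes $M$, forcing ${C^{**} \neq C}$. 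But $C^{**}$ also strengthens $C$, giving ${C \strin \S{v} \setminus \setsingle{C}}$, which makes \ruleword{Elim} applicable to $C$---contradicting the terminal hypothesis of \cref{theorem:completeness}. Hence $C^*$ must itself be a tautology.

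A short case analysis on the tautology pattern then closes the proof. If ${s' \equals s' \in (\Delta \vee L) \setminus \setsingle{M}}$ for some context term $s'$, then $L$ cannot be such a trivial equality because R2 for the generative clause requires ${l^i \gsucc{t} r^i}$; hence ${s' \equals s' \in \Delta}$ and ${s'\Grounding{t} \equals s'\Grounding{t} \in \Delta^i}$, a literal trivially in $\Star{\Model{t}^{i-1}}$ and so contradicting R1. Otherwise ${\setsingle{s' \equals t', s' \nequals t'} \subseteq (\Delta \vee L) \setminus \setsingle{M}}$ for some $s', t'$; since $L$ is an equality it cannot be the inequality side, which must therefore lie in $\Delta$. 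Then R1 for $C^i$ forces ${s'\Grounding{t} \equals t'\Grounding{t}}$ to lie both inside (because the inequality $s'\Grounding{t} \nequals t'\Grounding{t} \in \Delta^i$ is false) and outside (because the equality, which coincides with $L^i$ if ${L = s' \equals t'}$, or itself belongs to $\Delta^i$ otherwise, must be false) $\Star{\Model{t}^{i-1}}$---an outright contradiction.

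The most delicate piece will be dispatching the strengthening case cleanly in the middle paragraph: this is where the interplay between \ruleword{Ineq} and \ruleword{Elim} genuinely does the work, funnelling the argument into the tautology case where the conflict with R1 becomes immediate.
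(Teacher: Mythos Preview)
Your proof is correct, but it follows a genuinely different route from the paper's.

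The paper, after lifting to a context clause containing the reflexive inequality and invoking non-applicability of \ruleword{Ineq}, immediately pushes the resulting redundancy back down to $\GroundS{t}$ via \cref{lemma:lift-redundancy}. It then extracts a clause ${\Gamma \rightarrow \Delta \in \GroundS{t}}$ with ${\Delta \subsetneq \Delta^i \cup \{l^i \equals r^i\}}$, places it earlier in the clause sequence, and uses \cref{lemma:interpretation:satisfies:clause} together with \cref{lemma:generative:clauses} and the monotonicity of the $\Model{t}^j$ to contradict \cref{cond:rule:not:models}. In particular, the paper never appeals to the \ruleword{Elim} rule and works semantically through the model-fragment lemmas.

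Your argument instead stays at the context level: you use non-applicability of \ruleword{Elim} to kill the strengthening branch of ${C^* \strin \S{v}}$, which forces $C^*$ to be a syntactic tautology, and then a short case split against \cref{cond:rule:not:models} and \cref{cond:rule:order} finishes. This is more self-contained---it avoids the heavier \cref{lemma:interpretation:satisfies:clause}---at the price of an extra dependency on saturation under \ruleword{Elim}. It is also worth noting that the paper's argument tacitly assumes the strengthening branch when it asserts the existence of ${\Gamma \rightarrow \Delta \in \GroundS{t}}$; the tautology branch there needs exactly the small case analysis you carry out, so in that sense your treatment is the more complete of the two.
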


\begin{proof}
For the sake of a contradiction, let us assume that clause ${C^i = \Gamma^i
\rightarrow \Delta^i \vee l^i \equals r^i \in \GroundS{t}}$ is generative and
that ${s \nequals s \in \Delta^i}$ holds for some term $s$. By the definition
of $\GroundS{t}$, a clause ${\Gamma' \rightarrow \Delta' \vee s' \nequals s'
\vee l' \equals r' \in \S{v}}$ exists such that
\begin{align}
    \Gamma'\Grounding{t} = \Gamma^i \subseteq \QueryLHS{t}, \quad \Delta'\Grounding{t} \cup \setsingle{s'\Grounding{t} \nequals s'\Grounding{t}} = \Delta^i, \quad s'\Grounding{t} = s, \quad l'\Grounding{t} = l^i, \quad \text{and} \quad r'\Grounding{t} = r^i.
\end{align}
By assumption of \cref{theorem:completeness}, the \ruleword{Ineq} rule is not
applicable to this clause, and so we have ${\Gamma' \rightarrow \Delta' \vee l'
\equals r' \strin \S{v}}$. Thus, we have ${\Gamma^i \rightarrow
\Delta'\Grounding{t} \vee l^i \equals r^i \strin \GroundS{t}}$, and so ${\Gamma
\rightarrow \Delta \in \GroundS{t}}$ holds for some ${\Gamma \subseteq
\Gamma^i}$ and some ${\Delta \subsetneq \Delta^i \cup \setsingle{l^i \equals
r^i}}$. Now \cref{lemma:generative:clauses} implies ${\nmdl{\Model{t}}
\Delta^i}$; moreover, by \cref{cond:rule:not:models}, we have
${\nmdl{(\Model{t}^{i-1})} \Delta^i \vee l^i \equals r^i}$. However, by
\cref{lemma:interpretation:satisfies:clause} we have ${\mdl{\Model{t}} \Gamma
\rightarrow \Delta}$. Now let $j$ be the index of clause ${\Gamma \rightarrow
\Delta}$ in the sequence of clauses from
\cref{sec:proofs:completeness:fragment:construction}; due to
${\Star{(\Model{t}^j)} \subseteq \Star{(\Model{t}^{i-1})}}$ and
\cref{lemma:inequality:stays}, we have ${\mdl{(\Model{t}^j)} \Gamma \rightarrow
\Delta}$. Since ${j < i}$, by the same argument we have
${\mdl{(\Model{t}^{i-1})} \Gamma \rightarrow \Delta}$. But then, ${\Delta
\subseteq \Delta^i \vee l^i \equals r^i}$ implies ${\mdl{(\Model{t}^{i-1})}
\Delta^i \vee l^i \equals r^i}$, which is a contradiction.
\end{proof}

\begin{lemma}\label{lemma:query:not:true}
    ${\nmdl{\Model{t}} \QueryLHS{t} \rightarrow \QueryRHS{t}}$.
\end{lemma}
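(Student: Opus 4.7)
The strategy is to split the goal into two halves: first that every atom in $\QueryLHS{t}$ is satisfied by $\Star{\Model{t}}$, and second that no literal in $\QueryRHS{t}$ is satisfied by $\Star{\Model{t}}$. The first half is essentially immediate from \cref{cond:local:LHS}: for every ${A \in \QueryLHS{t}}$ we have ${\QueryLHS{t} \rightarrow A \strin \GroundS{t}}$, so \cref{lemma:strengthening:true} yields ${\mdl{\Model{t}} A}$, and hence ${\mdl{\Model{t}} \QueryLHS{t}}$.

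The second half is the main work and is proved contrapositively. Suppose for contradiction that some literal ${L \in \QueryRHS{t}}$ holds in $\Star{\Model{t}}$. By \cref{cond:local:RHS}, together with the fact that $\InputQueryRHS$ is a query-clause head (consisting only of atoms), $L$ has the form ${A \equals \TOP}$; moreover $\QueryRHS{t} \subseteq \GroundPr{t}$ in the non-root case, so $A$ is of one of the rigid shapes $B(t')$, $S(t,t')$, $S(t',t)$ built from $t$ and its predecessor $t'$. I would then argue that, because atoms in $\QueryRHS{t}$ are minimal among context $\P$-terms other than $\{t,t'\}$ by condition \cref{cond:order:query} of the ground order (which in turn rests on \cref{cond:term:order:PR} and property \ref{theorem:completeness:RHS}), the only way the equality ${A \equals \TOP \in \Star{\Model{t}}}$ can be produced is by a rewrite rule ${A \RuleSymbol \TOP}$ added directly at some step $i$ of the construction in \cref{sec:proofs:completeness:fragment:construction}. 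Consequently the corresponding clause ${C^i = \Gamma^i \rightarrow \Delta^i \vee L^i}$ is generative with ${L^i = A \equals \TOP}$.

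From here the contradiction is quick. By the construction of $C^i$, the maximal literal satisfies ${L^i \gsucc{t} L'}$ for every ${L' \in \Delta^i}$; since $A$ is among the smallest relevant $\P$-terms in $\gsucc{t}$ and is the maximal side of $L^i$, every $L' \in \Delta^i$ must itself be an atom of the form ${A' \equals \TOP}$ with ${A' \in \QueryRHS{t}}$. Meanwhile ${\Gamma^i \subseteq \QueryLHS{t}}$ by the definition of $\GroundS{t}$. Hence the existence of $C^i$ witnesses ${\QueryLHS{t} \rightarrow \QueryRHS{t} \strin \GroundS{t}}$ via \cref{cond:redundancy:strengthening} of \cref{def:redundancy}, which directly contradicts \cref{cond:local:query}.

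The main obstacle will be making the second paragraph's key step precise: showing that ${A \equals \TOP \in \Star{\Model{t}}}$ can only arise from a top-level generative rule whose head literal is exactly ${A \equals \TOP}$, rather than via a chain of rewrites through intermediate $\P$-terms. This should follow from the simplification-order property of $\gsucc{t}$ (so every rewrite strictly decreases terms), the rigid shape of atoms in $\GroundPr{t}$ (which rules out nontrivial subterm rewrites inside $A$, since its only $\F$-subterms are $t$ and $t'$), and the minimality of $\QueryRHS{t}$ atoms in $\gsucc{t}$ (which leaves no room for an intermediate $\P$-term strictly between $A$ and $\TOP$). Once this step is in place, the ordering bookkeeping in the last paragraph and the appeal to \cref{cond:local:query} finish the proof.
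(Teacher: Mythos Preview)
Your overall strategy matches the paper's: establish $\mdl{\Model{t}} \QueryLHS{t}$ via \cref{cond:local:LHS} and \cref{lemma:strengthening:true}, then derive a contradiction from any atom $A \in \QueryRHS{t}$ with $\mdl{\Model{t}} A$ by exhibiting a generative clause whose head lies entirely in $\QueryRHS{t}$, violating \cref{cond:local:query}. The first half is fine.

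There is, however, a genuine gap in your treatment of $\Delta^i$. You assert that every $L' \in \Delta^i$ must be an atom $A' \equals \TOP$ with $A' \equals \TOP \in \QueryRHS{t}$, invoking the minimality of $\QueryRHS{t}$-atoms. But that minimality (from \cref{cond:term:order:PR}, transported to $\gsucc{t}$ via \cref{cond:order:query}) is only relative to context terms other than $x$ and $y$; after grounding it says nothing about literals whose maximal term lies in $\{t,t'\}$. Concretely, the ground literal $t' \nequals t'$ (arising from the context literal $y \nequals y$) satisfies $L^i \gsucc{t} t' \nequals t'$, since $t'$ is a proper subterm of $A$, yet it is not an atom in $\QueryRHS{t}$. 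The paper closes this gap with an explicit case distinction: if the maximal term $l$ of $L'$ falls in $\{t,t'\}$, the shape of context literals forces $l = r = t'$; then $t' \equals t'$ contradicts \cref{cond:rule:not:models} for a generative clause, and $t' \nequals t'$ is excluded by the auxiliary \cref{lemma:generative:no-reflexive-inequality} (no generative clause has $s \nequals s$ in $\Delta^i$). You need this case split and that lemma.

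A secondary remark: the step you single out as the ``main obstacle''---forcing $L^i = A \equals \TOP$ exactly---is not where the real difficulty lies. Your sketch for it is essentially right (the $\F$-subterms $t$ and $t'$ of $A$ are irreducible in $\Model{t}$, and the only rules with a $\P$-term on the left are atoms), but the paper bypasses the issue entirely: it takes any generative clause with $\atPos{A}{p} = l^i$ at an arbitrary position $p$, notes $A \gsucceq{t} l^i \equals r^i \gsucc{t} \Delta^i$ by the subterm property of the simplification order, and then runs the case analysis over every literal of the full head $\Delta^i \vee L^i$. This avoids any separate argument about irreducibility of $t$ and $t'$.
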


\begin{proof}
For ${\mdl{\Model{t}} \QueryLHS{t}}$, note that \cref{cond:local:RHS} ensures
${\QueryLHS{t} \rightarrow A \strin \GroundS{t}}$, and so
\cref{lemma:strengthening:true} ensures ${\mdl{\Model{t}} A}$ for each atom ${A
\in \QueryLHS{t}}$.

For ${\nmdl{\Model{t}} \QueryRHS{t}}$, assume for the sake of a contradiction
that an atom ${A \in \QueryRHS{t}}$ exists such that ${\mdl{\Model{t}} A}$.
Then, a generative clause ${C^i = \Gamma^i \rightarrow \Delta^i \vee l^i
\equals r^i \in \GroundS{t}}$ and a position $p$ exist such that ${\atPos{A}{p}
= l^i}$; let ${\Delta = \Delta^i \vee l^i \equals r^i}$. Since $\gsucc{t}$ is a
simplification order and ${l^i \gsucc{t} r^i}$, we have ${A \gsucceq{t} l^i
\equals r^i}$; but then, since ${l^i \equals r^i \gsucc{t} \Delta^i}$, we have
${A \gsucceq{t} \Delta}$. We next consider an arbitrary literal ${l \eqorineq r
\in \Delta}$ with ${{\eqorineq} \in \setsingle{{\equals}, {\nequals}}}$ and ${l
\gsucceq{t} r}$; by the observations made thus far, ${A \gsucceq{t} l \eqorineq
r}$ holds. By \cref{cond:order:query}, one of the following holds.
\begin{enumerate}
    \item ${l \in \setsingle{ t, t'}}$. Moreover, since ${l \eqorineq r}$ is
    obtained by grounding a context literal, both $l$ and $r$ can be of the
    form $f(t)$ or $t'$. Together with ${l \gsucceq{t} r}$, we have ${l = r =
    t'}$. Now if ${l \eqorineq r}$ is ${t' \equals t'}$, then clause $C^i$ is
    not generative due to \cref{cond:rule:not:models}. Hence, the only
    remaining possibility is for ${l \eqorineq r}$ to be of the form ${t'
    \nequals t'}$; but then, clause $C^i$ is not generative by
    \cref{lemma:generative:no-reflexive-inequality}. Consequently, in either
    case we get a contradiction.
    
    \item ${l \equals r \in \QueryRHS{t}}$ where ${r = \TOP}$.
\end{enumerate}
Thus, the second point above holds for arbitrary ${l \eqorineq r \in \Delta}$,
and therefore we have ${\Delta \subseteq \QueryRHS{t}}$. But then, ${\Gamma^i
\subseteq \QueryLHS{t}}$ implies that ${\QueryLHS{t} \rightarrow \QueryRHS{t}
\strin \GroundS{t}}$ holds, which contradicts \cref{cond:local:query}.
\end{proof}

\subsection{Interpreting the Ontology $\Onto$}\label{sec:proofs:completeness:onto}

We now combine the rewrite systems $\Model{t}$ constructed in
\cref{sec:proofs:completeness:fragment} into a single rewrite system
$\EntireModel$, and we then show that $\Star{\EntireModel}$ satisfies
${\mdl{\EntireModel} \Onto}$ and ${\nmdl{\EntireModel} \InputQueryLHS
\rightarrow \InputQueryRHS}$.

\subsubsection{Unfolding the Context Structure}\label{sec:proofs:completeness:onto:unfolding}

We construct $\EntireModel$ by a partial induction over the terms in $\T$. We
define several partial functions: function $\VertexMapSymbol$ maps a term $t$
to a context ${\VertexMap{t} \in \V}$; functions $\Gamma$ and $\Delta$ assign
to a term $t$ a conjunction $\QueryLHS{t}$ and a disjunction $\QueryRHS{t}$,
respectively, of atoms; and function $\EntireModel$ maps each term into a model
fragment $\Model{t}$ for $t$, $\VertexMap{t}$, $\QueryLHS{t}$, and
$\QueryRHS{t}$.
\begin{enumerate}[label={M\arabic*.},ref={M\arabic*},leftmargin=2.5em]
    \item\label[condition]{def:model:base}
    For the base case, we consider the constant $c$.
    \begin{flalign}
        \VertexMap{c}   & = q                                                                               & \\
        \QueryLHS{c}    & = \InputQueryLHS\Grounding{c}                                                     &\label{eq:root:query:LHS} \\
        \QueryRHS{c}    & = \InputQueryRHS\Grounding{c}                                                     & \label{eq:root:query:RHS} \\
        \Model{c}       & = \text{the model fragment for } c, q, \QueryLHS{c}, \text{ and } \QueryRHS{c}    & \label{eqroot:model}
    \end{flalign}

    \item\label[condition]{def:model:step}
    For the inductive step, assume that $\VertexMap{t'}$ has already been
    defined, and consider an arbitrary function symbol ${f \in \F}$ such that
    $f(t')$ is irreducible by $\Model{t'}$. Let ${u = \VertexMap{t'}}$ and ${t
    = f(t')}$. We have two possibilities.
    \begin{enumerate}[label={M2.\alph*.},ref={M2.\alph*},leftmargin=2.8em]
        \item\label[condition]{def:model:step:normal}
        Term $t$ occurs in $\Model{t'}$. Then, term ${t = f(t')}$ was generated
        in $\Model{t'}$ by some ground clause ${C = \Gamma \rightarrow \Delta
        \vee L \in \GroundS{t'}}$ such that ${L \gsucc{t} \Delta}$ and $f(t')$
        occurs in $L$. By the definition of $\GroundS{t}$, then a clause ${C' =
        \Gamma' \rightarrow \Delta' \vee L' \in \S{u}}$ exists such that ${C =
        C'\Grounding{t'}}$ and $L'$ contains $f(x)$; moreover, ${L \gsucc{t'}
        \Delta}$ implies ${\Delta' \not\csucceq{u} L'}$. The \ruleword{Succ}
        and \ruleword{Core} rules are not applicable to $\D$, so we can choose
        a context ${v \in \V}$ such that ${\tuple{u,v,f} \in \E}$ and ${A
        \rightarrow A \strin \S{v}}$ for each ${A \in K_2}$, where $K_2$ is as
        in the \ruleword{Succ} rule. We define the following:
        \begin{flalign}
            \VertexMap{t}   & = v                                                                               & \\
            \QueryLHS{t}    & = \Star{\Model{t'}} \cap \GroundSu{t}                                             & \label{eq:context:query:LHS} \\
            \QueryRHS{t}    & = \GroundPr{t} \setminus \Star{\Model{t'}}                                        & \label{eq:context:query:RHS} \\
            \Model{t}       & = \text{the model fragment for } t, v, \QueryLHS{t}, \text{ and } \QueryRHS{t}    & \label{eq:context:model}
        \end{flalign}

        \item\label[condition]{def:model:step:abnormal}
        Term $t$ does not occur in $\Model{t'}$. Then, let ${\Model{t} =
        \setsingle{t \RuleSymbol c}}$, and we do not define any other functions
        for $t$.
    \end{enumerate}
\end{enumerate}
Finally, let $\EntireModel$ be the rewrite system defined by ${\EntireModel =
\bigcup_t \Model{t}}$.

\begin{lemma}\label{lemma:pairwise:compatible}
    The model fragments $\Model{c}$ and $\Model{t}$ constructed in lines
    \cref{eqroot:model} and \cref{eq:context:model} satisfy
    \cref{cond:local:query,cond:local:LHS,cond:local:RHS} in
    \cref{sec:proofs:completeness:fragment}.
\end{lemma}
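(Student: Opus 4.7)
My plan is to prove \cref{lemma:pairwise:compatible} by induction on the construction of $\EntireModel$ from \cref{sec:proofs:completeness:onto:unfolding}, splitting into the base case \cref{def:model:base} (where $t = c$) and the inductive step \cref{def:model:step:normal} (where $t = f(t')$). The inductive hypothesis is that \cref{cond:local:query,cond:local:RHS,cond:local:LHS} already hold for $\Model{t'}$, so in particular \cref{lemma:strengthening:true,lemma:query:not:true} can be invoked at $t'$.

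\textbf{Base case.} For $t = c$, \cref{cond:local:RHS,cond:local:LHS} follow directly from \eqref{eq:root:query:LHS}--\eqref{eq:root:query:RHS} and conditions \cref{theorem:completeness:RHS,theorem:completeness:LHS} of \cref{theorem:completeness}, combined with an application of \cref{lemma:lift-redundancy} at the parameters $(c,q,\QueryLHS{c})$. For \cref{cond:local:query} I argue the contrapositive: if $\QueryLHS{c} \rightarrow \QueryRHS{c} \strin \GroundS{c}$, then \cref{cond:redundancy:tautology} of \cref{def:redundancy} is excluded because $\QueryRHS{c}$ consists only of atoms $B(c)$ (since the query-clause restriction forces $\InputQueryRHS$ to contain only atoms $B(x)$), admitting no reflexive equalities or $\equals/\nequals$ pairs; and \cref{cond:redundancy:strengthening} lifts back---using that $\Grounding{c}$ maps atoms $B(x)$ injectively to distinct atoms $B(c)$---to $\InputQueryLHS \rightarrow \InputQueryRHS \strin \S{q}$, contradicting the standing assumption.

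\textbf{Inductive step.} \cref{cond:local:RHS} is immediate from \eqref{eq:context:query:RHS}. For \cref{cond:local:LHS}, fix $A \in \QueryLHS{t} = \Star{\Model{t'}} \cap \GroundSu{t}$ and write $A = A'\Grounding{t}$ with $A' \in \SUtriggers$, so that $A'\sigma\Grounding{t'} = A$ for $\sigma = \{x \mapsto f(x), y \mapsto x\}$. An analysis of how $A$ is witnessed in $\Star{\Model{t'}}$---using Church-Rosser to trace the rewrite $A \rightsquigarrow \TOP$ back to a generative clause of $\GroundS{t'}$ whose maximal head literal has the form $A'\sigma\Grounding{t'} \equals \TOP$, and lifting that clause to one of $\S{u}$ with $A'\sigma$ as a strictly maximal head atom---places $A'$ in the set $K_2$ of the \ruleword{Succ} rule applied at $u$ with symbol $f$. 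Non-applicability of \ruleword{Succ} and \ruleword{Core} then yields $\top \rightarrow A' \strin \S{v}$ (if $A' \in \core{v}$) or $A' \rightarrow A' \strin \S{v}$ otherwise, and \cref{lemma:lift-redundancy}, together with $A \in \QueryLHS{t}$, delivers $\QueryLHS{t} \rightarrow A \strin \GroundS{t}$. For \cref{cond:local:query}, I argue by contradiction: assuming $\QueryLHS{t} \rightarrow \QueryRHS{t} \strin \GroundS{t}$, \cref{cond:redundancy:tautology} is excluded since $\QueryRHS{t} \subseteq \GroundPr{t}$ comprises atomic equalities with pairwise-distinct left-hand sides; so \cref{cond:redundancy:strengthening} supplies a clause $\Gamma'' \rightarrow \Delta'' \in \S{v}$ strengthening the grounded clause, necessarily with $\Delta'' \subseteq \PRtriggers$ because $\Delta''\Grounding{t} \subseteq \QueryRHS{t} \subseteq \GroundPr{t}$. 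Re-applying the argument used for \cref{cond:local:LHS} at $u$ produces, for each body atom $A_i \in \Gamma''$, a clause of $\S{u}$ with $A_i\sigma$ as a strictly maximal head atom; together with the edge $\tuple{u,v,f}$ this fulfils every precondition of the \ruleword{Pred} rule, so non-applicability of \ruleword{Pred} forces its conclusion to lie in $\S{u}$ up to redundancy. Grounding that conclusion via $\Grounding{t'}$ yields a clause in $\GroundS{t'}$ whose head is contained in $\QueryRHS{t'} \cup (\GroundPr{t} \setminus \Star{\Model{t'}})$ and is therefore falsified by $\Star{\Model{t'}}$, contradicting \cref{lemma:strengthening:true} at $t'$.

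\textbf{The main obstacle} is the back-tracing used in \cref{cond:local:LHS} (and reused in \cref{cond:local:query}): showing that truth of the ground atom $A$ in the congruence $\Star{\Model{t'}}$ forces the existence of a context clause in $\S{u}$ whose head contains $A'\sigma$ as a strictly maximal literal. This requires following the rewrite from $A$ to $\TOP$, identifying the final rule that reduces a $\P$-term of the form $A'\sigma\Grounding{t'}$ to $\TOP$, and using the maximality conditions on generative clauses together with the compatibility between $\csuccsym$ and the grounding order $\gsucc{t'}$ to produce the desired non-ground clause in $\S{u}$.
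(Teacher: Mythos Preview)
Your overall structure matches the paper's proof: induction on the unfolding, with the \ruleword{Succ} and \ruleword{Core} rules driving \cref{cond:local:LHS} and the \ruleword{Pred} rule driving \cref{cond:local:query}. The base case and the treatment of \cref{cond:local:RHS} and \cref{cond:local:LHS} in the inductive step are essentially correct, although you over-complicate what you call the ``main obstacle''. Since $t=f(t')$ is irreducible by $\Model{t'}$ (this is a precondition of \cref{def:model:step}) and $t'$ is trivially irreducible by $\Model{t'}$ (every left-hand side in $\Model{t'}$ is either a successor of $t'$ or a $\P$-term), each atom ${A\in\QueryLHS{t}\subseteq\Star{\Model{t'}}}$ has all its $\F$-subterms irreducible; hence ${A\RuleSymbol\TOP}$ must itself be a rule of $\Model{t'}$, produced directly by some generative clause. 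No Church--Rosser chase is needed, and the lift of that generative clause to $\S{u}$ immediately exhibits $A'\sigma$ as a strictly maximal head atom.

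There is, however, a genuine gap in your argument for \cref{cond:local:query}. After applying \ruleword{Pred} and grounding via $\Grounding{t'}$, the head of the resulting clause in $\GroundS{t'}$ has the form $\bigvee_{i=1}^{m}\Delta_i\vee\bigvee_{i=m+1}^{m+n}A_i$, where each $\Delta_i$ is the non-maximal part of the $i$-th generative clause you identified. Your claim that this head is contained in ${\QueryRHS{t'}\cup(\GroundPr{t}\setminus\Star{\Model{t'}})}$ is false: the $\Delta_i$ are arbitrary disjunctions of ground context literals smaller than the generated atom, and nothing forces them to lie in $\QueryRHS{t'}$. The paper closes this gap by invoking \cref{lemma:generative:clauses}, which guarantees ${\nmdl{\Model{t'}}\Delta_i}$ for every generative clause; combined with ${A_i\in\QueryRHS{t}=\GroundPr{t}\setminus\Star{\Model{t'}}}$ for ${m<i\le m+n}$, this falsifies the entire head and yields the contradiction with \cref{lemma:strengthening:true}. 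You already have the generative clauses in hand, so the fix is simply to replace the incorrect containment claim by an appeal to \cref{lemma:generative:clauses}.
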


\begin{proof}
The proof is by induction on the structure of terms ${t \in
\dom{\VertexMapSymbol}}$. For ${t = c}$,
\cref{cond:local:query,cond:local:LHS,cond:local:RHS} hold directly from
conditions \ref{theorem:completeness:query} through
\ref{theorem:completeness:LHS} of \cref{theorem:completeness}. We next assume
that the lemma holds for some term ${t' \in \dom{\VertexMapSymbol}}$, and we
consider an arbitrary term $t$ of the form ${t = f(t')}$; let ${u =
\VertexMap{t'}}$ and ${v = \VertexMap{t}}$.
\Cref{cond:local:RHS} holds because
${\QueryRHS{t} = \GroundPr{t} \setminus \Star{\Model{t'}}}$ due to \cref{eq:context:query:RHS},
and hence ${\QueryRHS{t} \subseteq \GroundPr{t}}$.
Before proceeding,
note that terms $t$ and $t'$ are irreducible by $\Model{t'}$ due to
\cref{def:model:step}; but then, since ${\QueryLHS{t} \subseteq
\Star{\Model{t'}}}$ holds by \cref{eq:context:query:LHS}, each each atom ${A_i
\in \Model{t'}}$ is generated by clause satisfying \cref{eq:pred:side:ground}
(where subscript $i$ does not necessarily indicate the position of the clause
in sequence of clauses from
\cref{sec:proofs:completeness:fragment:construction}). By the definition of
$\GroundS{t'}$, then there exists a clause satisfying
\cref{eq:pred:side:nonground}.
\begin{align}
    \Gamma_i \rightarrow \Delta_i \vee A_i      & \in \GroundS{t'}  && \text{with} \quad A_i \gsucc{t} \Delta_i \label{eq:pred:side:ground} \\
    \Gamma_i' \rightarrow \Delta_i' \vee A_i'   & \in \S{u}         && \Gamma_i = \Gamma_i'\Grounding{t'}, \quad \Delta_i = \Delta_i'\Grounding{t'}, \quad A_i = A_i'\Grounding{t'}, \quad \text{and} \quad \Delta_i' \not\csucceq{u} A_i' \label{eq:pred:side:nonground}
\end{align}

\smallskip

For \cref{cond:local:LHS}, consider an arbitrary atom ${A_i \in \QueryLHS{t}}$,
let \cref{eq:pred:side:ground} be the clause that generates $A_i$ in
$\Model{t'}$, and let \cref{eq:pred:side:nonground} be the corresponding
nonground clause. Since ${A_i \in \GroundSu{t}}$, atom $A_i'$ is of the form
$A_i''\sigma$, where $\sigma$ is the substitution from the \ruleword{Succ}
rule; but then, ${A_i'' \in K_2}$, where $K_2$ is as specified in the
\ruleword{Succ} rule. In \cref{def:model:step:normal} we chose $v$ so that the
\ruleword{Succ} rule is satisfied, and therefore ${A_i'' \rightarrow A_i''
\strin \S{v}}$; but then, since ${A_i''\Grounding{t} = A_i}$, we have ${A_i
\rightarrow A_i \strin \GroundS{t}}$, as required for \cref{cond:local:LHS}.

\smallskip

To prove that \cref{cond:local:query} holds as well, assume for the sake of a
contradiction that ${\QueryLHS{t} \rightarrow \QueryRHS{t} \strin \GroundS{t}}$
holds. We have ${\QueryRHS{t} \subseteq \GroundPr{t}}$ due to
\cref{eq:context:query:RHS}. Therefore, due to
\cref{cond:redundancy:strengthening} of \cref{def:redundancy}, set
$\GroundS{t}$ contains a clause
\begin{align}
    \bigwedge_{i=1}^m A_i \rightarrow \bigvee_{i=m+1}^{m+n} A_i \quad \text{with} \quad \setbuilder{A_i}{1 \leq i \leq m} \subseteq \QueryLHS{t} \quad \text{and} \quad \setbuilder{A_i}{m+1 \leq i \leq m+n} \subseteq \QueryRHS{t} \subseteq \GroundPr{t}. \label{eq:pred:main:ground}
\end{align}
By the definition of $\GroundS{t}$, set $\S{v}$ contains a clause
\begin{align}
    \bigwedge_{i=1}^m A_i' \rightarrow \bigvee_{i=m+1}^{m+n} A_i' \quad \text{where} \quad A_i = A_i'\Grounding{t} \text{ for } 1 \leq i \leq m+n \quad \text{and} \quad A_i' \in \PRtriggers \text{ for } m+1 \leq i \leq m+n. \label{eq:pred:main:nonground}
\end{align}
Now each $A_i$ with ${1 \leq i \leq m}$ is generated by a ground clause
\cref{eq:pred:side:ground}, and the latter is obtained from the corresponding
nonground clause \cref{eq:pred:side:nonground}. The \ruleword{Pred} rule is not
applicable to \cref{eq:pred:main:nonground,eq:pred:side:nonground} so
\cref{eq:pred:conclusion:nonground} holds; together with
\cref{lemma:lift-redundancy}, this ensures \cref{eq:pred:conclusion:ground}.
\begin{align}
    \bigwedge_{i=1}^m \Gamma_i' \rightarrow \bigvee_{i=1}^{m}\Delta_i' \vee \bigvee_{i=m+1}^{m+n} A_i'\sigma & \strin \S{u} \quad \text{for } \sigma = \subst{x \mapsto f(x), \; y \mapsto x} \label{eq:pred:conclusion:nonground} \\
    \bigwedge_{i=1}^m \Gamma_i \rightarrow \bigvee_{i=1}^{m}\Delta_i \vee \bigvee_{i=m+1}^{m+n} A_i          & \strin \GroundS{t'} \label{eq:pred:conclusion:ground}
\end{align}
By \cref{lemma:generative:clauses}, we have ${\nmdl{\Model{t'}} \Delta_i}$; and
\cref{eq:context:query:RHS} ensures that ${\nmdl{\Model{t'}} \QueryRHS{t}}$,
and so ${\nmdl{\Model{t'}} A_i}$ for each ${m+1 \leq i \leq m+n}$; however,
this contradicts \cref{eq:pred:conclusion:ground} and
\cref{lemma:strengthening:true}.
\end{proof}

\subsubsection{Termination, Confluence, and Compatibility}

\begin{lemma}\label{lemma:church-rosser}
    The rewrite system $\EntireModel$ is Church-Rosser.
\end{lemma}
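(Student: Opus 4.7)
The plan is to invoke the standard criterion recalled in the preliminaries: a terminating, left-reduced rewrite system is Church-Rosser. So I would prove these two properties of $\EntireModel$ separately.

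For termination, each local fragment $\Model{t}$ is already oriented by its simplification order $\gsucc{t}$ on ground terms in the $\F$-neighborhood of $t$, and the shortcut rules $t \RuleSymbol c$ arising in \cref{def:model:step:abnormal} are oriented by term structure alone. I would lift these local orders to a single global simplification order on $\T$---for instance, a lexicographic path order whose precedence extends $\funsuccsym$ and ranks $c$ below every other constant appearing in the unfolding---that orients every rule in $\EntireModel$ simultaneously. Termination then follows from the standard fact, recalled in the preliminaries, that rewriting by a relation contained in a simplification order is well-founded.

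For left-reducedness, I would fix a rule $l \RuleSymbol r \in \EntireModel$ and pick a fragment $\Model{t}$ that contains it. Within that fragment, $l$ is irreducible by $\Model{t} \setminus \setsingle{l \RuleSymbol r}$ since $\Model{t}$ is individually Church-Rosser, hence left-reduced. Across fragments, every rule of a distinct $\Model{s}$ has its left-hand side sitting in the $\F$- or $\P$-neighborhood of $s$, so it can rewrite $l$ only when $t$ and $s$ are adjacent in the unfolding, that is $t = f(s)$ or $s = f(t)$. The precondition in \cref{def:model:step} that the successor term be irreducible by the already-built fragment before the construction extends to it then rules out every overlap in which a top-level $\F$-term from one fragment appears at a proper position in an LHS from the other.

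The main obstacle I expect is the boundary $\P$-atoms of the form $S(t',t)$ and $S(t,t')$ for $t = f(t')$, which lie in the neighborhoods of both $t'$ and $t$; both fragments can a priori generate rules rewriting such atoms to $\TOP$. The key observation is that whenever such a boundary atom $A$ is true in $\Star{\Model{t'}}$, the definition $\QueryLHS{t} = \Star{\Model{t'}} \cap \GroundSu{t}$ pulls $A$ into $\QueryLHS{t}$, and condition \cref{cond:local:LHS}, verified for the unfolding in \cref{lemma:pairwise:compatible}, ensures that $\Model{t}$ re-derives exactly the rule $A \RuleSymbol \TOP$; since $\EntireModel$ is a set, identical rules collapse and no left-reducedness violation arises. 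A short case analysis on the shape of $l$ (top symbol in $\F$ versus in $\P$, and position of $t$, $t'$, or $f(t)$ within $l$) finishes the argument, after which Church-Rosser-ness of $\EntireModel$ follows from the termination plus left-reducedness implication.
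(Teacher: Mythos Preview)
Your termination argument is the paper's: a single lexicographic path order on all ground terms, extending $\funsucc$ with $\TOP$ smallest, orients every rule in every fragment (including the $t \RuleSymbol c$ shortcuts from \cref{def:model:step:abnormal}).

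For left-reducedness there are real gaps. First, a minor slip: Church--Rosser does not imply left-reduced; the implication runs the other way. You should cite directly that each $\Model{t}$ is left-reduced, which is shown in the fragment construction.

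More seriously, the claim that a rule from a distinct $\Model{s}$ can rewrite $l$ ``only when $t$ and $s$ are adjacent'' does not follow from the neighbourhood observation. The term $l$ contains $t$, and $t$ contains every ancestor of $t$ as a subterm, so an $\F$-term left-hand side $g(s)$ from an arbitrarily distant $\Model{s}$ could in principle match a deep subterm of $l$. What actually rules this out is an \emph{inductive} use of the M2 precondition: every $w \in \dom{\VertexMapSymbol}$ is irreducible by all of $\EntireModel$ (base case $w=c$; step: $w=g(w')$ is irreducible by $\Model{w'}$ by M2, is not a left-hand side in $\Model{w}$ since those are of the form $f(w)$ or $\P$-terms, and its proper subterms are irreducible by induction). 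Once you have that, all proper $\F$-subterms of any left-hand side are irreducible, and left-reducedness reduces to the within-fragment case plus the trivial observation that any two rules sharing a $\P$-term left-hand side both have right-hand side $\TOP$ and hence coincide.

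Finally, your ``main obstacle'' is a red herring. Duplicate rules $A \RuleSymbol \TOP$ in two fragments are harmless because $\EntireModel$ is a set; removing the rule removes it everywhere, and the inductive irreducibility of $t$ and $t'$ already forbids any other rewrite of $A$. Your detour through \cref{cond:local:LHS} is both unnecessary and incomplete: it would also require $A \in \GroundSu{t}$, which you never check, and not every boundary atom lies in $\GroundSu{t}$. The paper sidesteps all of this by taking the \emph{deepest} redex $l'=\atPos{l}{p}$ of $l$ with respect to $\EntireModel \setminus \{l \RuleSymbol r\}$: the $\P$-term case collapses as above, and otherwise $l'$ is an $\F$-term of the form $f(t)$ (from \cref{def:model:step:normal}) or $t$ (from \cref{def:model:step:abnormal}), and the M2 irreducibility precondition along the chain of ancestors yields the contradiction directly.
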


\begin{proof}
We show that $\EntireModel$ is terminating and left-reduced, and thus
Church-Rosser. In the proof of the former, we use a total simplification order
$\msucc$ on all ground $\F$- and $\P$-terms defined as follows. We extend the
precedence $\funsucc$ from \cref{def:context:term:order} to all $\F$- and
$\P$-symbols in an arbitrary way, but ensuring that constant $\TOP$ is smallest
in the order; then, let $\msucc$ be a \emph{lexicographic path order}
\cite{baader98term} over such $\funsucc$. It is well known that such $\msucc$
is a simplification order, and that it satisfies the following properties for
each $\F$-term $t$ with predecessor $t'$ (if one exists), all function symbols
${f,g \in \F}$, and each $\P$-term $A$:
\begin{itemize}
    \item ${f(t) \msucc t \msucc t'}$,
    \item ${f \funsucc g}$ implies ${f(t) \msucc g(t)}$, and
    \item ${A \msucc \TOP}$.
\end{itemize}
Thus, \cref{cond:term:order:f-term:1,cond:term:order:f-term:2} of
\cref{def:context:term:order} and the manner in which context orders are
grounded in \cref{sec:proofs:completeness:fragment:order} clearly ensure that,
for each $\F$-term ${t \in \dom{\VertexMapSymbol}}$ and for all terms $s_1$ and
$s_2$ from the $\F$-neighbourhood of $t$ with ${s_1 \gsucc{t} s_2}$, we have
${s_1 \msucc s_2}$.

\medskip

We next show that $\EntireModel$ is terminating by arguing that each rule in
$\EntireModel$ is embedded in $\msucc$. To this end, consider an arbitrary rule
${\Rule{l}{\EntireModel}{r}}$. Clearly, a term ${t \in \dom{\EntireModel}}$
exists such that ${\Rule{l}{\Model{t}}{r}}$. This rule is obtained from a head
${l \equals r}$ of a clause in $\GroundS{t}$, and \cref{cond:rule:order} of the
definition of $\Model{t}$ ensures that ${l \gsucc{t} r}$. Moreover, ${l \equals
r}$ is obtained by grounding a context literal with $\Grounding{t}$, so we have
the following possible forms of ${l \equals r}$.
\begin{itemize}
    \item Terms $l$ and $r$ are both from the $\F$-neighbourhood of $t$. Then,
    ${l \gsucc{t} r}$ implies ${l \msucc r}$.

    \item We have ${l \equals r = A \equals \TOP}$ for $A$ a $\P$-term. Then,
    ${A \msucc \TOP}$ since $\TOP$ is smallest in $\funsucc$.
\end{itemize}

\medskip

We next show that $\EntireModel$ is left-reduced. For the sake of a
contradiction, assume that a rule ${\Rule{l}{\EntireModel}{r}}$ exists such
that $l$ is reducible by ${\EntireModel' = \EntireModel \setminus \setsingle{ l
\RuleSymbol r }}$. Let $p$ be the `deepest' position at which some rule in
$\EntireModel'$ reduces $l$ (i.e., no rule in $\EntireModel'$ reduces $l$ at
position below $p$), and let ${\Rule{l'}{\EntireModel'}{r'}}$ be the rule that
reduces $l$ at position $p$; thus, ${l' = \atPos{l}{p}}$. By the definition of
$\EntireModel$, we have ${\Rule{l'}{\Model{t}}{r'}}$ where $t$ can be as
follows.
\begin{itemize}
    \item Term $t$ is handled in \cref{def:model:step:normal}. Then ${l'
    \RuleSymbol r'}$ is generated by an equality ${l' \equals r'}$ in the head
    of a generative clause, and so $l'$ is of the form $f(t)$. Thus, $f(t)$ is
    reducible by $\Model{t}$, which contradicts \cref{def:model:step} from the
    construction of $\EntireModel$.

    \item Term $t$ is handled in \cref{def:model:step:abnormal}. Then ${l' =
    t}$; moreover, $\EntireModel'$ does not contain $t$ by the construction of
    $\EntireModel$, which contradicts the assumption that
    ${\Rule{l'}{\EntireModel'}{r'}}$. \qedhere
\end{itemize}
\end{proof}

\begin{lemma}\label{lemma:compatibility}
    For each term $t$, each ${f \in \F}$, and each atom ${A \in \GroundSu{t}
    \cup \GroundPr{f(t)}}$ such that ${A \in \Star{\EntireModel}}$ and all
    $\F$-terms in $A$ are irreducible by $\EntireModel$, we have ${A \in
    \Star{\Model{t}}}$.
\end{lemma}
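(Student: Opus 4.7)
The plan is to isolate a single rewrite rule that justifies ${A \equals \TOP}$, then use the unfolding of $\D$ to locate where that rule lives and verify compatibility with $\Model{t}$. By \cref{lemma:church-rosser}, $\EntireModel$ is Church-Rosser, and since $\TOP$ is minimal in the underlying ordering, both $A$ and $\TOP$ have $\TOP$ as their normal form in $\EntireModel$. Because every $\F$-subterm of $A$ is irreducible by $\EntireModel$, no rewrite can fire at an interior position, so the reduction $\RwRelRefTrans{A}{\EntireModel}{\TOP}$ must consist of exactly one step applied at the root; hence $\EntireModel$ contains the rule ${A \RuleSymbol \TOP}$. This rule lies in some $\Model{t_0}$, and since it must be generated by a clause of $\GroundS{t_0}$, the atom $A$ necessarily appears in the $\P$-neighbourhood of $t_0$.

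The next step is a small case analysis on the location of $t_0$. The terms occurring in $A$ are drawn from the $\F$-neighbourhood of $t$---that is, $t$, its predecessor (if any), and its successors $f(t)$---so comparing $\F$-neighbourhoods forces $t_0$ itself to be either $t$, the predecessor of $t$, or some successor $g(t)$ with ${g \in \F}$. The case ${t_0 = t}$ gives the conclusion immediately. When ${t_0 = g(t)}$ is a successor of $t$, the construction step \cref{def:model:step:normal} assigns ${\QueryRHS{t_0} = \GroundPr{t_0} \setminus \Star{\Model{t}}}$ via \cref{eq:context:query:RHS}; I would then verify, by inspecting the shapes of atoms in the $\P$-neighbourhood of $t_0$ whose terms lie in $\{t, g(t)\}$, that a hypothesis-satisfying $A$ generated at $t_0$ is in $\GroundPr{t_0}$, and apply \cref{lemma:query:not:true} to conclude ${A \notin \QueryRHS{t_0}}$, whence ${A \in \Star{\Model{t}}}$. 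When instead ${t = g(t_0)}$ so that $t_0$ is the predecessor of $t$, the dual equation \cref{eq:context:query:LHS} gives ${\QueryLHS{t} = \Star{\Model{t_0}} \cap \GroundSu{t}}$; an analogous shape-analysis places $A$ in $\GroundSu{t}$, and the first half of \cref{lemma:query:not:true} (which ensures $\mdl{\Model{t}} \QueryLHS{t}$) yields ${A \in \Star{\Model{t}}}$.

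The main obstacle will be the shape-analysis in the last two cases. For each form a context literal involving $g(x)$ can take in the head of a generative clause---namely $B(g(x))$, $S(x,g(x))$, $S(g(x),x)$, or an equation among such $\F$-terms---I would verify that the grounded version of $A$ belongs to $\GroundSu{t}$ or $\GroundPr{t_0}$ as required by the current direction. This relies on the inclusiveness of \cref{def:triggers}: the sets $\SUtriggers$ and $\PRtriggers$ are designed to track precisely those atom shapes that the \ruleword{Succ} and \ruleword{Pred} rules transmit between adjacent contexts, and the care taken in their definition---notably the symmetric closure under $x \leftrightarrow y$ together with the inclusion of $B(y)$ for every unary $B$ occurring in $\Onto$---is what makes each instance of the verification close.
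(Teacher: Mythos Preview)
Your proposal is correct and follows essentially the same approach as the paper: locate the rule ${A \RuleSymbol \TOP}$ in some $\Model{t_0}$, then use the definitions of $\QueryLHS{t}$ and $\QueryRHS{t_0}$ together with \cref{lemma:query:not:true} to transfer the atom to $\Star{\Model{t}}$. The only difference is organisational---the paper splits on whether ${A \in \GroundSu{t}}$ or ${A \in \GroundPr{f(t)}}$ and then argues about the possible generating context, whereas you split directly on the position of $t_0$ relative to $t$; the latter is arguably cleaner since it handles the case ${t_0 = g(t)}$ with ${g \neq f}$ explicitly, which the paper's ``by the form of the generative clauses'' glosses over.
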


\begin{proof}
Let $t$ be a term, let ${f \in \F}$ be a function symbol, and let ${A \in
\GroundSu{t} \cup \GroundPr{f(t)}}$ be an atom such that all $\F$-terms in $A$
are irreducible by $\EntireModel$; the latter ensures
${\Rule{A}{\EntireModel}{\TOP}}$. We next consider the possible forms of $A$.

Assume ${A \in \GroundSu{t}}$. By the definition of $\GroundSu{t}$ in
\cref{eq:ground:SU} and the fact that $\SUtriggers$ contains only atoms of the
form $B(x)$, $S(x,y)$, and $S(y,x)$, atom $A$ can be of the form $B(t)$,
$S(t,t')$, or $S(t',t)$, for $t'$ the predecessor of $t$ (if one exists). By
the form of the generative clauses, we clearly have ${A \in \Star{\Model{t}}}$
or ${A \in \Star{\Model{t'}}}$. Now assume ${A \in \Star{\Model{t'}}}$. Due to
${A \in \GroundSu{t}}$ and the definition of $\QueryLHS{t}$ in
\cref{eq:context:query:LHS}, we have ${A \in \QueryLHS{t}}$.
\cref{lemma:query:not:true} ensures that ${\nmdl{\Model{t}} \QueryLHS{t}
\rightarrow \QueryRHS{t}}$. But then, we have ${A \in \Star{\Model{t}}}$, as
required.

Assume ${A \in \GroundPr{f(t)}}$. By the definition of $\GroundPr{f(t)}$ in
\cref{eq:ground:PR} and the fact that $\PRtriggers$ contains only atoms of the
form $B(y)$, $S(y,x)$, and $S(x,y)$, atom $A$ can be of the form $B(t)$,
$S(t,f(t))$, or $S(f(t),t)$. By the form of the generative clauses, we clearly
have ${A \in \Star{\Model{t}}}$ or ${A \in \Star{\Model{f(t)}}}$. Assume for
the sake of a contradiction that ${A \not\in \Star{\Model{t}}}$, but ${A \in
\Star{\Model{f(t)}}}$. Due to ${A \in \GroundPr{f(t)}}$ and the definition of
$\QueryRHS{f(t)}$ in \cref{eq:context:query:RHS}, we have ${A \in
\QueryRHS{f(t)}}$; due to \cref{lemma:query:not:true}, we have
${\nmdl{\Model{f(t)}} \QueryLHS{f(t)} \rightarrow \QueryRHS{f(t)}}$; therefore,
we have ${A \not\in \Star{\Model{f(t)}}}$, which is a contradiction.
\end{proof}

\begin{lemma}\label{lemma:DL:literals:preserved}
    Let $s_1$ and $s_2$ be DL-terms, and let $\tau$ be a substitution
    irreducible by $\EntireModel$ such that $s_1\tau$ and $s_2\tau$ are ground
    and each $\tau(z_i)$ (if defined) is in the $\F$-neighbourhood of
    $\tau(x)$. Then, for ${{\eqorineq} \in \setsingle{{\equals}, \; {\nequals}}}$,
    if ${\mdl{\Model{\tau(x)}} s_1\tau \eqorineq s_2\tau}$, then
    ${\mdl{\EntireModel} s_1\tau \eqorineq s_2\tau}$.
\end{lemma}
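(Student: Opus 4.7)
The plan is to split on $\eqorineq \in \setsingle{\equals, \nequals}$. The case $\eqorineq = \equals$ is immediate: since every rule of $\Model{\tau(x)}$ is a rule of $\EntireModel$, we have $\Star{\Model{\tau(x)}} \subseteq \Star{\EntireModel}$, so any equality that holds locally transfers globally.

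For $\eqorineq = \nequals$, write $t = \tau(x)$ and suppose $s_1\tau \equals s_2\tau \notin \Star{\Model{t}}$; the goal is to show $s_1\tau \equals s_2\tau \notin \Star{\EntireModel}$. Both rewrite systems are Church-Rosser (by \cref{lemma:church-rosser} for $\EntireModel$ and the analogous earlier lemma for $\Model{t}$), so normal forms are unique. Let $\hat{s}_i$ denote the $\Model{t}$-normal form of $s_i\tau$; by hypothesis $\hat{s}_1 \neq \hat{s}_2$. The core of the argument is to show that each $\hat{s}_i$ is also $\EntireModel$-irreducible, which makes $\hat{s}_i$ the $\EntireModel$-normal form of $s_i\tau$ too (since $\Model{t} \subseteq \EntireModel$ yields $s_i\tau \RwRelRefTransSymbol{\EntireModel} \hat{s}_i$), and Church-Rosser of $\EntireModel$ then forces $s_1\tau \equals s_2\tau \notin \Star{\EntireModel}$.

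To prove the $\EntireModel$-irreducibility of $\hat{s}_i$, observe first that because $s_i$ is a DL-term and $\tau(z_i)$ lies in the $\F$-neighbourhood of $t$, both $s_i\tau$ and (since rules of $\Model{t}$ only manipulate terms inside that neighbourhood) its $\Model{t}$-normal form $\hat{s}_i$ lie in the $\F$- or $\P$-neighbourhood of $t$. Suppose for contradiction that some rule $l \RuleSymbol r$ from $\Model{t^*}$ with $t^* \neq t$ reduces a subterm of $\hat{s}_i$; then $l$ sits in the neighbourhoods of both $t$ and $t^*$, which forces $t^*$ to be adjacent to $t$ (predecessor or successor). For any candidate $\F$-term rule of $\Model{t^*}$, \cref{cond:term:order:f-term:1} of \cref{def:context:term:order} makes $y$ minimal among context $\F$-terms, ruling out left-to-right orientations whose LHS grounds to a term shared with the $t$-side; similarly, the insistence in \cref{def:model:step} that $\VertexMap{f(t)}$ is introduced only when $f(t)$ is $\Model{t}$-irreducible disposes of the remaining successor $\F$-rules that could reduce any $\F$-subterm of $\hat{s}_i$. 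The only surviving case is that $l$ is a $\P$-atom lying in $\GroundSu{t} \cup \GroundPr{t^*}$; for such $l$, the rule itself gives $l \in \Star{\EntireModel}$, and all $\F$-subterms of $l$ are $\EntireModel$-irreducible (traceable to $\tau$'s irreducible range and to the adjacent term that is irreducible by the unfolding construction). \cref{lemma:compatibility} then yields $l \in \Star{\Model{t}}$, contradicting the $\Model{t}$-irreducibility of $\hat{s}_i$.

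The main obstacle I anticipate is this cross-fragment case analysis, particularly ensuring that every precondition of \cref{lemma:compatibility} — most delicately the $\EntireModel$-irreducibility of every $\F$-subterm of $l$ — is genuinely available in each subcase, and reconciling the orientation conditions with the generative-rule selection inside each model fragment.
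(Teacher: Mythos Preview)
Your overall strategy---split on $\eqorineq$, use $\Model{t}\subseteq\EntireModel$ for the equality case, and for the inequality case show that the $\Model{t}$-normal forms of $s_1\tau,s_2\tau$ are already $\EntireModel$-normal forms---is exactly what the paper does. The equality case is verbatim.

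Where you diverge from the paper is in the inequality case: you treat $s_1,s_2$ as arbitrary DL-terms and are therefore forced into the elaborate cross-fragment case analysis involving $\P$-atoms and \cref{lemma:compatibility}. The paper short-circuits all of this with one observation: by the shape of DL-\emph{literals}, an inequality $s_1\nequals s_2$ can only have $s_1,s_2$ of the form $f(x)$ or $z_i$ (the only DL-literals with ${\nequals}$ are $f(x)\nequals g(x)$, $f(x)\nequals z_i$, $z_i\nequals z_j$). Hence $s_1\tau,s_2\tau$, and their $\Model{t}$-normal forms, all lie in $\{g(t):g\in\F\}\cup\{t'\}$ where $t'$ is the predecessor of $t$. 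Now $t'$ is irreducible by $\EntireModel$ because it is a subterm of the irreducible $t=\tau(x)$; and any $g(t)$ can appear on the left of a rewrite rule only in $\Model{t}$, since left-hand sides of $\F$-rules in any fragment $\Model{s}$ are groundings of context terms of the form $h(x)$ and hence have shape $h(s)$. That is the whole argument: no $\P$-terms, no adjacency reasoning, no appeal to \cref{lemma:compatibility}.

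So your proposal is not wrong in spirit, but the ``main obstacle'' you flag is self-inflicted. Adding the sentence ``by the shape of DL-literals, in the inequality case $s_1,s_2\in\{f(x),z_i\}$'' collapses your third paragraph to two lines and removes the need for \cref{lemma:compatibility} entirely.
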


\begin{proof}
Let $s_1$ and $s_2$ and $\tau$ be as stated above, let ${t = \tau(x)}$, and let
$t'$ be the predecessor of $t$ (if one exists). Since $t$ is irreducible by
$\EntireModel$, rewrite system $\Model{t}$ has been defined in
\cref{sec:proofs:completeness:onto:unfolding}. We next consider the possible
forms of $\eqorineq$.
\begin{itemize}
    \item Assume ${{\eqorineq} = {\equals}}$. But then, ${\Model{t} \subseteq
    \EntireModel}$ and ${\mdl{\Model{t}} s_1\tau \equals s_2\tau}$ imply
    ${\mdl{\EntireModel} s_1\tau \equals s_2\tau}$.

    \item Assume ${{\eqorineq} = {\nequals}}$. Let $s_1'$ and $s_2'$ be the normal
    forms of $s_1\tau$ and $s_2\tau$, respectively, w.r.t.\ $\Model{t}$. Due to
    the shape of DL-literals, $s_1$ and $s_2$ can be of the form $f(x)$ or
    $z_i$; therefore, $s_1\tau$ and $s_2\tau$ are of the form $f(t)$ or $t'$.
    Term $t$ is irreducible by $\EntireModel$, and thus $t'$ is irreducible by
    $\EntireModel$ as well. Furthermore, due to the shape of context terms, the
    only rewrite system where $f(t)$ could occur on the left-hand side of a
    rewrite rule is $\Model{t}$. Consequently, $f(t)$ is irreducible by
    $\EntireModel$ as well. But then, $s_1'$ and $s_2'$ are the normal forms of
    $s_1\tau$ and $s_2\tau$, respectively, w.r.t.\ $\EntireModel$; thus,
    ${\mdl{\EntireModel} s_1' \nequals s_2'}$, and thus ${\mdl{\EntireModel}
    s_1\tau \nequals s_2\tau}$ holds, as required. \qedhere
\end{itemize}
\end{proof}

\subsubsection{The Completeness Claim}

\begin{lemma}\label{lemma:models:onto}
    For each DL-clause ${\Gamma \rightarrow \Delta \in \Onto}$, we have
    ${\mdl{\EntireModel} \Gamma \rightarrow \Delta}$.
\end{lemma}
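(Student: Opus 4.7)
The plan is to prove this lemma as a standard lifting argument, analogous to how one proves model existence in paramodulation-based completeness proofs: reduce semantics in $\Star{\EntireModel}$ to semantics in a single model fragment $\Star{\Model{t}}$, and then use the fact that no \ruleword{Hyper} inference applies to derive the needed consequence.

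Fix a DL-clause $\Gamma \rightarrow \Delta \in \Onto$ and a ground substitution $\tau$; without loss of generality I would assume $\tau$ is irreducible by $\EntireModel$, using that $\EntireModel$ is Church-Rosser (\cref{lemma:church-rosser}) so that we may replace $\tau$ by the substitution mapping each variable to its normal form without changing truth in $\Star{\EntireModel}$. Suppose, for contradiction, that ${\mdl{\EntireModel} \Gamma\tau}$ but ${\nmdl{\EntireModel} \Delta\tau}$. Let ${t = \tau(x)}$, let ${v = \VertexMap{t}}$ (I need to argue $v$ is defined, which holds because $t$ is irreducible, so the unfolding in \cref{sec:proofs:completeness:onto:unfolding} reached $t$ via \cref{def:model:step:normal}), and let $t'$ denote the predecessor of $t$ if it exists.

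The first key step is to show that every $z_i$ occurring in $\Gamma$ satisfies ${\tau(z_i) \in \setsingle{t'} \cup \setbuilder{f(t)}{f \in \F}}$. For each body atom of the form $S(x,z_i)$ or $S(z_i,x)$, the grounded atom must be reduced to $\TOP$ by some single rule of $\EntireModel$, since its arguments are irreducible; such a rule comes from a generative clause in some $\Model{s}$, whose head literal is a ground instance of a context $\P$-term, and by \cref{def:context-terms} the two arguments of any such atom are $s$ together with a term from the $\F$-neighbourhood of $s$. Matching $\tau(x)=t$ with one of the two arguments forces $\tau(z_i)$ to lie in the $\F$-neighbourhood of $t$, which is exactly what is needed. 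Define $\sigma$ with ${\sigma(x)=x}$ and ${\sigma(z_i) = y}$ if ${\tau(z_i)=t'}$, or ${\sigma(z_i) = f(x)}$ if ${\tau(z_i) = f(t)}$; then ${A\sigma\Grounding{t} = A\tau}$ for each body atom $A$, and each $A\sigma$ is a context atom.

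Next, for each body atom ${A_i \in \Gamma}$, I would observe that $A_i\sigma \in \SUtriggers$ by \cref{def:triggers}, hence ${A_i\tau \in \GroundSu{t} \cup \GroundPr{f(t)}}$ (the \cref{eq:ground:PR} case covers atoms like $S(t,f(t))$ via $\PRtriggers$), so \cref{lemma:compatibility} yields ${A_i\tau \in \Star{\Model{t}}}$. Each such atom is therefore generated in $\Model{t}$ by some clause ${\Gamma_i\Grounding{t} \rightarrow \Delta_i\Grounding{t} \vee A_i\sigma\Grounding{t} \in \GroundS{t}}$ coming from a context clause ${\Gamma_i \rightarrow \Delta_i \vee A_i\sigma \in \S{v}}$ with ${\Delta_i \not\csucceq{v} A_i\sigma}$. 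Since the \ruleword{Hyper} rule is not applicable (by assumption of \cref{theorem:completeness}), the resolvent ${\bigwedge \Gamma_i \rightarrow \Delta\sigma \vee \bigvee \Delta_i}$ is contained in $\S{v}$ up to redundancy; since each $\Gamma_i\Grounding{t}$ lies in $\QueryLHS{t}$ (as the premise clauses lie in $\GroundS{t}$), \cref{lemma:lift-redundancy} lifts this into $\GroundS{t}$, and then \cref{lemma:strengthening:true} gives ${\mdl{\Model{t}} \Delta\sigma\Grounding{t} \vee \bigvee \Delta_i\Grounding{t}}$. Combining this with ${\nmdl{\Model{t}} \Delta_i\Grounding{t}}$ from \cref{lemma:generative:clauses} (each premise is generative), we conclude ${\mdl{\Model{t}} \Delta\sigma\Grounding{t}}$, i.e., some literal $L\tau$ of $\Delta\tau$ holds in $\Star{\Model{t}}$. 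Finally, \cref{lemma:DL:literals:preserved} applies because $\tau$ is irreducible and each $\tau(z_i)$ lies in the $\F$-neighbourhood of $t$, giving ${\mdl{\EntireModel} L\tau}$ and hence ${\mdl{\EntireModel} \Delta\tau}$, the desired contradiction.

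The main obstacle is the first step, namely establishing that every $\tau(z_i)$ sits in the $\F$-neighbourhood of $t$; this is where irreducibility of $\tau$ and the carefully restricted shape of context $\P$-terms (\cref{def:context-terms}) must be used in tandem, and getting this right is what makes the lifting to the Hyper rule go through cleanly. Once that local-shape property is secured, the rest is essentially a mechanical application of the lemmas already established in \cref{sec:proofs:completeness:fragment} together with \cref{lemma:compatibility} and \cref{lemma:DL:literals:preserved}.
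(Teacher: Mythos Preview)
Your proposal is correct and follows essentially the same route as the paper's proof: normalise $\tau$ to be irreducible, use the shape of generative rules to place each $\tau(z_i)$ in the $\F$-neighbourhood of $t=\tau(x)$, apply \cref{lemma:compatibility} to get the body atoms into $\Star{\Model{t}}$, use non-applicability of \ruleword{Hyper} together with \cref{lemma:lift-redundancy}, \cref{lemma:strengthening:true}, and \cref{lemma:generative:clauses} to obtain ${\mdl{\Model{t}}\Delta\tau}$, and finish with \cref{lemma:DL:literals:preserved}. One minor slip: the intermediate claim ``$A_i\sigma \in \SUtriggers$'' is not literally true when $\sigma(z_i)=f(x)$---that case is handled via $\PRtriggers$ and $\GroundPr{f(t)}$, exactly as your own parenthetical already indicates---so this is only a wording issue and does not affect the argument.
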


\begin{proof}
Consider an arbitrary DL-clause ${\Gamma \rightarrow \Delta \in \Onto}$ of the
following form:
\begin{align}
    \textstyle \bigwedge_{i=1}^n A_i \rightarrow \Delta \label{eq:hyper:main:nonground}
\end{align}
Let $\tau'$ be an arbitrary substitution such that ${\Gamma\tau' \rightarrow
\Delta\tau'}$ is ground, and let $\tau$ be the substitution obtained from
$\tau'$ by replacing each ground term with its normal form w.r.t.\
$\EntireModel$. Since $\Star{\EntireModel}$ is a congruence, we have
${\mdl{\EntireModel} \Gamma\tau' \rightarrow \Delta\tau'}$ \iff
${\mdl{\EntireModel} \Gamma\tau \rightarrow \Delta\tau}$. We next assume that
${\mdl{\EntireModel} \Gamma\tau}$, and we show that ${\mdl{\EntireModel}
\Delta\tau}$ holds as well.

Consider an arbitrary atom ${A_i \in \Gamma}$. By the definition of DL-clauses,
$A_i$ is of the form $B(x)$, $S(x,z_j)$, or $S(z_j,x)$. Substitution $\tau$ is
irreducible by $\EntireModel$, and so all $\F$-terms in $A_i\tau$ are
irreducible by $\EntireModel$; but then, ${A_i\tau \in \Star{\EntireModel}}$
clearly implies ${\Rule{A_i\tau}{\EntireModel}{\TOP}}$. Each such rule is
obtained from a generative clause so $A_i\tau$ is of the form $B(t)$,
$S(t,f(t))$, $S(f(t),t)$, $S(t,t')$, or $S(t',t)$, where ${t = \tau(x)}$ and
$t'$ is the predecessor of $t$ (if it exists). We next prove that ${A_i\tau \in
\GroundSu{t} \cup \GroundPr{f(t)}}$ holds by considering the possible forms of
$A_i$.
\begin{itemize}
    \item ${A_i = B(x)}$, so ${A_i\tau = B(t)}$. Then, we have ${B(x) \in
    \SUtriggers}$, which implies that ${B(t) \in \GroundSu{t}}$ holds.

    \item ${A_i = S(x,z_j)}$, so ${A_i\tau}$ is of the form $S(t,t')$ or
    $S(t,f(t))$. Then, we have ${S(x,y) \in \SUtriggers}$, which implies that
    ${S(t,t') \in \GroundSu{t}}$ holds; moreover, we have ${S(y,x) \in
    \PRtriggers}$, which implies that ${S(t,f(t)) \in \GroundPr{f(t)}}$ holds.

    \item ${A_i = S(z_j,x)}$, so ${A_i\tau}$ is of the form $S(t',t)$ or
    $S(f(t),t)$. Then, we have ${S(y,x) \in \SUtriggers}$, which implies that
    ${S(t',t) \in \GroundSu{t}}$ holds; moreover, we have ${S(x,y) \in
    \PRtriggers}$, which implies that ${S(f(t),t) \in \GroundPr{f(t)}}$ holds.
\end{itemize}
\cref{lemma:compatibility} then implies ${A_i\tau \in \Model{t}}$, and so
$\GroundS{t}$ contains a generative clause of the form
\cref{eq:hyper:generative:ground}. Now let ${v = \VertexMap{t}}$; by the definition
of $\GroundS{t}$, set $\S{v}$ contains a clause of the form
\cref{eq:hyper:generative:nonground}.
\begin{align}
    \Gamma_i \rightarrow \Delta_i \vee A_i      & \text{ with } A_i  \gsucc{t} \Delta_i \text{ and } \Gamma_i \subseteq \QueryLHS{t} \label{eq:hyper:generative:ground} \\
    \Gamma_i' \rightarrow \Delta_i' \vee A_i'   & \text{ with } \Delta_i'  \not\nsucceq{v} A_i' \text{ and } \Gamma_i'\Grounding{t} = \Gamma_i, \; \Delta_i'\Grounding{t} = \Delta_i, \text{ and } A_i'\Grounding{t} = A_i \label{eq:hyper:generative:nonground}
\end{align}
The \ruleword{Hyper} rule is not applicable to
\cref{eq:hyper:main:nonground,eq:hyper:generative:nonground}, and therefore
\cref{eq:hyper:conclusion:nonground} holds, where $\sigma$ is the substitution
obtained from $\tau$ by replacing each occurrence of $t$ (possibly nested in
another term) with $x$. Finally, \cref{lemma:lift-redundancy} ensures that
\cref{eq:hyper:conclusion:ground} holds as well.
\begin{align}
    \bigwedge_{i=1}^n \Gamma_i' \rightarrow \Delta\sigma \vee \bigvee_{i=1}^n \Delta_i' & \strin \S{v} \label{eq:hyper:conclusion:nonground} \\
    \bigwedge_{i=1}^n \Gamma_i \rightarrow \Delta\tau \vee \bigvee_{i=1}^n \Delta_i     & \strin \GroundS{t} \label{eq:hyper:conclusion:ground}
\end{align}
Now \cref{eq:hyper:conclusion:ground} and \cref{lemma:strengthening:true} imply
${\mdl{\Model{t}} \Delta\tau \vee \bigvee_{i=1}^n \Delta_i}$, but
\cref{lemma:generative:clauses} implies ${\nmdl{\Model{t}} \Delta_i}$;
therefore, we have ${\mdl{\Model{t}} \Delta\tau}$. Finally,
\cref{lemma:DL:literals:preserved} ensures ${\mdl{\EntireModel} \Delta\tau}$,
as required.
\end{proof}

\begin{lemma}\label{lemma:do:not:model:query}
    ${\nmdl{\EntireModel} \InputQueryLHS \rightarrow \InputQueryRHS}$.
\end{lemma}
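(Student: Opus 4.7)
The plan is to read off the claim from the root model fragment $\Model{c}$ by invoking \cref{lemma:query:not:true}, and then lift the conclusion to $\EntireModel$. By the base case in \cref{def:model:base} we have $\QueryLHS{c} = \InputQueryLHS\Grounding{c}$ and $\QueryRHS{c} = \InputQueryRHS\Grounding{c}$, and \cref{lemma:pairwise:compatible} guarantees that \cref{cond:local:query,cond:local:RHS,cond:local:LHS} all hold at $c$; hence \cref{lemma:query:not:true} yields $\mdl{\Model{c}} \InputQueryLHS\Grounding{c}$ together with $\nmdl{\Model{c}} \InputQueryRHS\Grounding{c}$. Since $\Model{c} \subseteq \EntireModel$ we have $\Star{\Model{c}} \subseteq \Star{\EntireModel}$, which immediately gives $\mdl{\EntireModel} \InputQueryLHS\Grounding{c}$.

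For the other direction I would assume for contradiction that some atom in $\InputQueryRHS\Grounding{c}$ lies in $\Star{\EntireModel}$. Because every head literal of a query clause has the form $B(x)$, this atom is some $B(c)$ with $B(c) \in \QueryRHS{c}$. The first technical step is to show that the constant $c$ is irreducible by $\EntireModel$: any rule $l \RuleSymbol r$ in some $\Model{t}$ is produced from a grounded context literal whose left-hand side is maximal in $\gsucc{t}$, and \cref{cond:term:order:f-term:1} of \cref{def:context:term:order} together with the permitted shapes of context literals forces this maximum to be either a $\P$-term or an $\F$-term of the form $f(t)$, never a bare constant. Therefore $B(c) \in \Star{\EntireModel}$ forces an actual rule $B(c) \RuleSymbol \TOP$ in some $\Model{t}$, and every $\F$-subterm of $B(c)$ is $\EntireModel$-irreducible.

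Such a rule is generated by a clause in $\S{\VertexMap{t}}$ whose head contains a context literal $B(s) \equals \TOP$ with $s\Grounding{t} = c$. Inspecting the possible shapes of context $\P$-terms leaves $s \in \setsingle{x,y}$ (the shape $B(f(x))$ would require $f(t) = c$, which is impossible), so either $t = c$ or $t = f(c)$ for some ${f \in \F}$. The case $t = c$ directly yields $B(c) \in \Star{\Model{c}}$, contradicting \cref{lemma:query:not:true}. In the case $t = f(c)$, the generating clause has $B(y)$ in its head; a short induction over the rules of \cref{table:rules} shows that a head occurrence of $B(y)$ can be introduced only through the \ruleword{Hyper} rule applied to a DL-clause whose head already contains $B(z_i)$, so $B$ must occur in $\Onto$ and therefore $B(y) \in \PRtriggers$. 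Hence $B(c) \in \GroundPr{f(c)}$, and \cref{lemma:compatibility} with its $t$ set to $c$ and its function symbol to $f$ delivers $B(c) \in \Star{\Model{c}}$, again contradicting \cref{lemma:query:not:true}.

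The main obstacle is the $t = f(c)$ case: one must exclude spurious occurrences of $B(y)$ in clause heads by tracking the provenance of predicates through the calculus. One verifies that \ruleword{Core} and \ruleword{Succ} introduce only atoms from $\SUtriggers$, that \ruleword{Pred} transports only atoms of $\PRtriggers$ (rewritten as $B(x)$, $S(x,f(x))$, or $S(f(x),x)$ under $\sigma = \subst{x \mapsto f(x), y \mapsto x}$), and that \ruleword{Eq}, \ruleword{Ineq}, \ruleword{Factor}, and \ruleword{Elim} produce no new predicates; this pins down \ruleword{Hyper} applied to a DL-clause with $B(z_i)$ in its head as the only possible source of a head $B(y)$, giving $B \in \Onto$ as needed.
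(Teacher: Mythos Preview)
Your proof is correct and follows the same high-level strategy as the paper's (read off $\nmdl{\Model{c}} \QueryLHS{c} \rightarrow \QueryRHS{c}$ from \cref{lemma:query:not:true}, then lift each direction to $\EntireModel$), but the route you take for the $\QueryRHS{c}$ direction is considerably longer than necessary.

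The paper dispatches that direction in one line: for each $B(x) \in \InputQueryRHS$, \cref{def:triggers} places $B(y) \in \PRtriggers$ via the clause $\setbuilder{B(y)}{B \text{ occurs in } \Onto}$, hence $B(c) \in \GroundPr{f(c)}$ for every $f$, and the contrapositive of \cref{lemma:compatibility} (with $t = c$) gives $B(c) \not\in \Star{\EntireModel}$ directly. Your case split on the location of the rule $B(c) \RuleSymbol \TOP$ and the subsequent induction over the calculus rules are effectively re-proving what \cref{lemma:compatibility} already packages, together with a justification that $B$ occurs in $\Onto$---a point the paper leaves implicit. Your explicit verification that $c$ is irreducible by $\EntireModel$ is a genuine detail the paper skips but needs for \cref{lemma:compatibility}, so that part is worth keeping.

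One small imprecision in your induction: the claim that \ruleword{Hyper} is ``the only possible source of a head $B(y)$'' is not literally true, since \ruleword{Eq} can paramodulate $f(x) \equals y$ into $B(f(x))$ to yield $B(y)$. Your own observation that \ruleword{Eq} ``produces no new predicates'' is the right fix: the predicate symbol $B$ still traces back (through possibly several \ruleword{Eq} steps) to an atom introduced by \ruleword{Hyper}, \ruleword{Core}, \ruleword{Succ}, or \ruleword{Pred}, all of which draw their predicates from $\Onto$. So the conclusion $B \in \Onto$ survives, but the argument should be phrased as tracking the predicate symbol rather than the specific atom $B(y)$.
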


\begin{proof}
The claim clearly follows from ${\nmdl{\EntireModel} \QueryLHS{c} \rightarrow
\QueryRHS{c}}$. Note that \cref{lemma:query:not:true} ensures
${\nmdl{\Model{c}} \QueryLHS{c} \rightarrow \QueryRHS{c}}$; thus,
${\mdl{\Model{c}} \QueryLHS{c}}$ and ${\nmdl{\Model{c}} \QueryRHS{c}}$. The
former observation and \cref{lemma:DL:literals:preserved} ensure that
${\mdl{\EntireModel} \QueryLHS{c}}$ holds. Moreover, for each atom ${B(x) \in
\InputQueryRHS}$, \cref{def:triggers} ensures ${B(y) \in \PRtriggers}$; thus,
for each ${f \in \F}$, we have ${B(c) \in \GroundPr{f(c)}}$, and so the
contrapositive of \cref{lemma:compatibility} ensures ${\nmdl{\EntireModel}
B(c)}$. Thus, ${\nmdl{\EntireModel} \QueryRHS{c}}$ holds, as required.
\end{proof}

\section{Proof of \cref{prop:complexity}}

\propcomplexity*

\begin{proof}
The number $\wp$ of context clauses that can be generated using the symbols in
$\Onto$ is at most exponential in the size of $\Onto$, and the number $m$ of
clauses participating in each inference is linear in the size of $\Onto$.
Hence, with $k$ contexts, the number of inferences is bounded by ${(k \cdot
\wp)^m}$; if $k$ is at most exponential in the size of $\Onto$, the number of
inferences is exponential as well. Thus, if at most exponentially many contexts
are introduced, our algorithm runs in exponential time.
\end{proof}

\section{Proof of \cref{prop:worst-case-el}}

\propelh*

\begin{proof}
Consider an \ELH ontology that is transformed into a set $\Onto$ of DL-clauses
as specified in \cref{sec:preliminaries}, and consider a query of the form
${B_1(x) \rightarrow B_2(x)}$. Due to the form of the query, the core of $q$ is
initialised to $B_1(x)$.

We first consider applying algorithm \ref{alg:initialize-D}--\ref{alg:read} to
$\Onto$ with the cautious strategy and the eager application of the
\ruleword{Hyper} rule. By induction on the application of the rules from
\cref{table:rules}, we next show that each context clause derived by the rules
is of the form \cref{eq:worst-case-el:1}--\cref{eq:worst-case-el:5} and that
the core of each context is of the form $B(x)$.
\begin{align}
    \top        & \rightarrow B(x) \label{eq:worst-case-el:1} \\
    \top        & \rightarrow S(x,f(x)) \label{eq:worst-case-el:2} \\
    \top        & \rightarrow B(f(x)) \label{eq:worst-case-el:3} \\
    S(y,x)      & \rightarrow B(y) \label{eq:worst-case-el:4} \\
    S_1(y,x)    & \rightarrow S_2(y,x) \label{eq:worst-case-el:5}
\end{align}
In particular, in step \ref{alg:apply-rules} we can perform the following
inferences, with the specified correspondence to the completion rules CR1--CR4
and CR10 by \citeA{babl05}.
\begin{itemize}
    \item The core of each context is of the form $B(x)$, so the
    \ruleword{Core} rule introduces a clause of the form form
    \eqref{eq:worst-case-el:1}. This corresponds to way in which \citeA{babl05}
    initialise their mappings.
    
    \item The \ruleword{Hyper} rule can be applied to a DL-clause of type DL1.
    All other clauses participating in the inference are of the form
    \eqref{eq:worst-case-el:1}, so the result is of the form
    \eqref{eq:worst-case-el:1}. Such an inference corresponds to the completion
    rules CR1 or CR2.

    \item The \ruleword{Hyper} rule can be applied to a DL-clause of type DL2.
    The other clause participating in the inference is of the form
    \eqref{eq:worst-case-el:1}, so the result is of the form
    \eqref{eq:worst-case-el:2} or \eqref{eq:worst-case-el:3}. Moreover,
    function symbol $f$ occurs in $\Onto$ in exactly one pair of clauses DL2,
    and the \ruleword{Hyper} rule is applied eagerly; thus, whenever $f$ occurs
    in a context in a clause of the form \eqref{eq:worst-case-el:2}, it also
    occurs in a clause of the form \eqref{eq:worst-case-el:3}. Now the
    \ruleword{Succ} rule can be applied to the function symbol $f$, in which
    case the cautious strategy thus returns a context whose core is of the form
    $B(x)$. All of these inferences correspond to the completion rule CR3.

    \item The \ruleword{Hyper} rule can be applied to a DL-clause of type DL3.
    The two other clauses participating in the inference are of the form
    \eqref{eq:worst-case-el:4} and \eqref{eq:worst-case-el:1}, so the result is
    of the form \eqref{eq:worst-case-el:4}; the \ruleword{Pred} rule can then
    be applied to the latter clause, producing a clause of the form
    \eqref{eq:worst-case-el:1}. Such a pair of inferences corresponds to the
    completion rule CR4.
    
    \item The \ruleword{Hyper} rule can be applied to a DL-clause of type DL5.
    The other clause participating in the inference is of the form
    \eqref{eq:worst-case-el:4}, so the result is of the form
    \eqref{eq:worst-case-el:4} as well; the \ruleword{Pred} rule can then be
    applied to the latter clause, producing a clause of the form
    \eqref{eq:worst-case-el:2}. Such a pair of inferences corresponds to the
    completion rule CR10.
\end{itemize}
One can show in an analogous way that each inference of the calculus by
\citeA{babl05} corresponds to one or more inferences of our calculus.
Furthermore, it is clear that our algorithm runs in polynomial time.

\bigskip

We next consider applying algorithm \ref{alg:initialize-D}--\ref{alg:read} to
$\Onto$ with the eager strategy. One can show that the core of each
context is of the form $A(x)$, $R(y,x)$, or ${A(x) \wedge R(y,x)}$, and that
context can contain clauses of the form
\cref{eq:worst-case-el:1}--\cref{eq:worst-case-el:7}.
\begin{align}
    \top    & \rightarrow S_2(y,x) \label{eq:worst-case-el:6} \\
    \top    & \rightarrow B(y) \label{eq:worst-case-el:7}
\end{align}
The proof is analogous to the case of the cautions strategy (without
correspondence to the completion rules) so we omit the details for the sake of
brevity. The only minor difference is that, if an application of the the
\ruleword{Pred} to contexts $u$ and $v$ introduces a clause of the form
\eqref{eq:worst-case-el:2} in $u$, then the \ruleword{Succ} rule does not
become applicable to $u$ since the precondition of the \ruleword{Succ} rule is
still satisfied by $v$. Thus, the \ruleword{Succ} rule never introduces
contexts whose cores contain conjunctions of binary atoms. Thus, if $\Onto$
contains $k_1$ unary and $k_2$ binary predicates, the number of contexts is
bounded by $O(k_1 \cdot k_2)$, and each context can contain at most ${k_1 + k_2
+ k_1 \cdot k_2}$ clauses. All rules can be applied in polynomial time, so the
algorithm runs in polynomial time.
\end{proof}

\end{document}